\newcommand{\zy}[1]{{#1}}
\newcommand{\zyw}[1]{{#1}}
\begin{document}

\title{FedST: Secure Federated Shapelet Transformation for Time Series Classification
}


\author{Zhiyu Liang         \and
        Hongzhi Wang 
}


\institute{Zhiyu Liang.  \at
              Harbin Institute of Technology, Harbin, China \\
              \email{zyliang@hit.edu.cn}           
           \and
           Hongzhi Wang. \at
           Harbin Institute of Technology, Harbin, China \\
           \email{wangzh@hit.edu.cn}   
}

\date{Received: date / Accepted: date}

\maketitle

\begin{abstract}
\zy{This paper explores how to build a shapelet-based time series classification (TSC) model in the federated learning (FL) scenario, that is, using more data from multiple owners without actually sharing the data. We propose FedST, a novel federated TSC framework extended from a centralized shapelet transformation method. We recognize the federated shapelet search step as the kernel of FedST. Thus, we design a basic protocol for the FedST kernel that we prove to be secure and accurate. However, we identify that the basic protocol suffers from efficiency bottlenecks and the centralized acceleration techniques lose their efficacy due to the security issues. To speed up the federated protocol with security guarantee, we propose several optimizations tailored for the FL setting. Our theoretical analysis shows that the proposed methods are secure and more efficient. We conduct extensive experiments using both synthetic and real-world datasets. Empirical results show that our FedST solution is effective in terms of TSC accuracy, and the proposed optimizations can achieve three orders of magnitude of speedup.}
\keywords{Time series classification \and Federated Learning \and Time series features \and Time series shapelets}
\end{abstract}

\section{Introduction}

\textit{Time series classification} (TSC) aims to predict the class label for given time series samples. It is one of the most important \zyw{problems} for data analytics, with applications in various scenarios~\cite{susto2018time,dheepadharshani2019multivariate,ramirez2019computational}. 

Despite the impressive performance existing TSC algorithms have been achieving~\cite{bagnall16bakeoff,ismail2019deep,abanda2019review,ruiz2021great,middlehurst2021hive,tang2021omni,dempster2021minirocket,tan2022multirocket}, they usually make an ideal assumption that the user has free access to enough labeled data. However, it is quite difficult to collect and label the time series for real-world applications.  

For instance, small manufacturing businesses monitor their production lines using sensors to analyze the working \zyw{conditions}. Since the data sequences related to specific conditions, e.g., a potential failure of an instrument, are usually rare pieces located in unknown regions of the whole monitoring time series, the users have to manually identify the related pieces for labeling, which can be expensive due to the need \zyw{for} professional knowledge. As a consequence, it is costly for these businesses to benefit from the advanced TSC solutions, as they \zyw{do not have} enough labeled data to learn accurate models. 

To deal with the problem, a natural idea is to enrich the local training data by gathering the labeled samples from external data sources, e.g., the other businesses that run the same instrument. However, it has been increasingly difficult for organizations to combine their data due to privacy concerns~\cite{yang2019federated,voigt2017eu}.

\subsection{Motivation}\label{sec:motivation}

\zy{To solve the above problem, a new paradigm named \textit{Federated Learning} (FL)~\cite{mcmahan2017communication} has recently been proposed. FL aims to allow multiple businesses to jointly train a model without revealing their private data to each other. An example of using FL to enrich the training time series is shown in Fig.~\ref{fig:fedtsc}. However, existing FL solutions focus on the training of general models, including tree models~\zy{\cite{wu13privacy,fu2021vf2boost,fang2021large,cheng2021secureboost}}, linear models~\zy{\cite{nikolaenko2013privacy,mohassel2017secureml,aono2016scalable}}, and neural networks~\zy{\cite{mcmahan2017communication,shokri2015privacy,mcmahan2017learning,fu2022blindfl}}, which have limitations for the TSC problem. The main reasons are as follows.}

\begin{figure}
	\centering
	\includegraphics[width=.75\linewidth]{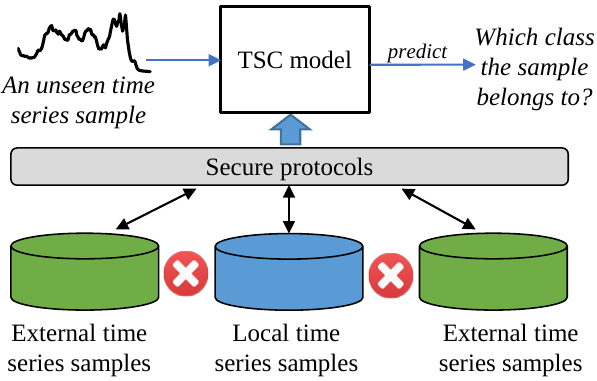}
	
	\caption{Example of enabling federated learning to enrich the training time series data. A business who owns some training time series samples (blue) collaborates with the partners who have additional training samples (green) to jointly build the TSC model. They follow some secure protocols to avoid disclosing their private training data.}
	\label{fig:fedtsc}
\end{figure}

First, the tree-based and linear classifiers are shown \zyw{to be} weak in capturing the temporal patterns for classifying time series~\cite{bagnall16bakeoff}, while the accuracy of neural networks usually relies on the hyper-parameter tunning, which is still a challenging problem in the FL scenario. Second,  many real-world TSC applications~\cite{ghalwash2013extraction,ye2011time,ramirez2019computational,perez2015fast} expect the classification decisions to be explainable/interpretable, e.g., the users know why a working condition is determined as a fault. However, a time series usually has a large number of data points (e.g., 537 on average for the 117 fixed-length datasets of the UCR Archive \cite{DBLP:journals/corr/abs-1810-07758}), which are taken as independent variables by the general models. \zyw{It will be difficult to explain the classification decisions with so many input variables.}   

\zyw{Faced with} the above limitations, we propose to customize FL solutions for the TSC problem by extending the centralized TSC approaches to the federated setting. To achieve this goal, we have proposed FedTSC~\cite{fedtsc}, a brand new FL system tailored for TSC, and \zyw{have} demonstrated its utility in VLDB. In this paper, we elaborate on the design ideas and essential techniques of a main internal of the system, i.e., the novel \underline{Fed}erated \underline{S}hapelet \underline{T}ransformation (FedST) framework. We design FedST based on the centralized shapelet transformation method due to the following benefits.

\begin{figure}
	\centering
	\includegraphics[width=.7\linewidth]{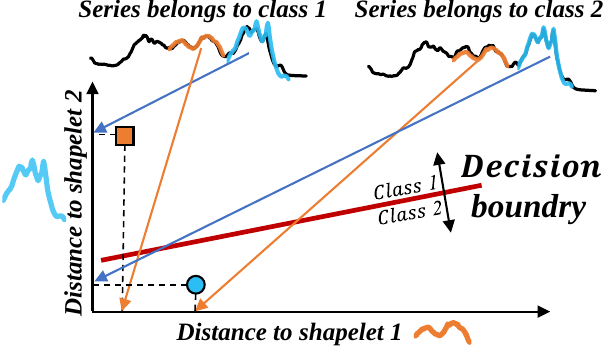}
	
	\caption{Illustration of the shapelet-based features. A shapelet is a salient subsequence that represents \zyw{a} shape unique to certain classes. With a few shapelets of high distinguishing ability, each time series sample is transformed into a low-dimensional feature vector representing how similar (distant) the sample is to these shapelets. The classification is made and explained based on the few features rather than the abundant \zyw{data points} of the raw time series. In this example, the time series similar to shapelet 1 (orange) and distant to shapelet 2 (blue) are classified into class 1 and vice versa.}
	\label{fig:interpretability}
 \vspace{-1ex}
\end{figure}

\noindent
\zy{\textbf{Our design choice.}} First, the shapelet transformation method not only achieves competitive accuracy over existing centralized TSC approaches~\cite{bagnall16bakeoff}, but also serves as an essential component of the ensemble classifier named HIVE-COTE 2.0 (HC2), which is currently state-of-the-art centralized TSC model~\cite{middlehurst2021hive}. Second, the method adopts the shapelet-based features rather than the raw time series as input to the classification models. The features represent the similarity between the time series and a set of shapelets (i.e., the salient subsequences), which can be order-of-magnitude less in number compared to the raw data points and are very intuitive to understand~\cite{hills2014classification}. Thus, building classifiers on top of the shapelet-based features can simplify the explanation. Third, the shapelets used to transform the raw time series can be extracted in an anytime manner to flexibly balance the accuracy and the efficiency (see Sec.~\ref{fedst_kernel}), which are beneficial for practical utility. Fig.~\ref{fig:interpretability} is an illustration of the shapelet-based features.

\zy{One worry of our design choice may be the scalability issue. The original shapelet transformation method has a quadratic time complexity with respect to the number of training instances~\cite{hills2014classification}, by selecting shapelets from all possible subsequences of the training time series, and thus cannot scale well to more training data.  However, existing studies have shown that it is never necessary to enumerate all possible subsequences~\cite{bagnall2020tale}. Instead, the running time can be limited within a moderate constant (e.g., 10 hours for every UCR dataset~\cite{middlehurst2021hive,bagnall2020tale}) to achieve considerable accuracy, benefiting from the anytime property of the algorithm (see Sec.~\ref{fedst_kernel}). Therefore, in this paper, we dedicate ourselves to extending the shapelet transformation method to the FL scenario, and we will also consider other advanced TSC methods that can be more scalable and complementary to our solution in our future work to benefit from both.}

\subsection{Challenges and contributions}

Although it is practical to extend the centralized approach to the federated setting, it is unexplored how to achieve both security and efficiency during the federated shapelet search (FedSS) step, which is the kernel of the FedST framework (see Sec.~\ref{fedst_kernel} in detail). 

The goal of the federated shapelet search is to jointly utilize the distributed labeled time series to find the shapelets with the highest quality for distinguishing the classes. To ensure \zyw{the} security of the federated computation, a natural idea is to extend the centralized shapelet search using secure multi-party computation (MPC)~\cite{yao1982protocols,damgaard2012multiparty,keller2020mp,keller2013architecture}. Following that, we first develop $\Pi_{FedSS-B}$, the basic protocol to achieve FedSS. Benefiting from MPC, we show that this protocol is secure and effective. 

However, by our analysis, the basic protocol suffers from low efficiency due to the high overhead incurred by MPC during the \textit{shapelet distance computation} and the \textit{shapelet quality measurement} stages. Although there are acceleration techniques in the centralized scenario~\cite{mueen2011logical,keogh2006lb_keogh,ye2011time,rakthanmanon2012searching}, we prove that these methods are insecure in the FL setting and \zyw{therefore are} unfeasible. Consequently, we propose acceleration methods tailored for the FL setting with security guarantee to tackle the efficiency bottlenecks of $\Pi_{FedSS-B}$. 

For shapelet distance computation, we identify the Euclidean norm computation as the efficiency bottleneck, so we propose a speed-up method based on a novel secure dot-product protocol. For quality measurement, we first design an optimization to reduce the duplicated time-consuming interactive operations with secure sorting. Then, we propose to further boost the efficiency through an acceptable trade-off of classification accuracy. We show both \textit{theoretically} and \textit{empirically} the effectiveness of these techniques.


\vspace{1ex}
{\setlength{\parindent}{0cm}
	\textbf{Contributions.} We summarize our contributions as follows.
	
	\begin{enumerate}
		\item We investigate the customized FL solution for time series classification. In particular, we propose FedST, the first shapelet-based FL method which extends the centralized shapelet transformation to the federated scenario to make use of its advantages in terms of accuracy, interpretability, and flexibility.
		
		\item We present $\Pi_{FedSS-B}$, a basic federated protocol for the FedST kernel, i.e., the federated shapelet search, which adopts MPC to achieve security. We analyze the protocol in terms of security, effectiveness, and efficiency. We identify the efficiency bottlenecks of $\Pi_{FedSS-B}$ and the invalidity of the centralized speed-up techniques due to the security issue. To boost the protocol efficiency, we propose acceleration methods tailored for the FL setting, which are theoretically secure and are more scalable and efficient than the basic protocol.
		
		\item We conduct extensive experiments to evaluate our solutions, which \zyw{have} three major observations. (1) Our FedST offers superior accuracy comparable to the non-private approach. (2) Each of our proposed acceleration approaches is individually effective, and they together bring up to three orders of magnitude of speedup. (3) The proposed trade-off method provides up to 8.31x speedup over our well-optimized protocol while guaranteeing comparable accuracy. We further demonstrate the interpretabiltiy and flexibiltiy of our framework.
	\end{enumerate}
}

{\setlength{\parindent}{0cm}
	\textbf{Organization.} We introduce the preliminaries in Sec.~\ref{sec:pre}. We propose the FedST framework and talk about the FedST kernel, i.e., federated shapelet search, in Sec.~\ref{solution_overview}. The basic protocol of the federated shapelet search is presented and analyzed in Sec.~\ref{fedss_bs}. We elaborate on the acceleration methods tailored for the two efficiency bottlenecks of the basic protocol in Sec.~\ref{Distance_Acceleration} and~\ref{measurement_acceleration}, respectively. We show \zyw{the} experimental results in Sec.~\ref{exp}. We illustrate how to incorporate differential privacy to further enhance the security in Sec.~\ref{dp-protect} and conclude this paper in Sec.~\ref{sec:conclusion}. 
}
	\section{Related Work}
Our work is related to federated learning, feature-based time series classification, and privacy protection.

	\subsection{Federated Learning}\label{sec:survey-FL}
	Recently, there have been numerous works \zyw{dedicated} to the federated learning of the general models, including the linear models~\zy{\cite{nikolaenko2013privacy,mohassel2017secureml,aono2016scalable}},  the tree models~\zy{\cite{wu13privacy,fu2021vf2boost,fang2021large,cheng2021secureboost}}, and the neural networks~\zy{\cite{mcmahan2017communication,shokri2015privacy,mcmahan2017learning,fu2022blindfl}}. However, none of them \zyw{achieves} the same goal as our solution, because these general models have limitations in tackling the TSC problem~\cite{bagnall16bakeoff} in terms of accuracy and interpretability. There are also FL solutions designed for specific tasks~\cite{mcmahan2017learning,liu2020secure,wang2021efficient,huang2021personalized,10.14778/3494124.3494125,li2021federated,tong2022hu,muhammad2020fedfast,li2021privacy,chen2022fedmsplit}. These methods target scenarios that are completely different from ours. As a result, we propose to tailor \zyw{the} FL method for TSC. In specific, we contribute to proposing the secure FedST framework to take advantage of the shapelet transformation in terms of accuracy, interpretability, and flexibility, and addressing the security and efficiency issues within the framework. 

 \zy{Note that the generic FL frameworks, such as the popular FedAvg~\cite{mcmahan2017communication} and its customized variants~\cite{younis2023flames2graph,xing2022efficient-multitask-fedTSC}, which can train any stochastic gradient descent (SGD) based model (e.g. deep neural networks~\cite{lecun2015deep-DL-Book}) across the data federation, can also solve the federated TSC problem by training centralized TSC models such as spatial-temporal convolutional neural network~\cite{st-CNN} and ResNet~\cite{ResNet-TSC} in the FL setting. This kind of FL solution is a standard yet very strong baseline in terms of TSC accuracy. However, the generic framework relies on a secure broker to aggregate the models or gradients of the parties, which is costly and can disclose sensitive data in practice~\cite{tong2022hu}. In comparison, we show in Sec.~\ref{exp} that our customized solution can achieve competitive accuracy without using such a broker, and has nice properties in terms of interpretability and flexibility, which are beneficial for practical utility.}
	

\subsection{Feature-based Time Series Classification} 

Instead of directly building classifiers upon the raw time series, transforming the time series into low-dimensional or sparse feature vectors can not only achieve competitive classification accuracy, but also simplify the explanation. 

In summary, there are three types of TSC methods based on different explainable features, i.e., the shapelet-based methods~\cite{ye2011time,mueen2011logical,hills2014classification,bostrom2017binary,grabocka2014learning,li2021shapenet,liang2021efficient} that determine the class labels based on the localized shapes, the interval-based methods~\cite{middlehurst2020canonical,cabello2020fast,middlehurst2021hive} that classify the time series based on the statistics \zyw{in} some specific time ranges, and the dictionary-based approaches~\cite{le2017time,large2019time,middlehurst2019scalable,tde} that utilize the pattern frequency as features. These types of methods can \zyw{complement} each other to contribute to \zyw{the} state-of-the-art accuracy~\cite{lines2018time,bagnall2020tale,middlehurst2021hive}. This work focuses on developing a novel framework with a series of optimization techniques taking advantage of the shapelet-based approaches, while we would like to present our contributions~\cite{fedtsc} of enabling FL for interval-based and dictionary-based TSC in the future.
 

Shapelet-based TSC is first proposed by~\cite{ye2011time}. In the early \zyw{work}, shapelets are discovered in company with a decision tree training, where a shapelet is found at each tree node to determine the best split of the node~\cite{ye2011time,mueen2011logical,lines2012alternative}. To benefit from the other classifiers, a shapelet transformation framework~\cite{hills2014classification} is proposed that decouples the shapelet discovery from the decision tree training and produces a transformed dataset that can be used in conjunction with any classifier. Several works are raised to speedup the shapelet search~\cite{ye2011time,keogh2006lb_keogh,mueen2011logical,rakthanmanon2012searching} and improve the shapelet quality~\cite{bostrom2017binary}. 

Another line of \zyw{work dedicates} to jointly learning the shapelets and the classifiers~\cite{grabocka2014learning,liang2021efficient,ma2019triple,li2021shapenet,ma2020adversarial,fang2018efficient,hou2016efficient}. However, the learning-based methods are much more complex because they incur several additional hyperparameters that highly affect the accuracy. Besides, they are inflexible due to the coupling of the shapelet and classifier, and cannot run in the anytime fashion to trade off the classification accuracy and the efficiency.

Based on the above discussions, we take advantage of the shapelet transformation method~\cite{hills2014classification,bostrom2017binary,bagnall2020tale} to develop our FL solution. \zyw{However}, our work differs from existing studies because we carefully consider the security and efficiency issues in a brand new FL scenario.

\subsection{Privacy Protection} Data privacy is one of the most essential problems in FL~\cite{yang2019federated,li2020federated,kairouz2021advances}. Several techniques have been studied \zyw{in existing work}. Secure Multi-Party Computation~\cite{yao1982protocols} is a general framework that offers secure protocols for many arithmetic operations~\cite{damgaard2012multiparty,keller2020mp,keller2013architecture}. These operations are efficient for practical utility~\cite{aly2019benchmarking,chen2019secure,li2020practical,mohassel2017secureml,li2021privacy,wu13privacy} under the semi-honest model that most FL works consider, while they can also be extended to the malicious model through zero-knowledge proofs~\cite{goldreich1994definitions}. 

Homomorphic Encryption (HE) is another popular technique in FL~\cite{cheng2021secureboost,fu2021vf2boost,10.14778/3494124.3494125,zhang2020batchcrypt,wu13privacy}, which allows \zyw{for} a simple implementation of the secure addition. However, HE does not support some complex operations (e.g., division and comparison). The encryption and decryption are also computationally intensive~\cite{wu13privacy,fu2021vf2boost}.

Compared to the solutions based on MPC and HE \zyw{that} aim to protect the intermediate information during the federated computation,  an orthogonal line of \zyw{work adopts} the Differential Privacy (DP) to protect the privacy for the outputs, such as the parameters of the learned models. It works by adding \zyw{noise} to the private data~\cite{wang2021efficient,wei2020federated,liu2021projected,li2021federated,Pan2022FedWalkCE} to achieve a trade-off between the precision and the degree of privacy for a target function. Thus, DP can usually complement MPC and HE. 

In this paper, we mainly adopt MPC to ensure no intermediate information is disclosed during the complex computations of FedST, because it provides the protocols for the required arithmetic operations. We also illustrate that the private data can be further protected with privacy guarantee by incorporating DP.

\section{Preliminaries}\label{sec:pre}

This section presents the preliminaries, including the target problem of the paper and the two building blocks of the proposed FedST, i.e., the shapelet transformation and the secure multi-party computation. \zy{We begin by summarizing the main notations in Table~\ref{tab:notions}. } 

\subsection{Problem Statement}

Time series classification (TSC) is the problem of creating a function that maps from the space of input time series samples to the space of class labels~\cite{bagnall16bakeoff}. A time series (sample) is defined as a sequence of data points
$ T = (t_1,\ldots,t_p,\ldots,t_N )$
ordered by time, where $t_p$ is the observation at timestamp $p$, and $N$ is the length. The class label $y$ is a discrete variable with $C$ possible values. i.e., $y \in \{c\}_{c=1}^C$ where $C \ge 2$. 

Typically, TSC is achieved by using a training data set $TD=\{(T_j, y_j)\}_{j=1}^M$ to build a model that can output either predicted class values or class distributions for previously unseen time series samples, where the instance $(T_j, y_j)$ represents the pair of the $j$-th time series sample and the corresponding label.

Specifically, in this paper we target the TSC problem in a federated setting, denoted as the FL-enabled TSC problem defined as follows.

\begin{definition}[FL-enabled TSC problem]
Given a party $P_0$ (named initiator) who owns a training data set $TD^0$ and $n - 1$ partners $P_1, \ldots, P_{n-1}$ (named participants) who hold the labeled series $TD^1, \ldots, TD^{n-1}$ collected from the same area (e.g., monitoring the same type of instruments), where $TD^i = \{(T^i_j, y^i_j)\}_{j=1}^{M_i}$, the goal of the problem is to coordinate the parties to build TSC models $\mathcal{M}$ for the initiator $P_0$ without revealing the local data $TD^0$, $\ldots$, $TD^{n-1}$ to each other.   
\end{definition}

Note that every party in the group can act as the initiator to benefit from the federated learning. For ease of exposition, we denote $\sum_{i=0}^{n-1}M_i = M$ and $\bigcup_{i=0}^{n-1}TD^i=TD$. Ideally, the performance of $\mathcal{M}$ should be lossless compared to that of the model trained in a centralized scenario using the combined data $TD$.

Similar to previous FL \zyw{work}~\cite{wu13privacy,fu2021vf2boost,fu2022blindfl,li2021federated,tong2022hu}, we consider the semi-honest model where each party follows the protocols but may infer the private information from the received messages, while our method can be extended to the malicious model through zero-knowledge proofs~\cite{goldreich1994definitions}. Unlike existing studies that usually conditionally allow \zyw{the} disclosure of some private data~\cite{fu2021vf2boost,10.14778/3494124.3494125}, we adopt a stricter security definition~\cite{wu13privacy,mohassel2017secureml} to ensure \textit{no intermediate information is disclosed}.   

\begin{definition}[Security]
	Let $\mathcal{F}$ be an \textit{ideal} functionality such that the parties send their data to a trusted party for computation and receive the final results from the party. Let $\Pi$ be a \textit{real-world} protocol executed by the parties. We say that $\Pi$ securely realizes $\mathcal{F}$ if for each adversary $\mathcal{A}$ attacking the real interaction, there exists a simulator $\mathcal{S}$ attacking the ideal interaction, such that for all environments $\mathcal{Z}$, the quantity $\arrowvert \Pr[REAL(\mathcal{Z}, \mathcal{A}, \Pi, \lambda) = 1] - \Pr[IDEAL(\mathcal{Z}, \mathcal{S}, \mathcal{F}, \lambda) = 1] \arrowvert$ is negligible (in $\lambda$).
	\label{definition:security}
\end{definition}

\begin{table}[t]
    \centering
      \caption{\zy{Summary of notations used in this paper.}}
    \begin{tabular}{|l|l|}
    \toprule
       \textbf{Sign}  & \textbf{Specification} \\
         \midrule
      $T$   &  A time series sample of length $N$ \\
      \hline
      $P_i$ & The $i$-th party \\
      \hline
      $TD^i$ ($TD$) & The time series dataset of $P_i$ (all parties) \\
      \hline
      $(T^i_j,y^i_j)$ & \makecell[l]{The $j$-th instance in $TD^i$ where $T^i_j$ is the \\ time series and $y^i_j$ corresponds to the label} \\
      \hline
      $M_i$ ($M$) & The number of instances in $TD^i$ ($TD$) \\
      \hline
      $\{c\}_1^{C}$ & The label set of size $C$ \\
      \hline
      $\mathcal{F}$/$\mathcal{A}$ & An ideal functionality/adversary\\
      \hline
      $\Pi$/$\mathcal{S}$ & A real protocol/simulator\\
      \hline
      $t_{j,p}$ & The value of $T_j$ in the $p$-th timestamp\\
      \hline
      $S$ & The shapelet of length $L_S < N$\\
      \hline
      $T_j[s,l]$ & \makecell[l]{The subsequence of $T_j$ starting at the\\timestamp $s$ and lasting the length $l$}\\
      \hline
      $d_{T_j,S}$ & \makecell[l]{The distance between the time series $T_j$\\and the shapelet $S$}\\
      \hline
      \makecell[l]{$D_S$\\ ($D_{S,c}$) }& \makecell[l]{The set of distances between the shapelet $S$\\and the time series (of class $c$) of all parties}\\
        \hline
         $y(S)$ & \makecell[l]{The class of the time series generating $S$}\\
        \hline
      $D_{y(S)}$ & \makecell[l]{The subset of $D_S$ having the distances \\between $S$ and the time series of class $y(S)$} \\
      \hline
      \makecell[l]{$D_{S}^{\tau,L}$\\($D_{S,c}^{\tau,L}$)}  & \makecell[l]{The subsets of $D_S$ having the distances (of\\class $c$) not greater than the threshold $\tau$}\\
       \hline
        \makecell[l]{$D_{S}^{\tau,R}$\\($D_{S,c}^{\tau,R}$)}  & \makecell[l]{The subsets of $D_S$ having the distances (of\\class $c$) greater than the threshold $\tau$}\\
        \hline
      $Q_{IG}(S)$ & \makecell[l]{The quality of the shapelet $S$ measured as\\the maximum information gain}\\
        \hline
      $\{S_k\}_{k=1}^K$ & The set of shapelets of size $K$\\
      \hline
      $\mathcal{SC}$ & The set of the shapelet candidates\\
      \hline
      $\boldsymbol{X}_j$ ($\boldsymbol{X}^i_j$) & \makecell[l]{The feature vector of $T_j$ ($T^i_j$) transformed \\using the shapelets $\{S_k\}_{k=1}^K$}\\
        \hline
      $D$ ($D^i$) & \makecell[l]{The dataset (of $P_i$) transformed from $TD$ \\($TD^i$) using the shapelets}\\
      \hline
      $\langle x \rangle$ & The secretly shared value of $x$\\
      \hline
      $\langle x \rangle_i$ &The secret share held by the party $P_i$\\
      \hline
      $\boldsymbol{\gamma}_{A\subseteq D}$ & \makecell[l]{The vector of size $|D|$ indicating whether\\ the elements of $D$ are in $A$}\\
      \hline
      $\boldsymbol{\gamma}^i[j]$ & \makecell[l]{The value in the $j$-th entry of the indicating\\ vector $\boldsymbol{\gamma}^i$ held by $P_i$} \\
      \hline
      $\boldsymbol{\gamma}_L$/$\boldsymbol{\gamma}_R$/$\boldsymbol{\gamma}_c$ & \makecell[l]{The vectors of size $|D_S|$ indicating whether \\the elements of $D_S$ are in $D_{S}^{\tau,L}$/$D_{S}^{\tau,R}$/$D_{S,c}$}\\
      \hline
     \vspace{-2.2ex} & \\
     $\overline{D}_{S}$ ($\overline{D}_{S,c}$) & The mean of the distances in $D_S$ ($D_{S,c}$)\\
      \hline
      $Q_{F}(S)$ & \makecell[l]{The quality of the shapelet $S$ measured as \\the F-stat}\\    
         \bottomrule
    \end{tabular}
    \label{tab:notions}
\end{table}

	Intuitively, the simulator $\mathcal{S}$ must achieve the same effect in the ideal interaction as the adversary $\mathcal{A}$ achieves in the real interaction. In this paper, we identify the ideal functionality as the federated search of the high-quality shapelets, which is the kernel of the proposed FedST framework (see Sec.~\ref{fedst_kernel} in detail). Therefore, we contribute to \zyw{designing} secure and efficient protocols to achieve the functionality in the real FL scenario.

The federated setting in this paper is similar to the horizontal and cross-silo FL~\cite{kairouz2021advances,mammen2021federated}, because the data are horizontally partitioned across a few businesses and each of them has considerable but insufficient data. However, unlike the mainstream FL solutions that usually rely on a trust server \zy{(a.k.a. secure broker)}~\cite{huang2021personalized,zhang2020batchcrypt,marfoq2020throughput}, we remove this dependency considering that identifying such a party can cause additional costs~\cite{10.14778/3494124.3494125,tong2022hu}. Besides, the security definition we adopt is stricter than many existing FL works as mentioned above. Therefore, our setting \zyw{is} more practical but challenging.





\subsection{Shapelet Transformation}\label{TSC model}

\textit{Time series shapelets} are defined as representative subsequences that discriminate the classes. Denote $S = (s_1, \ldots, s_L)$ a shapelet generated from $TD=\{(T_j, y_j)\}_{j=1}^M$ and the length of $T_j$ is $N$, where $L \le N$. Let $T_j[s, l]$ denote the subseries of $T_j = (t_{j, 1}, \ldots, t_{j, N})$ that starts at the timestamp $s$ and has length $l$, i.e., 
\begin{equation}
	T_j[s, l] = (t_{j, s}, \ldots, t_{j, s + l - 1}), 1 \leq s \leq N - l + 1,
\end{equation}
the distance between the shapelet and the $j$-th time series is defined as the minimum Euclidean norm (ignore the square root) between $S$ and the $L$-length subseries of $T_j$, i.e.,
\begin{equation}
d_{T_j, S} = \mathop{\min}_{p \in \{1, \ldots, N - L + 1\}} ||S - T_j[p, L]||^2.
\label{eq:shapelet_dis}
\end{equation}

By definition, $d_{T_j, S}$ reflects the similarity between a localized shape of $T_j$ and $S$, which is a class-specific feature. The quality of $S$ can be measured by computing the distances to all series in $TD$, i.e., $D_S = \{d_{T_j, S}\}_{j=1}^M$, and evaluating the differences in \zyw{the} distribution of the distances between \zyw{the} class values $\{y_j\}_{j=1}^M$. The state-of-the-art method of shapelet quality measurement is to use the \textit{Information Gain (IG) with a binary strategy}~\cite{bostrom2017binary}. Each distance $d_{T_j, S} \in D_S$ is considered as a splitting threshold, denoted as $\tau$. The threshold is used to partition the dataset $D_S$ into $D_S^{\tau,L}$ and $D_S^{\tau,R}$, such that $D_S^{\tau,L} = \{d_{T_j, S}|d_{T_j, S} \le \tau\}_{j=1}^M$ and $D_S^{\tau,R} = D_S \setminus D_S^{\tau, L}$. The quality of $S$ is the maximum information gain among the thresholds, i.e.,
\begin{equation}
\begin{split}
Q_{IG}(S) &= \mathop{\max}_{\forall \tau}\ H(D_S) - (H(D_S^{\tau,L}) + H(D_S^{\tau,R})),
\end{split}\label{eq:IG}
\end{equation}
where 
\begin{equation}
	H(D) = -(p\log_2p + (1-p)\log_2(1-p)),
\end{equation}
$p = \frac{|D_{y(S)}|}{|D|}$is the fraction of samples in $D$ that belongs to the class of the sample generating $S$,
$y(S) \in \{c\}_{c=1}^C$ and
$D_{y(S)} = \{d_{T_j, S}| y_j = y(S)\}_{j=1}^M$.

In shapelet transformation, a set of candidates \zyw{is} randomly sampled from the possible subsequences of $TD$. After measuring the quality of all candidates, the $K$ subsequences with the highest quality are chosen as shapelets, which are denoted as $\{S_k\}_{k=1}^K$. The shapelets are used to transform the original dataset $TD$ into a \zyw{new} tabular dataset of $K$ features, where each attribute represents the distance between the shapelet and the original series, i.e., $D = \{(\mathbf{X}_j, y_j)\}_{j=1}^M$ where $\mathbf{X}_j = (d_{T_j, S_1}, \ldots, d_{T_j, S_K})$. The unseen series are transformed in the same way for prediction. $D$ can be used in conjunction with any classifier, such as the well-known intrinsically interpretable decision tree and logistic regression~\cite{interpretable-ML-book}.

\subsection{Secure Multiparty Computation}\label{mpc}
\textit{Secure multiparty computation (MPC)}~\cite{yao1982protocols} allows participants to compute a function over their inputs while keeping the inputs private. In this paper, we utilize the additive secret sharing scheme for MPC~\cite{damgaard2012multiparty} since it offers the protocols of the common arithmetic operations applicable to practical situations~\cite{chen2019secure,li2021privacy}. It \zyw{performs} in a field $\mathbb{Z}_q$ for a prime $q$. We denote a value $x \in \mathbb{Z}_q$ that is additively shared among parties as 
\begin{equation}
	\langle x \rangle = \{\langle x \rangle_0, \ldots, \langle x \rangle_{n-1}\},
\end{equation}
where $\langle x \rangle _i$ is a random \textit{share} of $x$ \zyw{held} by party $P_i$.

Suppose \zyw{that} $x$ is a private value of $P_i$. To secretly share $x$, $P_i$ randomly chooses \zyw{a \textit{share}} $\langle x \rangle_j \in \mathbb{Z}_q$ and sends it to $P_j(\forall j, j \neq i)$. Then, $P_i$ sets $\langle x \rangle_i = x - \sum_j \langle x \rangle_j \mod q$. To reconstruct $x$, all parties reveal their shares to compute $x = \sum_{i=0}^{n-1} \langle x \rangle_i \mod q$. For ease of exposition, we omit the modular operation in the rest of the paper.



Under the additive secret sharing scheme, a function $z = \mathit{f} (x, y)$ is computed by using \zyw{an} MPC protocol that takes $\langle x \rangle$ and $\langle y \rangle$ as input and outputs the \zyw{secretly} shared  $\langle z \rangle$. In this paper, we mainly use the following MPC protocols as building blocks: 

(a) \textit{Addition}: $\langle z \rangle = \langle x \rangle + \langle y \rangle$

(b) \textit{Multiplication}: $\langle z \rangle = \langle x \rangle \cdot \langle y \rangle$

(c) \textit{Division}: $\langle z \rangle =  \langle x \rangle / \langle y \rangle$

(d) \textit{Comparison}: $\langle z \rangle = \langle x \rangle \overset{?}{<} \langle y \rangle:\langle 1 \rangle:\langle 0 \rangle$

(e) \textit{Logarithm}: $\langle z \rangle = \log_2(\langle x \rangle)$

We refer \zyw{the reader} to \cite{beaver1991efficient,catrina2010secure,catrina2010improved,aly2019benchmarking} for the detailed implementation of the operations.

In addition, given the result $\langle b \rangle =  \langle x \rangle \overset{?}{<} \langle y \rangle:\langle 1 \rangle:\langle 0 \rangle$, the smaller one of \zyw{the} two values $ \langle x \rangle $, $\langle y \rangle$ can be securely assigned to $\langle z \rangle$, as:

(f) \textit{Assignment}: $\langle z\rangle =\langle b\rangle \cdot \langle x\rangle + (1-\langle b\rangle)\cdot \langle y\rangle$. 

With the assignment protocol, it is trivial to perform the \textit{maximum}, \textit{minimum}, and \textit{top-K} computation for a list of \zyw{secretly shared values} by sequentially comparing and swapping the adjacent elements in the list using the secure comparison and assignment protocols.

\section{Solution Overview}\label{solution_overview}

This section overviews our FL-enabled TSC framework, which is a key component of our FedTSC system~\cite{fedtsc} and is built based on the centralized shapelet transformation~\cite{hills2014classification,bostrom2017binary,bagnall2020tale}. We provide the framework overview in Sec.~\ref{fedst_framework}. Then, we identify the FedST kernel in Sec.~\ref{fedst_kernel}.

		\subsection{FedST Framework}\label{fedst_framework}

		Overall, FedST has two stages: (1) federated shapelet search; (2) federated data transformation and classifier training. The two stages are illustrated in Fig.~\ref{fig:fedst_framework}.
		
		In the first stage, all parties jointly search for the $K$ best shapelets $\{S_k\}_{k=1}^K$ from a candidate set $\mathcal{SC}$. 
		
		Note that $P_0$ requires the found shapelets to explain the shapelet-based features, so the shapelet candidates in $\mathcal{SC}$ are only generated by $P_0$ to ensure \zyw{that} the local time series of the participants cannot be accessed by the initiator. This may raise a concern that the shapelets will be missed if they do not occur in \zyw{$TD^0$}. Fortunately, since the high-quality shapelets are usually highly redundant in the training data, it is shown enough to find them by checking some randomly sampled candidates rather than all possible subsequences~\cite{bagnall2020tale,gordon2012fast}. Hence, it is feasible to generate $\mathcal{SC}$ by $P_0$ in our cross-silo setting where each business has considerable (but insufficient) data. We also verify this issue in Sec.~\ref{exp:accuracy} and \ref{exp:flexibility}.    
		
		In stage two, the time series data $TD^i$ in each party \zyw{are} transformed into the $K$ dimensional secretly shared tabular data as:
		\begin{equation}
			\langle D^i \rangle = \{(\langle\mathbf{X}_j^i\rangle, \langle y_j^i \rangle)\}_{j=1}^{M_i}, \forall i \in \{0,\ldots,n-1\},
		\end{equation}
		where 
		\begin{equation}
			\langle\mathbf{X}_j^i\rangle = (\langle d_{T_j^i, S_1}\rangle, \ldots, \langle d_{T_j^i, S_K}\rangle).
		\end{equation}
		
		Then, a standard classifier is built over the joint secretly shared data set $\langle D \rangle = \bigcup_{i=0}^{n-1}\langle D^i \rangle$.

	Note that there is always a trade-off between security and accuracy/interpretability in FL. To achieve a good balance, FedST ensures \zyw{that} only $P_0$ learns the shapelets and classifiers, while nothing else can be revealed to the parties. This degree of privacy has been shown practical by many FL systems~\cite{fu2021vf2boost,fu2022blindfl,wu13privacy}. Additionally, we illustrate in Sec.~\ref{dp-protect} that we can further enhance the security by incorporating differential privacy~\cite{dwork2014algorithmic}, guaranteeing that the revealed outputs leak limited information about the private training data.

	\subsection{FedST Kernel: Federated Shapelet Search}\label{fedst_kernel}
	
	The transformed data set $\langle D \rangle$ is a common tabular data set, with continuous attributes \zyw{that} can be used in conjunction with any standard classifier. Consequently, any classifier training protocol built for secretly shared data (e.g.,~\cite{mohassel2017secureml,abspoel2020secure,zheng2021cerebro,chen2019secure}) can be seamlessly integrated into our framework. Nevertheless, there exists no protocol that tackles the orthogonal problem of federated shapelet search and data transformation. Further, the data transformation is to compute the distances between each training series and shapelet, which is just a subroutine of the shapelet search. Thus, the key technical challenge within our FedST is to design secure and efficient protocols to achieve the \textit{federated shapelet search (Stage 1 in Fig.~\ref{fig:fedst_framework})}, which becomes the kernel part of FedST.
	
	Formally, we define the functionality of the federated shapelet search, $\mathcal{F}_{FedSS}$, as follows.
	\begin{figure}
			\centering
			\includegraphics[width=\linewidth]{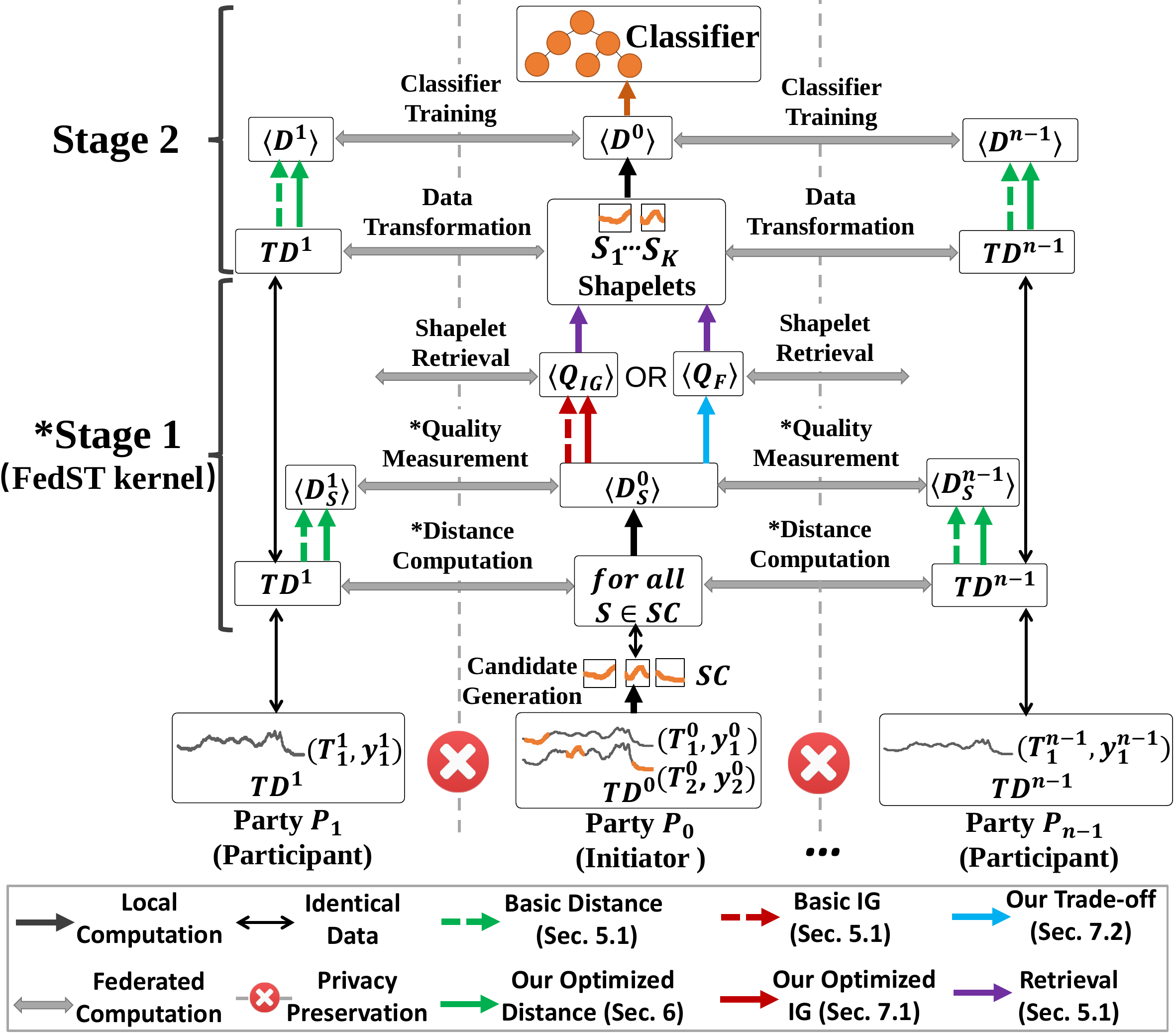}
			\caption{An illustration of the FedST framework.}
			\label{fig:fedst_framework}
		\end{figure}	
	
	\begin{definition}[Federated Shapelet Search, $\mathcal{F}_{FedSS}$]
		Given the time series datasets distributed over the parties, i.e., $TD^0$, \ldots, $TD^{n-1}$, and the shapelet candidates $\mathcal{SC}$ generated from $TD^0$, the goal of $\mathcal{F}_{FedSS}$ is to \zyw{find} the $K$ best shapelets $\{S_k|S_k \in \mathcal{SC}\}_{k=1}^K$ for $P_0$ by leveraging the distributed data sets.\label{definition:FedSS}
	\end{definition}

	\noindent

To realize ${\mathcal{F}_{FedSS}}$ under the security defined in Definition~\ref{definition:security}, a straightforward thought is to design security protocols by extending the centralized method to the FL setting using MPC. Following this, we present \textbf{${\Pi_{FedSS-B}}$} (Sec.~\ref{protocol_description}), the protocol that achieves our basic idea. We show \zyw{that} the protocol is \textit{secure} and \textit{effective} (Sec.~\ref{protocol_discussion}), but we identify that {it suffers from \textit{low efficiency} due to the high communication overhead incurred by MPC and the failure of the pruning techniques due to the security issue (Sec.~\ref{protocol_bottleneck}).} 

To tackle the efficiency issue, we propose \textit{secure acceleration techniques tailored for the FL setting} that dramatically boost the protocol efficiency by optimizing the two bottlenecked processes of $\Pi_{FedSS-B}$, i.e., the \textit{distance computation} (Sec.~\ref{Distance_Acceleration}) and the \textit{quality measurement} (Sec.~\ref{measurement_acceleration}). \zyw{Experimental} results show that each of these techniques is \textit{individually effective} and they together contribute to \textbf{three orders of magnitude of speedup} (Sec.~\ref{exp:efficiency}).

Besides, since the evaluation of each shapelet candidate is in a randomized order and independent \zyw{of} the others, FedSS can perform in an anytime fashion~\cite{bagnall2020tale,gordon2012fast}. That is, the user announces a time contract, so that the evaluation stops once the running time exceeds the contract, and only the assessed candidates are considered in the following steps. Since this strategy relies only on the publicly available running time, it is feasible in the FL setting~\cite{fedtsc} to flexibly balance the accuracy and efficiency. We verify this issue in Sec.~\ref{exp:flexibility}.

	\section{Basic Protocol $\Pi_{FedSS-B}$}\label{fedss_bs}
	We now introduce the basic protocol $\Pi_{FedSS-B}$, which is extended from the centralized shapelet search using MPC to protect the intermediate information (Sec.~\ref{protocol_description}). We discuss the protocol in terms of security, effectiveness and efficiency in Sec.~\ref{protocol_discussion}, and analyze the bottlenecks of the protocol in Sec.~\ref{protocol_bottleneck}.
	
	\subsection{Protocol Description}\label{protocol_description}
	
	
	$\Pi_{FedSS-B}$ is outlined in Algorithm~\ref{alg:fedss_bs}.  The parties jointly assess the quality of each candidate and then select the $K$ best as the shapelets. The algorithm performs in three steps. First, the parties compute the distance between the samples and each candidate (Lines 2-8). Second, the parties evaluate the quality of the candidate over the secretly shared distances and labels (Lines 9). Finally, the parties jointly retrieve the $K$ candidates with the highest quality and reveal the shares of the indices to $P_0$ to recover the selected shapelets (Lines 10-11). These three steps are described as follows.

	{\setlength{\parindent}{0cm}
		\textbf{Distance Computation.} Since the candidates are locally generated by $P_0$, the distance between the samples of $P_0$ and the candidates can be locally computed. After that, $P_0$ secretly shares the results to enable the subsequent steps (Lines 3-5).
	}
	
	To compute the distances between the samples of each participant $P_i$ and the candidates (Lines 6-8), the MPC operations have to be adopted. For example, to compute $d_{T_j^i, S}$, $P_i$ and $P_0$ secretly share $T_j^i$ and $S$ respectively. Next, the parties jointly compute each Euclidean norm $\langle ||S, T^i_j[p, L]||^2 \rangle$ using MPC. At last, the parties jointly determine the shapelet distance $\langle d_{T_j^i, S} \rangle$ by Eq.~\ref{eq:shapelet_dis} using the secure minimum operation (see Sec.~\ref{mpc}).
	
	\begin{algorithm}[t]
		\normalem
		\caption{Basic Protocol $\Pi_{FedSS-B}$}
		\label{alg:fedss_bs}
		\SetKwData{Or}{\textbf{or}}
		\DontPrintSemicolon
		\KwIn {$TD^i=\{( T_j^i , y_j^i )\}_{j=1}^{M_i}$, $i = 0,\ldots,n-1$: local datasets\\ \ \ \ \ \ \ \
			\ $\mathcal{SC}$: A set of shapelet candidates locally generated by $P_0$
			\\ \ \ \ \ \ \ \ \ \ \ $K$: the number of shapelets}
		\KwOut {$\{S_k\}_{k=1}^{K}$: shapelets revealed to $P_0$}
		
		
		\For{$S \in \mathcal{SC}$}{
			\For{$i \in \{0,\ldots,n-1\}$}{
				\If{$i == 0$}{
					\For{$j \in \{1, \ldots, M_0\}$}{$P_0$ locally computes $d_{T^0_j, S}$ and secretly shares the result among all parties \;}
				}
				\Else{
					\For{$j \in \{1, \ldots, M_i\}$}{All parties jointly compute $\langle d_{T^i_j, S} \rangle$\;}
					
				}		
				
			}
			
			All parties jointly compute the quality $\langle Q_{IG}(S) \rangle$ over the secretly shared distances and labels \;
			All parties jointly find the $K$ candidates with the highest quality and reveal the indices $\{\langle I_k \rangle \}_{k=1}^K$ to $P_0$ \;
			
		}
		\Return{$\{S_k = \mathcal{SC}_{I_k}\}_{k=1}^K$}
	\end{algorithm}
	

	{\setlength{\parindent}{0cm}
		\textbf{Quality Measurement.} Based on Eq.~\ref{eq:IG}, to compute the IG quality of $S \in \mathcal{SC}$ (Line 9), we need to {securely partition the dataset $D_S$ using each threshold $\tau$ and compute the number of samples belonging to each class $c\ (c \in \{1, \ldots,C\})$ for $D_S$, $D_S^{\tau,L}$, and $D_S^{\tau, R}$.} We achieve it over the secretly shared distances and labels by leveraging the \textit{indicating vector} defined as follows.
	}
	
	\begin{definition}[Indicating Vector]
		Given a dataset $D = \{x_j\}_{j=1}^M$ and a subset $A \subseteq D$, we define the indicating vector of $A$, denoted as $\bm{\gamma}_{A \subseteq D}$, as a vector of size $M$ whose $j$-th ($j\in\{1,\ldots,M\}$) entry represents whether $x_j$ is in $A$, i.e., $\boldsymbol{\gamma}_{A \subseteq D}[j] = 1$ if $x_j \in A$, and $0$ otherwise.
	\end{definition}

	For example, for $D = \{x_1, x_2, x_3\}$ and $A = \{x_1,x_3\}$, the indicating vector of $A$ is $\boldsymbol{\gamma}_{A \subseteq D} = (1, 0, 1)$. Suppose that $\boldsymbol{\gamma}_{A_1 \subseteq D}$ and $\boldsymbol{\gamma}_{A_2 \subseteq D}$ are the indicating vectors of $A_1$ and $A_2$, respectively, we have
	\begin{equation}
		 \boldsymbol{\gamma}_{A_1 \subseteq D} \cdot \boldsymbol{\gamma}_{A_2 \subseteq D} = |A_1 \cap A_2|,
	\end{equation}
	where $|A_1 \cap A_2|$ is the cardinality of $A_1 \cap A_2$. Specifically, we have $\boldsymbol{\gamma}_{A_1 \subseteq D} \cdot \boldsymbol{1} = |A_1|$.

	With the indicating vector, we securely compute $\langle Q_{IG}(S) \rangle$ as follows.
	
	At the beginning, $P_0$ generates a vector of size $C$ to indicate the class of $S$, i.e.,
	\begin{equation}
		\boldsymbol{\gamma}_{y(S)} = \boldsymbol{\gamma}_{\{y(S)\} \subseteq \{c\}_{c=1}^C},
	\end{equation}
	 and secretly shares the vector among all parties. 
	 
	 Next, for each splitting threshold
	 \begin{equation}
	 	\langle \tau \rangle \in \bigcup_{i=0}^{n-1} \{\langle d_{T_j^i,S} \rangle\}_{j=1}^{M_i},
	 \end{equation}
	 the parties jointly compute the secretly shared vector 
	 \begin{equation}
	 \begin{split}
	 \langle \boldsymbol{\gamma}_L \rangle &= \langle \boldsymbol{\gamma}_{D_S^{\tau,L} \subseteq D_S} \rangle,\\
	 \langle \boldsymbol{\gamma}_R \rangle &= \langle \boldsymbol{\gamma}_{D_S^{\tau,R} \subseteq D_S} \rangle = \boldsymbol{1} - \langle \boldsymbol{\gamma}_L \rangle,
	 \end{split}
	 	\end{equation}
	 where
	 \begin{equation}
	 	\langle \boldsymbol{\gamma}_{D_S^{\tau,L} \subseteq D_S}[j] \rangle = \langle d_{T_j^i, S}\rangle \overset{?}{<} \langle \tau \rangle,\ j \in \{1, \ldots, M\}.
	 \end{equation}
	 Meanwhile, each party $P_i$ secretly shares the vector $\boldsymbol{\gamma}^{i}_{TD^{i}_{c} \subseteq TD^{i}}$ to indicate its samples that belong to each class $c$. Denote the indicating vectors of all parties as 
	 \begin{equation}
	 	\langle \boldsymbol{\gamma}_{c} \rangle = (\langle \boldsymbol{\gamma}^0_{TD^0_{c} \subseteq TD^0}\rangle, \ldots, \langle \boldsymbol{\gamma}^{n-1}_{TD^{n-1}_{c} \subseteq TD^{n-1}}\rangle),
	 \end{equation}
	which indicates the samples in $D_S$ that belong to class $c$, i.e.,
	\begin{equation}
		\langle \boldsymbol{\gamma}_{c} \rangle = \langle \boldsymbol{\gamma}_{TD_{c} \subseteq TD} \rangle = \langle \boldsymbol{\gamma}_{D_{S,c} \subseteq D_S} \rangle.
	\end{equation}
	As such, the parties compute the following statistics using MPC:
	\begin{equation}
	\begin{split}
	\langle |D_S^{\tau, L}| \rangle &= \langle \boldsymbol{\gamma}_L \rangle \cdot \boldsymbol{1},\\
	\langle |D_S^{\tau, R}| \rangle &= |D_S| - \langle |D_S^{\tau, L}| \rangle,\\
	\langle |D_{S,y(S)}| \rangle &= \langle \boldsymbol{\gamma}_{y(S)} \rangle \cdot (\langle \boldsymbol{\gamma}_{1} \rangle \cdot \boldsymbol{1}, \ldots,\langle \boldsymbol{\gamma}_{C} \rangle \cdot \boldsymbol{1}),\\
	\langle |D_{S, y(S)}^{\tau, L}| \rangle &= \langle \boldsymbol{\gamma}_{y(S)} \rangle \cdot (\langle \boldsymbol{\gamma}_{{1}} \rangle \cdot \langle \boldsymbol{\gamma}_L \rangle, \ldots, \langle \boldsymbol{\gamma}_{{C}} \rangle \cdot \langle \boldsymbol{\gamma}_L \rangle),\\
	\langle |D_{S, y(S)}^{\tau, R}| \rangle &= \langle \boldsymbol{\gamma}_{y(S)} \rangle \cdot (\langle \boldsymbol{\gamma}_{{1}} \rangle \cdot \langle \boldsymbol{\gamma}_R \rangle, \ldots, \langle \boldsymbol{\gamma}_{{C}} \rangle \cdot \langle \boldsymbol{\gamma}_R \rangle).
	\end{split}\label{eq:IG_stats}
	\end{equation}
	Given the statistics in Eq.~\ref{eq:IG_stats} and the public value $|D_S|=M$, the parties can jointly compute $\langle Q_{IG}(S) \rangle$ by Eq.~\ref{eq:IG}.

	{\setlength{\parindent}{0cm}
		\textbf{Shapelet Retrieval.} Given the quality of the candidates in secret shares, the parties jointly retrieve the indices of the $K$ best shapelets (Line 10) by securely comparing the adjacent quality values and then swapping the values and the corresponding indices based on the comparison results (see Sec.~\ref{mpc}). The indices are output to $P_0$ to recover the jointly selected shapelets (Line 11).
	}

	\subsection{Protocol Discussion}\label{protocol_discussion}
	This section analyzes $\Pi_{FedSS-B}$ in terms of security, effectiveness, and efficiency.
	
	{\setlength{\parindent}{0cm}
		\textbf{Security.} The security of $\Pi_{FedSS-B}$ is guaranteed by the following \zyw{theorem}:
	}
	\begin{theorem}
		$\Pi_{FedSS-B}$ is secure under the security defined in Definition~\ref{definition:security}.
		\label{theorem:fedst_basic_security}
	\end{theorem}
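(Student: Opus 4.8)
The plan is to prove security via the standard simulation paradigm for semi-honest MPC, exploiting the \emph{sequential composition theorem}: since $\Pi_{FedSS-B}$ is built entirely by composing the secure building-block protocols from Section~\ref{mpc} (addition, multiplication, division, comparison, logarithm, secret-sharing/reconstruction, and the secure min/max/top-$K$ routines derived from them), it suffices to argue that (i) each invoked sub-functionality is realized by a protocol secure under Definition~\ref{definition:security}, and (ii) every intermediate value that flows between these sub-protocols remains in secret-shared form and is never reconstructed in the clear. Concretely, I would first catalog every message exchanged in Algorithm~\ref{alg:fedss_bs}: the shares $P_0$ distributes for local distances (Lines~3--5) and for $\boldsymbol{\gamma}_{y(S)}$, the shares each $P_i$ distributes for $T_j^i$ and for the class-indicator vectors $\boldsymbol{\gamma}^i_{TD^i_c\subseteq TD^i}$, and the shares exchanged inside the MPC operations that compute $\langle d_{T_j^i,S}\rangle$, the statistics in Eq.~\ref{eq:IG_stats}, and $\langle Q_{IG}(S)\rangle$.

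The heart of the argument is constructing a simulator $\mathcal{S}$ for an arbitrary corrupted party (or coalition of parties). The key structural observation is that, apart from the final revealed indices $\{\langle I_k\rangle\}_{k=1}^K$, \emph{every} message a party receives is a single additive share of some value. By the perfect/statistical hiding property of the additive secret sharing scheme over $\mathbb{Z}_q$, any individual share (or any set of shares held by a strict subset of parties) is distributed uniformly at random and hence independent of the underlying secret. Therefore $\mathcal{S}$ can simulate all these incoming messages by sampling uniform elements of $\mathbb{Z}_q$, producing a transcript that is identically (or statistically close to) distributed as the real one. For the composed sub-protocols, I would invoke their individual simulators and stitch them together, appealing to sequential composition to conclude that the full transcript is simulatable. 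This establishes that the distinguishing advantage $\bigl\lvert \Pr(REAL=1)-\Pr(IDEAL=1)\bigr\rvert$ is negligible in $\lambda$.

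The one place requiring genuine care is the \textbf{output step} (Lines~10--11), where the shares of the top-$K$ indices $\{\langle I_k\rangle\}_{k=1}^K$ are reconstructed and revealed to $P_0$. Here the simulator cannot sample freely: it must be seeded with the ideal functionality's output $\{S_k\}_{k=1}^K$ (equivalently the indices) and then generate shares consistent with that output, which is easy because given a target reconstruction value one can pick all-but-one shares uniformly and set the last to match. I would emphasize that revealing these indices to $P_0$ is precisely what $\mathcal{F}_{FedSS}$ (Definition~\ref{definition:FedSS}) prescribes as the legitimate output, and is consistent with the stated policy that the learned parameters may be disclosed to $P_0$; crucially, the quality values $\langle Q_{IG}(S)\rangle$, the distances, and all partition statistics are \emph{never} reconstructed, so no intermediate information leaks beyond the sanctioned output.

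The main obstacle I anticipate is not any single step but marshalling the composition cleanly: I must verify that no sub-protocol in the chain ever opens an intermediate secret, and in particular that the secure min/top-$K$ routines (which internally compare-and-swap) keep both the compared quantities and the swap decisions hidden in shares throughout. I would therefore structure the proof around a lemma-style claim that the protocol maintains the invariant ``all quantities except the final indices remain secret-shared,'' then reduce security to the security of the constituent operations plus the hiding property of ASSS via sequential composition.
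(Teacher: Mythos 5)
Your proposal is correct and follows essentially the same route as the paper's (much terser) proof sketch: all joint computations are carried out over secret shares via the MPC building blocks, the indicating-vector trick keeps the computation data-oblivious so no intermediate value or branch decision is ever opened, and the only reconstructed values are the top-$K$ indices, which are exactly the sanctioned output of $\mathcal{F}_{FedSS}$ with which the simulator can be seeded. Your version simply spells out the sequential-composition and share-simulation details that the paper leaves implicit.
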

	\begin{proof}[Proof Sketch]
		In  $\Pi_{FedSS-B}$, all joint computations are executed using MPC. With the indicating vector, the secure computations are data-oblivious.  An adversary learns no additional information. The security follows.
	\end{proof}
	{\setlength{\parindent}{0cm}
		\textbf{Effectiveness.} We discuss the protocol effectiveness in terms of classification accuracy.
		$\Pi_{FedSS-B}$ is directly extended from the centralized approach by using the secret-sharing-based MPC operations, which have considerable computation precision~\cite{catrina2010secure,catrina2010improved,aly2019benchmarking}. Therefore, it is expected that the accuracy of FedST has no difference from the centralized approach. The experiment results in Sec.~\ref{exp:accuracy} validate this issue.
	}
	
	\begin{figure}[t]
		\centering
		\includegraphics[width=.65\linewidth]{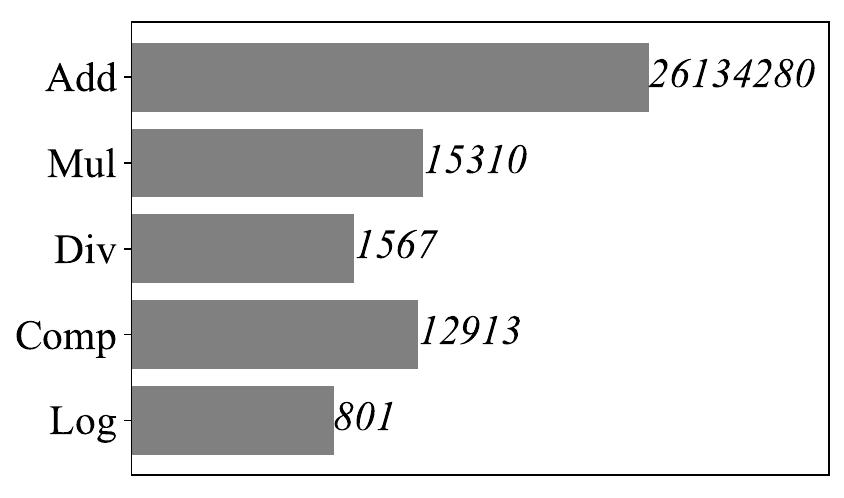}
		\caption{Throughputs (\#operations per second) of different MPC operations executed by three parties. Secure addition is much more efficient than the others because it is executed without communication~\cite{catrina2010secure}.}
		\label{fig:mpc_throughputs}
	\end{figure}

	{\setlength{\parindent}{0cm}
		\textbf{Efficiency.} As shown in Fig.~\ref{fig:mpc_throughputs}, the secret-sharing-based MPC is usually bottlenecked by communication rather than computation. Therefore, \textit{it is more indicative to analyze the efficiency by considering the complexity of only the interactive operations}, including secure multiplication, division, comparison and logarithm operations. We follow this metric for efficiency analysis in the paper.
		
	}
	
	$\Pi_{FedSS-B}$ in Algorithm~\ref{alg:fedss_bs} takes $O(|\mathcal{SC}| \cdot MN^2)$ for distance computation. The quality measurement has a complexity of $O(|\mathcal{SC}| \cdot M^2)$. Securely finding the top-$K$ candidates has a complexity of $O(|\mathcal{SC}| \cdot K)$. Since $K$ is usually a small constant, the total complexity of $\Pi_{FedSS-B}$ can be simplified as $O(|\mathcal{SC}| \cdot(MN^2  + M^2))$.
	
	
	
		\begin{figure}
		\centering
		\includegraphics[width=\linewidth]{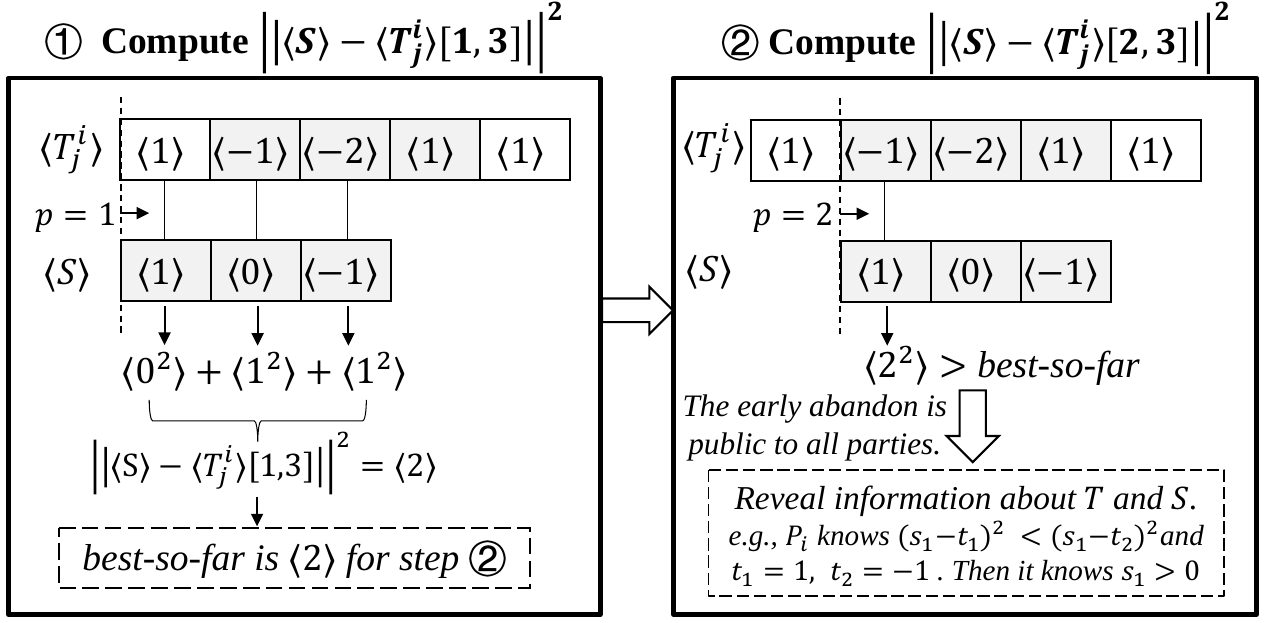}
		\caption{Illustration of the Euclidean norm pruning and its information disclosure.}
		\label{fig:abandon}
	\end{figure}
	
	\subsection{Bottleneck Analysis}\label{protocol_bottleneck}
	
	As discussed in Sec.~\ref{protocol_discussion}, $\Pi_{FedSS-B}$ is secure and effective \zyw{in enabling the} federated shapelet search. However, the basic protocol has expensive time cost in the FL setting for both \textit{distance computation} and \textit{quality measurement} steps, which bottleneck the efficiency of the protocol. Two \zyw{main} reasons are as \zyw{follows}.
	
	{\setlength{\parindent}{0cm}
	\textbf{\textit{Reason I. Heavy Communication Overhead.}} As discussed in Sec.~\ref{protocol_discussion}, $\Pi_{FedSS-B}$ takes $O(|\mathcal{SC}|\cdot MN^2)$ and $O(|\mathcal{SC}\cdot|M^2)$ expensive interactive operations to compute the distance and measure the quality for all candidates, which dominate the complexity. 
	Therefore, {the efficiency of $\Pi_{FedSS-B}$ is bottlenecked by the first two steps, i.e., distance computation and quality measurement.}
}

	{\setlength{\parindent}{0cm}
	\textbf{\textit{Reason II. Failure of Acceleration Techniques.}} Even using only local computation, repeatedly computing the distance and quality for all candidates \zyw{is} time-consuming~\cite{ye2011time}. To \zyw{address} this, existing studies propose pruning strategies for acceleration~\cite{mueen2011logical,keogh2006lb_keogh,ye2011time,rakthanmanon2012searching}. Unfortunately, {the pruning techniques are inevitably \textit{data-dependent}, which violates the security of Definition~\ref{definition:security} that requires the federated computation oblivious.} Thus, we have to abandon these acceleration strategies in the FL setting. We show the security issue in Theorem~\ref{theorem:distance_pruning_security} and Theorem~\ref{theorem:IG_pruning_security}.
}

	\begin{theorem}
		Protocol $\Pi_{FedSS-B}$ is insecure under the security defined in Definition~\ref{definition:security} if using the Euclidean norm pruning strategies proposed in~\cite{keogh2006lb_keogh} and \cite{rakthanmanon2012searching}.
		\label{theorem:distance_pruning_security}
	\end{theorem}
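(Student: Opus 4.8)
The plan is to prove insecurity directly from the simulation definition (Definition~\ref{definition:security}): rather than exhibiting a simulator, I would show that \emph{no} simulator can exist by constructing an environment $\mathcal{Z}$ together with a corrupted party (say $P_0$, or any single $P_i$) whose real-world view carries information that is provably not a function of the ideal output $\{S_k\}_{k=1}^K$ of $\mathcal{F}_{FedSS}$. First I would recall what the two strategies actually compute. Both accelerate the evaluation of $d_{T_j,S}=\min_p\|S-T_j[p,L]\|^2$ by \emph{early abandoning}: while accumulating the partial sum of squared differences for a fixed alignment $p$, one compares the running partial sum against the current best-so-far minimum and terminates that alignment as soon as the partial sum exceeds it; the strategy of~\cite{keogh2006lb_keogh} additionally uses the LB\_Keogh lower bound to skip entire alignments without ever computing $\|S-T_j[p,L]\|^2$. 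The crucial observation is that the abandon position, and the set of skipped alignments, are determined by the \emph{actual data values} of $S$ and $T_j$.

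The second step is to argue that realizing these strategies under ASSS cannot retain their speedup without declassification. Under additive secret sharing every arithmetic term is hidden, so an oblivious implementation would still have to evaluate all $N-L+1$ alignments to full length $L$; the only way the pruning comparisons can save interactive operations is to \emph{reveal} their Boolean outcomes in the clear so the parties can branch on ``abandon vs.\ continue.'' I would make this precise by noting that the count of secure multiplications and comparisons executed in the accelerated protocol is itself a data-dependent random variable, and that its realization---observable as the communication pattern and round structure---is exactly the intermediate information forbidden by the stricter definition adopted in Section~\ref{definition_security}, which requires that \emph{no intermediate information} be disclosed.

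The third step is the counterexample that pins down a constant distinguishing advantage. I would keep the functionality output fixed by choosing two inputs $TD$ and $TD'$ that induce identical shapelet distances $d_{T_j,S}$ for every $j$ and every $S$ (hence identical qualities and identical top-$K$), yet differ in the \emph{order} in which the partial sums of some non-optimal alignment exceed the running best-so-far, so that the early-abandon positions---and therefore the revealed comparison bits and the induced transcript---differ. In the real interaction the corrupted party's view then differs between $TD$ and $TD'$, whereas in the ideal interaction the simulator receives the same output in both cases and must produce identically distributed views; the environment distinguishes with advantage bounded away from zero, contradicting Definition~\ref{definition:security}. The same construction applies verbatim to~\cite{rakthanmanon2012searching}, whose cascade of lower bounds only enlarges the data-dependent skip pattern.

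The step I expect to be the main obstacle is the second one: rigorously ruling out a ``clever'' oblivious realization. I must argue that \emph{any} implementation preserving the asymptotic savings of the pruning (as opposed to merely producing the same output) is forced to leak a non-constant function of the private data, so that the leakage is intrinsic to the acceleration rather than an artifact of one implementation. The cleanest way to discharge this is to observe that the benefit of pruning is precisely a reduction of per-alignment work below the worst case $L$, that such a reduction is information-theoretically unobtainable from secret shares without revealing at least one data-dependent comparison outcome per abandoned alignment, and that the counterexample of the third step certifies this revealed outcome is not implied by the output.
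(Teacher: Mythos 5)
Your proposal is correct and follows essentially the same route as the paper's proof sketch: both arguments hinge on the observation that the early-abandon decision forces a data-dependent comparison outcome to be revealed in the clear, and that this bit is not a function of the output of $\mathcal{F}_{FedSS}$, so no simulator attacking the ideal interaction can reproduce the real-world view. Your third step merely makes explicit the distinguishing environment (two inputs with identical top-$K$ output but different abandon patterns) that the paper's sketch leaves implicit, and the obstacle you flag in step two is not strictly required, since the theorem only asserts insecurity of $\Pi_{FedSS-B}$ when augmented with these specific branching strategies, not the nonexistence of every conceivable oblivious acceleration.
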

	\begin{proof}[Proof Sketch]
		Fig.~\ref{fig:abandon} illustrates the Euclidean norm pruning. The basic idea is to maintain a best-so-far Euclidean norm at each timestamp $p \in \{1,\ldots,N-L_S+1\}$, and incrementally compute the sum of the squared differences between each pair of data points when computing $||S-T^i_j[p,L_S]||^2$ (left). Once the sum exceeds the best-so-far value, the current norm computation can be pruned (right). In the FL setting, although we can incrementally compute the sum and compare it with the best-so-far value using MPC, the comparison result must be disclosed when determining the pruning, which cannot be achieved by the simulator $\mathcal{S}$ that attacks the ideal interaction $\mathcal{F}_{FedSS}$ in Definition~\ref{definition:FedSS}. The security is violated.
	\end{proof}
	
	Similarly, \zyw{we} have the following theorem.
	
	\begin{theorem}
		Protocol $\Pi_{FedSS-B}$ is insecure under the security defined in Definition ~\ref{definition:security} if using the IG quality pruning strategies proposed in~\cite{ye2011time} and \cite{mueen2011logical}.
		\label{theorem:IG_pruning_security}
	\end{theorem}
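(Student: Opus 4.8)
The plan is to mirror the simulator-based argument used for Theorem~\ref{theorem:distance_pruning_security}, since the obstruction has the same root cause: an early-abandon pruning rule makes a control-flow decision conditioned on a secret value. First I would recall the IG pruning strategy of~\cite{ye2011time,mueen2011logical}. The protocol maintains a best-so-far quality $\langle Q^* \rangle$ over the candidates assessed so far, and for a new candidate $S$ it processes the training distances incrementally, after each step computing an optimistic upper bound $\langle U \rangle$ on the information gain still attainable once the remaining distances are incorporated. If $U$ ever drops below $Q^*$, then $S$ cannot enter the top-$K$ and its quality computation $\langle Q_{IG}(S) \rangle$ is abandoned.

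Second, I would isolate the step that breaks obliviousness. While the incremental bound $\langle U \rangle$ and the running best $\langle Q^* \rangle$ can both be maintained under MPC, the pruning itself requires evaluating the secure comparison $\langle U \rangle \overset{?}{<} \langle Q^* \rangle$ and then \emph{acting} on its outcome by halting the loop for $S$. Unlike the data-oblivious indicating-vector computation in $\Pi_{FedSS-B}$, this branch exposes the comparison bit: the number of incremental steps executed for each candidate, and the position at which it is abandoned, become publicly observable from the protocol transcript.

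Third comes the simulator argument. The ideal functionality $\mathcal{F}_{FedSS}$ of Definition~\ref{definition:FedSS} returns to $P_0$ only the indices of the $K$ highest-quality shapelets; it exposes neither the quality values nor any prefix-level information about them. A simulator $\mathcal{S}$ interacting with $\mathcal{F}_{FedSS}$ therefore has no way to reproduce the abandonment pattern, which encodes, for every candidate and every processing prefix, whether its optimistic IG still exceeds the running threshold --- information that depends on the private distances $\langle d_{T_j^i,S}\rangle$ and labels of all parties and is not a function of the final selected shapelets. Hence there exist environments $\mathcal{Z}$ whose two input distributions induce the same $\mathcal{F}_{FedSS}$ output but distinguishable real-world transcripts, so the quantity in Definition~\ref{definition:security} is non-negligible and no valid $\mathcal{S}$ exists.

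The main obstacle I anticipate is exactly this last point: for the Euclidean-norm pruning of Theorem~\ref{theorem:distance_pruning_security} the leaked bit ties directly to raw subseries distances, whereas here it concerns an \emph{aggregate} upper bound on a quality statistic, so I must argue carefully that even this aggregate is non-simulatable. The cleanest route is to exhibit a concrete pair of input datasets that agree on the final top-$K$ shapelets (and hence on the $\mathcal{F}_{FedSS}$ output) yet differ in where some candidate's IG bound crosses $Q^*$, making the observable abandonment index a genuine distinguisher that is independent of the functionality output.
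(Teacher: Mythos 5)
Your proposal is correct and follows essentially the same route as the paper, which omits this proof and refers back to the proof of Theorem~\ref{theorem:distance_pruning_security}: the pruning branch forces the comparison bit between the optimistic IG bound and the best-so-far quality to be revealed, and this data-dependent abandonment pattern cannot be reproduced by a simulator interacting only with $\mathcal{F}_{FedSS}$, which outputs just the top-$K$ indices. Your added suggestion of exhibiting two input datasets with identical ideal output but different abandonment positions merely makes the paper's (implicit) distinguisher argument more explicit, not different.
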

	
	{\setlength{\parindent}{0cm}
		We omit the proof because it is similar to the proof of Theorem~\ref{theorem:distance_pruning_security}.

		\textbf{Optimizations.} To remedy the efficiency issue of the protocol $\Pi_{FedSS-B}$, we propose \textit{acceleration methods tailored for the FL setting} to improve the efficiency of the {distance computation} and the {quality measurement steps.} }
		
		For distance computation, we propose to speed up the bottlenecked Euclidean norm computation based on \textit{a novel secure dot-product protocol} (Sec.~\ref{Distance_Acceleration}). 
		
		For quality measurement, we first propose a secure \textit{sorting-based acceleration} to reduce the duplicated interactive operations for computing IG (Sec.~\ref{sorting_based_acceleration}). Then, we propose to \textit{tap an alternative F-stat measure} to further improve the efficiency with comparable accuracy (Sec.~\ref{trade_off}). 
		
		\zyw{The experiments} show that each of these three techniques is individually effective and they together brings up to \textit{three orders of magnitude of speedup} to $\Pi_{FedSS-B}$. \zyw{Furthermore}, compared to our well-optimized \zyw{IG-based protocol}, the F-stat-based method in Sec.~\ref{trade_off} gives 1.04-8.31x of speedup while guaranteeing \textit{no statistical difference} in TSC accuracy. (Sec.~\ref{exp:efficiency}).

	\section{Shapelet Distance Acceleration}\label{Distance_Acceleration}
	In $\Pi_{FedSS-B}$, the distance between a candidate $S$ and the $M-M_0$ samples $T^i_j(\forall j, i \neq 0)$ is straightforwardly computed using MPC. Based on Eq.~\ref{eq:shapelet_dis}, the interactive operations used include: 
	
	\begin{enumerate}
		\item $L_S(N - L_S + 1)(M-M_0)$ pairwise multiplications for the Euclidean norm;
		\item $(N - L_S + 1)(M - M_0)$ times of both comparisons and assignments \zyw{to find} the minimum. 
	\end{enumerate}

	Because the shapelet length $L_S$ is up to $N$ where $N >> 1$, the efficiency is dominated by the Euclidean norm. Thus, {it is necessary to accelerate the distance computation by improving the efficiency of the bottlenecked \textit{Euclidean norm}}.

	The work of~\cite{ioannidis2002secure} proposes a two-party dot-product protocol (as Algorithm~\ref{alg:original_dp}), \zyw{which we find to be efficient both in} computation and communication for the calculation between one vector and many others. It motivates us that we can compute the Euclidean norm between a candidate $S$ and the total $(N - L_S + 1)(M-M_0)$ subseries of the participants using the dot-product protocol. Unfortunately, the protocol in Algorithm~\ref{alg:original_dp} (denoted as the raw protocol) has weak security that violates Definition~\ref{definition:security}. 
	
	To overcome the limitation, we propose ${\Pi_{DP}}$, \textit{a secure dot-product protocol} that enhances the raw protocol using MPC. We prove that this novel protocol not only follows the security of Definition~\ref{definition:security}, but also effectively accelerates the Euclidean norm. We describe the acceleration method in Sec.~\ref{DP-based-ED}. Then, we analyze the security deficiency of the raw protocol and propose our $\Pi_{DP}$ in Sec.~\ref{security_analysis_and_enhancement}.

	\begin{algorithm}[t]
		\caption{The Two-Party Dot-Product Protocol of~\cite{ioannidis2002secure} }
		\label{alg:original_dp}
		\SetKwData{Or}{\textbf{or}}
		\DontPrintSemicolon
		\KwIn {$\boldsymbol{x} \in \mathbb{R}^L$ from $P_0$; $\boldsymbol{y} \in \mathbb{R}^L$  from $P_i$ ($i \in \{1,\ldots,n-1\}$)}
		\KwOut {$\beta$ to $P_0$ and $\alpha$ to $P_i$, satisfying $\beta - \alpha = \boldsymbol{x}^T \cdot \boldsymbol{y}$}
		
		
		\textbf{Party $\boldsymbol{P_0}$} randomly chooses $\boldsymbol{Q}$, $r$, $\boldsymbol{f}$, $R_1$, $R_2$, $R_3$, $\mathbf{x}_i$ ($i \in \{1,\ldots, d\}, i \neq r$) and creates $\mathbf{X}$. Then, it computes $b $, $\boldsymbol{U}$, $\boldsymbol{c}$, $\boldsymbol{g}$, and sends $\boldsymbol{U}$, $\boldsymbol{c}$, $\boldsymbol{g}$ to $P_i$ \;
		\textbf{Party $\boldsymbol{P_i}$} randomly chooses $\alpha$, creates $\boldsymbol{y}^\prime$, computes and sends to $P_0$ the value $a$, $h$ \;
		\textbf{Party $\boldsymbol{P_0}$} computes $\beta$ \;
		
	\end{algorithm}

	\subsection{Dot-Product-based Euclidean Norm}\label{DP-based-ED}
	
	Given two vectors $\boldsymbol{x} \in \mathbb{R}^L$ from $P_0$ and $\boldsymbol{y} \in \mathbb{R}^L$ from $P_i$, Algorithm~\ref{alg:original_dp} computes the dot-product $\boldsymbol{x}\cdot\boldsymbol{y}$ as follows. 
	
	\noindent
	\textbf{(i)} $P_0$ chooses a random matrix $\boldsymbol{Q} \in \mathbb{R}^{d \times d}  (d \ge 2)$, a random value $r \in \{1, \ldots, d\}$, a random vector $\boldsymbol{f} \in \mathbb{R}^{L+1}$ and three random values $R_1, R_2, R_3$, and selects $s - 1$ random vectors 
	\begin{equation}
		\mathbf{x}_i \in \mathbb{R}^{L+1},\ i \in \{1,\ldots, r-1, r+1, \ldots, d\}.
	\end{equation}
	Next, it creates a matrix 
	\begin{equation}
		\mathbf{X} \in \mathbb{R}^{d \times (L + 1)},
	\end{equation}
	whose $i$-th row ($i \neq r $) is $\mathbf{x}_i$ and $r$-th row is
	\begin{equation}
		\boldsymbol{x}^{\prime T} = (x_1, \ldots, x_L, 1).
	\end{equation}
	Then, $P_0$ locally computes 
	\begin{equation}
			b = \sum_{j=1}^d \boldsymbol{Q}_{j,r},
	\end{equation}
	\begin{equation}
		\boldsymbol{U} = \boldsymbol{Q} \cdot \boldsymbol{X},
	\end{equation}
	\begin{equation}
		\boldsymbol{c} = \sum_{i \in \{1,\ldots,d\}, i \neq r}(\boldsymbol{x}_i^T \cdot \sum_{j=1}^{d}\boldsymbol{Q}_{j,i}) + R_1R_2\boldsymbol{f}^T,
	\end{equation}	
	\begin{equation}	
		\boldsymbol{g} = R_1R_3\boldsymbol{f}.
	\end{equation}
	Finally, it sends $\boldsymbol{U}$, $\boldsymbol{c}$, $\boldsymbol{g}$ to $P_i$ (Line 1);

	\noindent
	\textbf{(ii)} $P_i$ chooses a random value $\alpha$ to generate 
	\begin{equation}
		\boldsymbol{y}^\prime = (y_1, \ldots, y_L, \alpha)^T.
	\end{equation}
	Next, it computes and sends to $P_0$ two scalars $a$ and $h$ (Line 2), as:
	\begin{equation}
	a = \sum_{j=1}^{d}\boldsymbol{U}_{j}\cdot\boldsymbol{y}^\prime - \boldsymbol{c}\cdot\boldsymbol{y}^\prime,
	\end{equation}
	\begin{equation}
	h = \boldsymbol{g}^T\cdot\boldsymbol{y}^\prime.
	\end{equation}
	
	\noindent
	\textbf{(iii)} $P_0$ locally computes $\beta$ (Line 3) as: 
	\begin{equation}
		\beta = \frac{a}{b} + \frac{hR_2}{bR_3}.
	\end{equation}
	
	Given $\beta$ and $\alpha$, the result satisfies $\boldsymbol{x}^T\cdot\boldsymbol{y} = \beta - \alpha$.


	The \textit{Euclidean norm computation} in our federated shapelet search can benefit from the above protocol, since each  $||S, T^i_j[p, L_S]||^2$  can be represented as
	\begin{equation}
	\begin{split}
	||S, T^i_j[p, L_S]||^2 =& \sum_{p^\prime=1}^{L_S} (s_{p^\prime})^2 + \sum_{p^\prime=1}^{L_S} (t_{p^\prime+p-1})^2 \\&+ 2\sum_{p^\prime=1}^{L_S} s_{p^\prime}t_{p^\prime+p-1},
	\end{split}\label{eq:Euclidean_split}
	\end{equation}
	where the term
	\begin{equation}
		z = \sum_{p^\prime=1}^{L_S} s_{p^\prime}t_{p^\prime+p-1} = \boldsymbol{S}^T\cdot\boldsymbol{T}^i_j[p,L_S]\label{eq:Euclidean_DP}
	\end{equation}
	can be computed by $P_0$ and $P_i$ jointly executing the protocol to get $\beta$ and $\alpha$, respectively. The terms $\sum_{p^\prime=1}^{L_S} (s_{p^\prime})^2$ and $\sum_{p^\prime=1}^{L_S} (t_{p^\prime+p-1})^2$ can be locally computed by the two parties. To this end, all parties aggregate the three terms in secret shares using non-interactive secure addition. 
	
	Using the above dot-product protocol, the
	total communication cost for the $(N-L_S+1)(M-M_0)$ Euclidean norm between $S$ and the subseries of the participants is reduced from $O(L_S(N-L_S+1)(M-M_0))$ to $O(L_S) + O((N - L_S + 1)(M-M_0))$.

\zy{Note that following Eq.~\ref{eq:Euclidean_split}, the key to computing $d_{T^i_j,S}$ is to securely compute Eq.~\ref{eq:Euclidean_DP} for $p$ from $1$ to $N-L_S+1$, which is equivalent to the standard 1-D convolution operation widely used in deep learning~\cite{lecun2015deep-DL-Book}. Although federated convolution computation has been extensively studied in recent years~\cite{bian2021apas-conv1,lee2022low-conv2,rivinius2023convolutions-conv3}, existing methods focus on the acceleration of many convolutions between a large number of inputs (i.e., data and filters) of fixed sizes by packing or parallelism, which corresponds to the computational workload of the well-known convolutional neural networks~\cite{lecun2015deep-DL-Book}.} 

\zy{However, in the shapelet search scenario, the shapelet candidates have various and commonly different lengths, which can hardly be dealt with using those federated convolution protocols. Furthermore, unlike general MPC scenarios where the input is in secret shares, our input vectors, that is, the time series $\boldsymbol{T}^i_j$ and the candidate $\boldsymbol{S}$, are held locally by the two parties. This special case allows us to design a secure and efficient dot-product protocol without the costly offline pre-processing required by the aforementioned approaches. We will elaborate on our novel protocol in the next section.}
 

	\begin{algorithm}[t]
		\caption{Secure Dot-Product Protocol $\Pi_{DP}$ (Ours)}
		\label{alg:secure_dp}
		\SetKwData{Or}{\textbf{or}}
		\DontPrintSemicolon
		\KwIn {$\boldsymbol{x} \in \mathbb{R}^L$  from $P_0$; $\boldsymbol{y} \in \mathbb{R}^L$ from $P_i$ ($i \in \{1,\ldots,n-1\}$)}
		\KwOut {$\langle z \rangle$ secretly shared by all parties, satisfying $z=\boldsymbol{x}^T \cdot \boldsymbol{y}$}
		
		\textbf{Party $\boldsymbol{P_0}$} and \textbf{Party $\boldsymbol{P_i}$} represent each element of their input vectors as \zyw{a} fixed-point number encoded in $\mathbb{Z}_q$ as used in MPC \\
		\textbf{Party $\boldsymbol{P_0}$} independently and randomly chooses each value of $\boldsymbol{Q}$, $\boldsymbol{f}$, $R_1$, $R_2$, $R_3$, $\mathbf{x}_i$ ($i \in \{1,\ldots, d\}, i \neq r$) from $\mathbb{Z}_q$, $r \in {\{1,\ldots, d\}}$, creates $\mathbf{X}$, computes $b$, $\boldsymbol{U}$, $\boldsymbol{c}$, $\boldsymbol{g}$, and sends $\boldsymbol{U}$, $\boldsymbol{c}$, $\boldsymbol{g}$ to $P_i$. \\
		\textbf{Party $\boldsymbol{P_i}$} randomly chooses $\alpha \in \mathbb{Z}_q$, creates $\boldsymbol{y}^\prime$, and computes the value $a$, $h$. Then, \textit{$P_i$ sends only $h$ to $P_0$} \\
		\textbf{All Parties} \textit{jointly compute $\langle z \rangle = \langle \beta \rangle - \langle \alpha \rangle = \langle \frac{1}{b} \rangle\cdot\langle a \rangle + \langle \frac{hR_2}{bR_3} \rangle- \langle \alpha \rangle$ using MPC} \\	
	\end{algorithm}
	
	\subsection{Security Analysis and Enhancement}\label{security_analysis_and_enhancement}
	Although the protocol in Algorithm~\ref{alg:original_dp} benefits the efficiency of the distance computation, {it is unavaliable due to the security issue.}

	\begin{theorem}
		The protocol of Algorithm~\ref{alg:original_dp} is insecure under the security defined in Definition~\ref{definition:security}.
	\end{theorem}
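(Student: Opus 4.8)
The plan is to disprove security in the simulation sense of Definition~\ref{definition:security} by exhibiting an environment that separates the real and ideal interactions with overwhelming advantage. First I would fix the functionality $\mathcal{F}_{DP}$ that Algorithm~\ref{alg:original_dp} is meant to realize: on inputs $\boldsymbol{x}$ from $P_0$ and $\boldsymbol{y}$ from $P_i$, it samples a uniform $\alpha$, hands $\alpha$ to $P_i$ and $\beta = \boldsymbol{x}^T\boldsymbol{y} + \alpha$ to $P_0$, so that $\beta$ and $-\alpha$ form a fresh additive sharing of the dot product. The crucial observation is that, in the ideal world, a corrupted $P_0$ sees only $(\boldsymbol{x}, \beta)$, and since $\alpha$ is uniform and independent of $\boldsymbol{y}$, the value $\beta$ carries no information about $\boldsymbol{y}$. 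I would therefore corrupt $P_0$ and argue that its \emph{real} view leaks a genuine function of $P_i$'s private input $\boldsymbol{y}$ that no simulator can reproduce from $(\boldsymbol{x}, \beta)$ alone.

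The second step is to extract that leak explicitly. In the raw protocol $P_0$ receives \emph{both} scalars $a$ and $h$. Using its own secret randomness $R_1, R_3, \boldsymbol{f}$, party $P_0$ recovers $w := \boldsymbol{f}^T\boldsymbol{y}' = h/(R_1R_3) = \sum_{p} f_p y_p + f_{L+1}\alpha$, and from $a = b\beta - R_1R_2 w$ it also recovers $\beta = \boldsymbol{x}^T\boldsymbol{y} + \alpha$. The point is that the only secret of $P_i$ entering the two messages is the mask $\alpha$, and it can be cancelled: forming $\mu := w - f_{L+1}\beta$ yields $\mu = \sum_{p}(f_p - f_{L+1}x_p)\, y_p$, a linear functional of $\boldsymbol{y}$ whose coefficients are known to $P_0$. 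This is strictly more than the ideal output permits, since $\mu$ constrains $\boldsymbol{y}$ to a known hyperplane whereas $(\boldsymbol{x},\beta)$ reveals nothing about $\boldsymbol{y}$.

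Third, I would turn this leak into a concrete distinguisher. Let $\mathcal{Z}$ fix any inputs $\boldsymbol{x}, \boldsymbol{y}$, run the dummy adversary that forwards $P_0$'s entire view, and then test the algebraic identity $w - f_{L+1}\beta = \sum_{p}(f_p - f_{L+1}x_p)\, y_p$, recomputing $w$ and $\beta$ from the revealed randomness and the messages $a, h$. In the real interaction this identity holds with probability $1$. In the ideal interaction the simulator $\mathcal{S}$ must output a full view, including some $\boldsymbol{f}$ and messages whose induced $w$ satisfies the identity; but $\mathcal{S}$ knows only $(\boldsymbol{x}, \beta)$ with $\beta$ independent of $\boldsymbol{y}$, so matching the right-hand side, which depends on $\boldsymbol{y}$, forces it to guess a field element and succeeds with only negligible probability (at most $1/q$ over $\mathbb{Z}_q$, or probability zero over the reals). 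Hence the distinguishing advantage is non-negligible, contradicting Definition~\ref{definition:security}.

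The part I expect to be most delicate is making the ``extra information'' claim rigorous rather than merely intuitive: I must confirm that $\alpha$ is the \emph{only} component of $P_i$'s secrecy appearing in $a$ and $h$, so that its cancellation in $\mu$ leaves a quantity determined by $\boldsymbol{y}$ and $P_0$'s own randomness alone, and I must phrase the environment's test so that it is satisfiable in the real world for every choice of $P_0$'s random $\boldsymbol{f}$ yet unsatisfiable (except negligibly) by any $\boldsymbol{y}$-oblivious simulator. This analysis also pinpoints why the fix in $\Pi_{DP}$ (Algorithm~\ref{alg:secure_dp}) works: sending only $h$ keeps $w$ masked by $\alpha$, and deferring the reconstruction of $\beta$ to MPC prevents $P_0$ from ever holding the two equations that jointly expose $\mu$.
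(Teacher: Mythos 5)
Your proposal is correct and follows essentially the same route as the paper's proof: both arguments cancel the single mask $\alpha$ from the two scalars $a$ and $h$ received by $P_0$ (the paper via the combination $g_{L+1}a - wh$, you via $\mu = w - f_{L+1}\beta$) to expose a linear functional of $P_i$'s private vector with coefficients known to $P_0$, whose distribution depends on $\boldsymbol{y}$ and hence cannot be reproduced by any simulator given only the ideal output. Your write-up is in fact somewhat more explicit than the paper's sketch, since you spell out the ideal functionality, the concrete distinguishing environment, and the $1/q$ bound on the simulator's success probability.
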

	\begin{proof}[Proof Sketch]
		Consider an adversary $\mathcal{A}$ that attacks $P_0$. By executing the raw protocol in Algorithm~\ref{alg:original_dp}, $\mathcal{A}$ receives the messages $a$ and $h$. For ease of exposition, we represent the matrix $\boldsymbol{U}$ as a row of the column vectors, i.e., 
		\begin{equation}
			\boldsymbol{U} = (\boldsymbol{u}_1, \ldots, \boldsymbol{u}_{L+1}),
		\end{equation}
		where
		\begin{equation}
			\boldsymbol{u}_{i} = (\boldsymbol{U}_{1,i},\ldots,\boldsymbol{U}_{d,i})^T, i \in \{1,\ldots,L+1\}.
		\end{equation}
		Denote 
		\begin{equation}
			\boldsymbol{c} = (c_1, \ldots, c_{L+1})
		\end{equation}
		and
		\begin{equation}
			\boldsymbol{g}^T = (g_1, \ldots, g_{L+1}).
		\end{equation}
		Recall that $\boldsymbol{v}^\prime = (\boldsymbol{v}^T, \alpha)^T$. Thus, it has
		\begin{equation}
		\begin{split}
		a = \sum_{j=1}^{d}\boldsymbol{U}_j \cdot \boldsymbol{v}^\prime - \boldsymbol{c} \cdot \boldsymbol{v}^\prime = \boldsymbol{e}_1^T \cdot \boldsymbol{v} + w\alpha,
		\end{split}\label{eq:a_alpha}
		\end{equation}
		\begin{equation}
		h = \boldsymbol{g}^T \cdot \boldsymbol{v}^\prime = \boldsymbol{e}_2^T \cdot \boldsymbol{v} + g_{L+1}\alpha,
		\label{eq:h_alpha}
		\end{equation}
		where
		\begin{equation}
			\boldsymbol{e}_1 = (\sum\boldsymbol{u}_{1} + c_{1}, \ldots, \sum\boldsymbol{u}_{L} + c_{L})^T,
		\end{equation}
		\begin{equation}
			w = (\sum\boldsymbol{u}_{L+1} + c_{L+1}),
		\end{equation}
		\begin{equation}
			\boldsymbol{e}_2 = (g_1, \ldots, g_L)^T.
		\end{equation}
		Based on Eq.~\ref{eq:a_alpha}-\ref{eq:h_alpha}, $\mathcal{A}$ knows that 
		\begin{equation}
			g_{L+1}a - wh = (g_{L+1}\boldsymbol{e}_1^T - w\boldsymbol{e}_2^T) \cdot \boldsymbol{v},
		\end{equation}
		where $g_{L+1}\boldsymbol{e}_1^T - w\boldsymbol{e}_2^T$ is created locally in $P_0$. Obviously, the probability distribution of $g_{L+1}a - wh$ \zyw{depends} on the private data $\boldsymbol{v}$, which cannot be simulated by any \zyw{simulator} $\mathcal{S}$.
	\end{proof}
	{\setlength{\parindent}{0cm}
		\textbf{Our novel protocol $\boldsymbol{\Pi_{DP}}$.} To securely achieve the acceleration, we propose $\Pi_{DP}$, \textit{a novel dot-product protocol that follows the security} in Definition~\ref{definition:security}. The basic idea is to enhance the security of Algorithm~\ref{alg:original_dp} using MPC and the finite field arithmetic}. This solution is simple \zyw{yet} rather effective in terms of both security and efficiency.

	$\Pi_{DP}$ is presented in Algorithm~\ref{alg:secure_dp}. It has three differences to the raw protocol: 
	
	\begin{enumerate}
		\item $P_0$ and $P_i$ represent each element of their input vectors as \zyw{a} fixed-point number encoded in $\mathbb{Z}_q$ as used in MPC~\cite{catrina2010secure,catrina2010improved,aly2019benchmarking} (Line 1),  generates each random masking value from the same field $\mathbb{Z}_q$, and compute $b$, $\boldsymbol{U}$, $\boldsymbol{c}$, $\boldsymbol{g}$, and $a$, $h$ in $\mathbb{Z}_q$~\cite{catrina2010secure} (Lines 2-3);   
		
		\item $P_i$ only sends $h$ to $P_0$ but keeps $a$ private (Line 3); 
		
		\item \zyw{The} value $\beta - \alpha$ is jointly computed by all parties using MPC (Line 4).

	\end{enumerate}
	
	Note that the protocol incurs only one additional interactive operation when computing $\langle z \rangle = \langle \frac{1}{b} \rangle \langle a \rangle$. Thus, computing the Euclidean norm between $S$ and the $M-M_0$ series requires still $O(L_S) + O((N - L_S + 1)(M-M_0))$, which is \textit{much smaller} compared to \zyw{directly using} the MPC operations in $\Pi_{FedSS-B}$.

	More importantly, we verify the security guarantee of $\Pi_{DP}$.
	
	\begin{theorem}
		$\Pi_{DP}$ is secure under the security definition defined in Definition~\ref{definition:security}.
	\end{theorem}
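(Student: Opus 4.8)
The plan is to give a standard simulation-based proof in the semi-honest model following Definition~\ref{definition:security}, treating $\Pi_{DP}$ as a two-party message exchange between $P_0$ and $P_i$ followed by the joint MPC computation of Line~4. First I would invoke the sequential composition theorem to work in a hybrid model in which the secure addition, multiplication, and reciprocal operations of Line~4 are replaced by their ideal functionalities; it then suffices to build a simulator for the remaining (non-MPC) messages. Because the functionality $\mathcal{F}$ delivers $z$ in additive secret shares, each corrupted party's output is a single uniformly random element of $\mathbb{Z}_q$ carrying no information about $z$, so the output portion of any party's view is reproduced by sampling a uniform share; in particular, a corrupted party $P_j$ with $j \neq 0, i$ sees only the Line~4 shares and is handled entirely by the hybrid-model MPC simulators. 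It then remains to simulate the protocol messages in the two nontrivial corruption cases.

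First I would treat the corrupted-$P_0$ case, which is exactly the case the enhancement targets. Here $P_0$'s only incoming message is the scalar $h = \boldsymbol{e}_2^T \cdot \boldsymbol{y} + g_{L+1}\alpha$ of Eq.~\ref{eq:h_alpha}. The crucial point is that $P_i$ now withholds $a$, so the attack against the raw protocol of Algorithm~\ref{alg:original_dp} --- forming $g_{L+1}a - wh$ to recover a $P_0$-known linear functional of $\boldsymbol{y}$ --- is no longer available. Working in $\mathbb{Z}_q$, the mask $\alpha$ is drawn uniformly and $g_{L+1}$ is a nonzero field element, so $g_{L+1}\alpha$ is uniform and acts as a one-time pad: $h$ is uniform in $\mathbb{Z}_q$ and statistically independent of $\boldsymbol{y}$. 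The simulator therefore samples $h$ uniformly from $\mathbb{Z}_q$ and calls the hybrid-model simulators for the Line~4 functionalities, yielding a view statistically indistinguishable from the real one.

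Next I would handle the corrupted-$P_i$ case, simulating the messages $\boldsymbol{U}$, $\boldsymbol{c}$, $\boldsymbol{g}$ sent by $P_0$. The simulator runs $P_0$'s code on a dummy input (e.g. the zero vector), drawing $\boldsymbol{Q}$, $\boldsymbol{f}$, $R_1, R_2, R_3$, the filler rows $\mathbf{x}_i$, and the secret index $r$ uniformly from $\mathbb{Z}_q$ exactly as in Line~2. The step I expect to be the main obstacle is proving that the resulting distribution of $(\boldsymbol{U}, \boldsymbol{c}, \boldsymbol{g})$ is independent of the true $\boldsymbol{x}$: the vector $\boldsymbol{g} = R_1 R_3 \boldsymbol{f}$ depends only on $P_0$'s randomness, and $\boldsymbol{c}$ masks its $\boldsymbol{x}$-free summands by the uniform term $R_1 R_2 \boldsymbol{f}^T$, but $\boldsymbol{U} = \boldsymbol{Q}\boldsymbol{X}$ blends the single $\boldsymbol{x}$-bearing row $r$ into the product. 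The claim to make rigorous is that, because $r$ is hidden and the entries of $\boldsymbol{Q}$ are uniform over $\mathbb{Z}_q$, the field product $\boldsymbol{Q}\boldsymbol{X}$ leaks nothing about the content of row $r$; this is precisely where moving from $\mathbb{R}$ to $\mathbb{Z}_q$ pays off, since it upgrades the heuristic real-valued hiding of the original protocol to exact information-theoretic hiding by uniform masks. Combining the two cases produces a simulator for every admissible corruption, and the statistical indistinguishability of the real and ideal views over all environments $\mathcal{Z}$ then establishes the claim.
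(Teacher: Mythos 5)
Your overall strategy is the same as the paper's: reduce to simulating the non-MPC messages (the paper waves at MPC security where you invoke composition, which is the cleaner way to say it), then give one simulator per corruption pattern. Your corrupted-$P_0$ case matches the paper's $\mathcal{S}_0$ essentially verbatim: $h = \boldsymbol{e}_2^T\cdot\boldsymbol{y} + g_{L+1}\alpha$ is one-time-padded by $g_{L+1}\alpha$ with $\alpha$ uniform over $\mathbb{Z}_q$, so the simulator just samples it. (Minor quibble: $g_{L+1}=R_1R_3f_{L+1}$ is nonzero only with overwhelming probability, so the indistinguishability is statistical rather than perfect; neither you nor the paper dwells on this, and it is harmless under Definition~\ref{definition:security}.)

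The one place you stop short is exactly the step you flag as ``the main obstacle'': showing that $\boldsymbol{U}=\boldsymbol{Q}\boldsymbol{X}$ is simulatable without knowledge of $\boldsymbol{x}$. You assert this follows ``because $r$ is hidden and the entries of $\boldsymbol{Q}$ are uniform,'' but that is not the mechanism, and as stated it is not an argument --- secrecy of the index $r$ alone would not help if the product revealed the content of some row. The paper's $\mathcal{S}_i$ closes this by writing out the entries (for $d=2$, $r=1$): $\boldsymbol{U}_{i,j} = \boldsymbol{Q}_{i,r}x_j + \sum_{k\neq r}\boldsymbol{Q}_{i,k}\boldsymbol{x}_{k,j}$, and observing that each entry carries an additive mask built from the \emph{filler} rows --- a product of fresh, independently uniform field elements per entry (the ``unique triplet'' $(\boldsymbol{Q}_{i,1},\boldsymbol{Q}_{i,2},\boldsymbol{x}_{2,j})$) --- so the entries are distributed independently of $\boldsymbol{x}$ and can be reproduced by $\boldsymbol{U}'_{i,j}=\boldsymbol{Q}_{i,k}\boldsymbol{x}_{i,j} \bmod q$. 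In other words, the hiding comes from the additive contribution of the random rows $\boldsymbol{x}_k$, $k\neq r$ (which also requires $d\ge 2$), not from hiding which row is the real one. Your treatment of $\boldsymbol{c}$ and $\boldsymbol{g}$ is correct and matches the paper. To make your proof complete you would need to supply this masking argument for $\boldsymbol{U}$; everything else is in place.
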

	\begin{proof}[Proof Sketch]
		Since the \zyw{used secret}-sharing-based MPC \zyw{is} secure, we focus on the messages beyond it.
		We describe two simulators $\mathcal{S}_0$ and $\mathcal{S}_i$ that simulate the messages of the adversaries for party $P_0$ and $P_i$, respectively.
		
		We first present $\mathcal{S}_0$. Similar to Eq.~\ref{eq:h_alpha}, when receiving the message $h$, the adversary knows  
		\begin{equation}
			h = (\boldsymbol{e}_2^T \cdot \boldsymbol{v} + g_{L+1}\alpha)\ mod\ q.
		\end{equation}
	Since the masking values $\boldsymbol{e}_2^T$, $g_{L+1}$ and $\alpha$ are independently and uniformly sampled from $\mathbb{Z}_q$, the distribution of $h$ is equal to $h^\prime = g_{L+1}\alpha\ mod\ q$. In the ideal interaction, $\mathcal{S}_0$ independently and randomly chooses $\alpha$ and $g_{L+1}$ from $\mathbb{Z}_q$ to compute and send $h^\prime$ to the adversary.  Indeed, the views of the environment in both ideal and real interactions are indistinguishable.
		
		Next, we discuss $\mathcal{S}_i$. By executing $\Pi_{DP}$ in the real interaction, the adversary of $P_i$ receives $\boldsymbol{U}$, $\boldsymbol{c}$, $\boldsymbol{g}$. Both $\boldsymbol{c}$ and $\boldsymbol{g}$ are derived from independent and randomly chosen values. Thus, $\mathcal{S}_i$ can follow the same procedure to compute them. Without loss of generality, we assume $r = 1$ and $d=2$. Then, $\boldsymbol{U} = \boldsymbol{Q} \cdot \boldsymbol{X}$ follows
		\begin{equation}
			\begin{split}
			\left(\begin{array}{cc}
			\boldsymbol{Q}_{1,1}x_1 + \boldsymbol{Q}_{1,2}\boldsymbol{x}_{2,1}\ \ & \boldsymbol{Q}_{2,1}x_1 + \boldsymbol{Q}_{2,2}\boldsymbol{x}_{2,1}\\	
			\ldots\ \  & \ldots\\
			\boldsymbol{Q}_{1,1}x_L + \boldsymbol{Q}_{1,2}\boldsymbol{x}_{2,L}\ \  & \boldsymbol{Q}_{2,1}x_L + \boldsymbol{Q}_{2,2}\boldsymbol{x}_{2,L} \\	
			\boldsymbol{Q}_{1,1} + \boldsymbol{Q}_{1,2}\boldsymbol{x}_{2,L+1}\ \ &
			\boldsymbol{Q}_{2,1} + \boldsymbol{Q}_{2,2}\boldsymbol{x}_{2,L+1}
			\end{array}\right)^T
			\end{split}.
		\end{equation}
		
		Note that we omit the modular operations at each entry for ease of exposition. The value of each entry is masked by a unique triplet, e.g., $(\boldsymbol{Q}_{11}, \boldsymbol{Q}_{12}, \boldsymbol{x}_{21})$ at the entry (1,1).  Because the values of these triplets are independently and randomly chosen from $\mathbb{Z}_q$, the elements of  $\boldsymbol{U}$ are independent and identically distributed. Similar to $\mathcal{S}_0$, $\mathcal{S}_i$ can simulate $\boldsymbol{U}$ by computing $\boldsymbol{U}^\prime$, where
		\begin{equation}
			\boldsymbol{U}^\prime_{i,j} = \boldsymbol{Q}_{i,k}\boldsymbol{x}_{i,j}\ mod\ q,\  k \in \{1,\ldots,d\},
		\end{equation}
		 and sends it along with $\boldsymbol{c}$, $\boldsymbol{g}$ to the adversary. The views of the environment in both ideal and real interaction are identically distributed.
		
		In summary, the simulators achieve the same effect \zyw{as} the adversaries achieve. The security follows.
	\end{proof}
	With the security guarantee, we can integrate $\Pi_{DP}$ into $\Pi_{FedSS-B}$ to accelerate the distance computation. The protocol $\Pi_{DP}$ can also serve as a building block for other applications.

	\section{Quality Measurement Acceleration}\label{measurement_acceleration}
	
	Empirically, evaluating the shapelet quality using IG with the binary strategy (Sec.~\ref{TSC model}) is the {state-of-the-art method in terms of TSC accuracy}. However, computing IG in the FL setting suffers from a {severe efficiency issue}. The reasons are concluded as follows. 
	
	\begin{enumerate}
		\item A large number ($M$) of thresholds will be evaluated for each candidate; 
		
		\item Evaluating different thresholds incurs duplicated interactive operations; 
		
		\item Evaluating one threshold is already inefficient mainly because the required secure division and logarithm operations are expensive (as illustrated in Fig.~\ref{fig:mpc_throughputs}); 
		
		\item The IG pruning strategies lose their efficacy due to the security issue (Sec.~\ref{protocol_bottleneck}). 
	\end{enumerate}

	To consider both accuracy and efficiency, we propose to speed up the quality measurement in $\Pi_{FedSS-B}$ in two aspects: 
	
	\vspace{0.5ex}
	\noindent
	\textbf{O1: Accelerating IG computation.} To benefit from IG in terms of TSC accuracy, we propose a speed-up method to reduce as many interactive operations as possible in computing IG based on \textit{secure sorting} (Sec.~\ref{sorting_based_acceleration}), which tackles the problem in reason 1.

	\noindent
	\textbf{O2: Tapping alternative measures.}  As the problems of 1, 3 and 4 are the \textit{inherent deficiencies} of IG which \zyw{are} difficult to avoid, we propose a trade-off method tailored for the FL setting by tapping other measures that are much more secure-computation-efficient than IG, at the cost of acceptable loss of TSC accuracy  (Sec.~\ref{trade_off}).

	\subsection{Sorting-based IG Acceleration}\label{sorting_based_acceleration}
	
	The straightforward IG computation in Sec.~\ref{protocol_description} is inefficient since it incurs $O(M^2)$ \textit{secure comparisons} for $\langle \boldsymbol{\gamma}_L \rangle$, and $O(M^2)$ \textit{secure multiplications} for $\langle |D_{S,y(S)}^{\tau, L}| \rangle$ and $\langle |D^{\tau, R}_{S,y(S)}| \rangle$. Inspired by~\cite{mueen2011logical} and~\cite{abspoel2020secure}, we propose to {securely reduce the duplicated interactive operations by pre-sorting the secretly shared distances and labels} before computing each $Q_{IG}(S)$.
	
	Assuming $\langle D_S \rangle = \bigcup_{i=0}^{n-1}\{\langle d_{T_j^i, S} \rangle\}_{j=1}^{M_i}$ are arranged in an \textit{ordered sequence}, i.e.,
	\begin{equation}
		\langle D_S^\prime \rangle = \{\langle d_j \rangle\}_{j=1}^M,\label{eq:sorting-a1}
	\end{equation}
	where
	\begin{equation}
		d_{j_1} < d_{j_2}, \forall\ 1 \le j_1 < j_2 \le M.\label{eq:sorting-a2}
	\end{equation}
	 \zyw{In this condition}, for each threshold $\langle \tau \rangle = \langle d_{j} \rangle$, we can get $\boldsymbol{\gamma}_L^\prime$ without using secure comparison, as:
	 \begin{equation}
	 	\boldsymbol{\gamma}_L^\prime = \boldsymbol{\gamma}_{D_S^{\prime \tau, L} \subseteq D_S^\prime}
	 \end{equation}
	  where
	  \begin{equation}
	  	\boldsymbol{\gamma}_L^\prime[j^\prime] = \begin{cases}
	  	1, \quad & j^\prime < j\\
	  	0, \quad & otherwise
	  	\end{cases}
	  \end{equation}
	  Meanwhile, if $\langle \boldsymbol{\gamma}_{c} \rangle (c \in \{1,\ldots,C\})$ is permuted into  $\langle \boldsymbol{\gamma}_{c}^\prime \rangle$ such that for each entry $j^\prime$,  $\langle \boldsymbol{\gamma}_{c}^\prime \rangle$ and $\langle D_S^\prime \rangle$ indicates the same sample, i.e., 
	  \begin{equation}
	  	\langle \boldsymbol{\gamma}_{c}^\prime \rangle[j^\prime] = \langle \boldsymbol{\gamma}^{i}_{TD^{i}_{c} \subseteq TD^{i}}\rangle[j] \iff \langle d_{j^\prime} \rangle = \langle d_{T_j^{i}, S} \rangle, \label{eq:sorting-a3}
	  \end{equation}
	   we can compute the statistics in Eq.~\ref{eq:IG_stats} by replacing $\langle \boldsymbol{\gamma}_L \rangle$, $\langle \boldsymbol{\gamma}_R \rangle$, $\langle \boldsymbol{\gamma}_{c} \rangle$ with $\boldsymbol{\gamma}_L^\prime$, $\boldsymbol{\gamma}_R^\prime = \boldsymbol{1} - \boldsymbol{\gamma}_L^\prime$, $\langle \boldsymbol{\gamma}_{c}^\prime \rangle$, respectively. 
	   Note that the newly produced $\boldsymbol{\gamma}_L^\prime$ is in plaintext thanks to the order of $\langle D_S^\prime \rangle$. Thus, \zyw{only $O(C)$ secure multiplications are required} to compute the statistics in Eq.~\ref{eq:IG_stats} for each threshold, where $C$ is a small constant representing the number of classes.

	Based on the above observation, the key to the acceleration is to \textit{securely sort} the secretly shared \textit{distances} and the \textit{indicating vectors} of the class labels. Note that to satisfy the security in Definition~\ref{definition:security}, no any intermediate information can be disclosed during the federated sorting, including not only the secretly shared values, but also their order. 
	
	Although we can protect each of the values using MPC, the common sorting algorithms, e.g., Quicksort or Merge sort, rely on the \textit{order information} of the sort keys to reduce complexity. As a result, the order information will be disclosed during \zyw{the execution of the algorithm}, which violates the security in Definition~\ref{definition:security}. We take Quicksort as an example to illustrate the leakage of the order information, as shown in Fig.~\ref{fig:example_quicksort}.

	To address the security problem while achieving a complexity smaller than $O(M^2)$, we adopt the sorting network~\cite{batcher1968sorting} to securely sort the distances. Given an input size, the sorting network has a \textit{fixed order of comparison operations}, regardless of the order of the input sequence~\cite{batcher1968sorting}. Therefore, we can protect both the value and the order information by \textit{performing the comparison and \zyw{swapping} operations using the secure comparison and assignment protocols} (see Sec.~\ref{mpc}) respectively. Fig.~\ref{fig:sorting_network} is a running example of combining the sorting network and the MPC protocols.
	
	The distances $\langle {D_S} \rangle$ \zyw{are} taken as the sorting key to permute both $\langle D_S \rangle$ and $\langle \boldsymbol{\gamma}_{c} \rangle$ ($c \in \{1,\ldots,C\}$) consistently. The output corresponds to the assumption in Eq.~\ref{eq:sorting-a1}-\ref{eq:sorting-a2} and~\ref{eq:sorting-a3}. The sorting network takes $O(M\log^2M)$ interactive operations for the input of size $M$~\cite{batcher1968sorting}. Thus, the complexity of computing each $\langle Q_{IG}(S) \rangle$ becomes $O(M\log^2M)$, which is much smaller than the $O(M^2)$ in $\Pi_{FedSS-B}$.
	
	\begin{theorem}
		The sorting-based acceleration is secure under the security definition defined in Definition~\ref{definition:security}.
	\end{theorem}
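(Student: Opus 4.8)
The plan is to establish security via the standard simulation paradigm of Definition~\ref{definition:security}, leveraging the composition of the already-secure MPC building blocks from Section~\ref{mpc}, exactly as in the proof of Theorem~\ref{theorem:fedst_basic_security}. The crucial structural fact I would exploit is that the sorting network of~\cite{batcher1968sorting} is \emph{data-independent}: its sequence of compare-and-swap gates is fixed in advance and determined solely by the publicly known input size $M$, not by the secret distances or labels. This is precisely the property whose absence rendered the pruning strategies insecure in Theorem~\ref{theorem:distance_pruning_security} and Theorem~\ref{theorem:IG_pruning_security}, so I would first make this contrast explicit to motivate why obliviousness holds here.

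First, I would decompose the procedure into its atomic operations. Each gate of the network performs one secure comparison $\langle b \rangle = \langle d_{j_1} \rangle \overset{?}{<} \langle d_{j_2} \rangle$ and then conditionally swaps the two distance keys together with the corresponding entries of every indicating vector $\langle \boldsymbol{\gamma}_c \rangle$ using the secure assignment $\langle z \rangle = \langle b \rangle \cdot \langle x \rangle + (1 - \langle b \rangle) \cdot \langle y \rangle$ of Section~\ref{mpc}. The key observation I would stress is that the comparison bit $\langle b \rangle$ is \emph{never reconstructed}; it stays in secret-shared form and is consumed only inside further multiplications and additions. Consequently, no party ever observes a value whose distribution depends on the ordering of the private data, and the same secret bit drives the permutation of the distances and of the indicating vectors simultaneously, so consistency is automatic rather than something that must be separately enforced and hidden.

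Then I would invoke the sequential composition theorem for semi-honest secure computation: since secure comparison, secure multiplication, and secure addition each securely realize their functionalities, and all intermediate outputs are passed from one sub-protocol to the next purely as secret shares, the composed protocol securely realizes the sorting functionality. Concretely, the simulator $\mathcal{S}$ is built by stitching together the simulators of the individual building blocks along the fixed gate schedule; because that schedule is public and identical in the real and ideal executions, $\mathcal{S}$ can replay it without any knowledge of the inputs, and the simulated view is indistinguishable from the real one.

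I expect the main obstacle to be articulating cleanly why no information leaks through the \emph{pattern} of operations, i.e., formalizing obliviousness. The substantive content of the proof is not any arithmetic but the argument that (i) the gate schedule carries no data dependence and (ii) all comparison outcomes remain hidden in shares, so that the only messages exchanged are those of the underlying secure primitives, for which simulators already exist. Everything else reduces to a routine application of composition.
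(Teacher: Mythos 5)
Your proposal is correct and follows essentially the same route as the paper's proof sketch: the paper likewise reduces security to the data-obliviousness of the sorting network (citing that its compare-and-swap schedule is fixed by the public input size $M$) together with the security of the underlying MPC primitives, so that the argument from Theorem~\ref{theorem:fedst_basic_security} carries over. Your version simply spells out the simulator construction and the composition step that the paper leaves implicit.
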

	\begin{proof}[Proof Sketch]
		The main difference between the acceleration method and the basic protocol for the IG computation is the usage of the sorting network, which is proved to be data-oblivious~\cite{bogdanov2014practical}. Thus, the security of the sorting-based acceleration follows.
	\end{proof}

	\begin{figure}[t]
		\centering
		\subcaptionbox{Example of Quicksort. The elements should be partitioned based on whether they are smaller than the pivot or not. Even performed in secret shares, the process will disclose the order information about the input.  \label{fig:example_quicksort}}{
			\includegraphics[width=.38\linewidth]{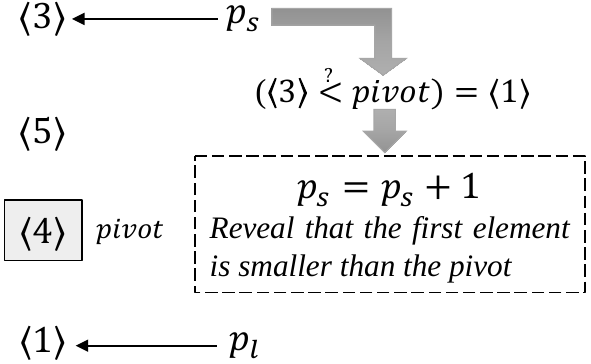}}\hspace{1ex}
		\subcaptionbox{The sorting network of size 4 used to sort the sequence. It contains 5 comparison operations which are determined only by the network size. Thus, by extending the sorting network using the secure comparison and assignment protocols, both the value and the order information can be protected.   \label{fig:sorting_network}}{
		\includegraphics[width=.54\linewidth]{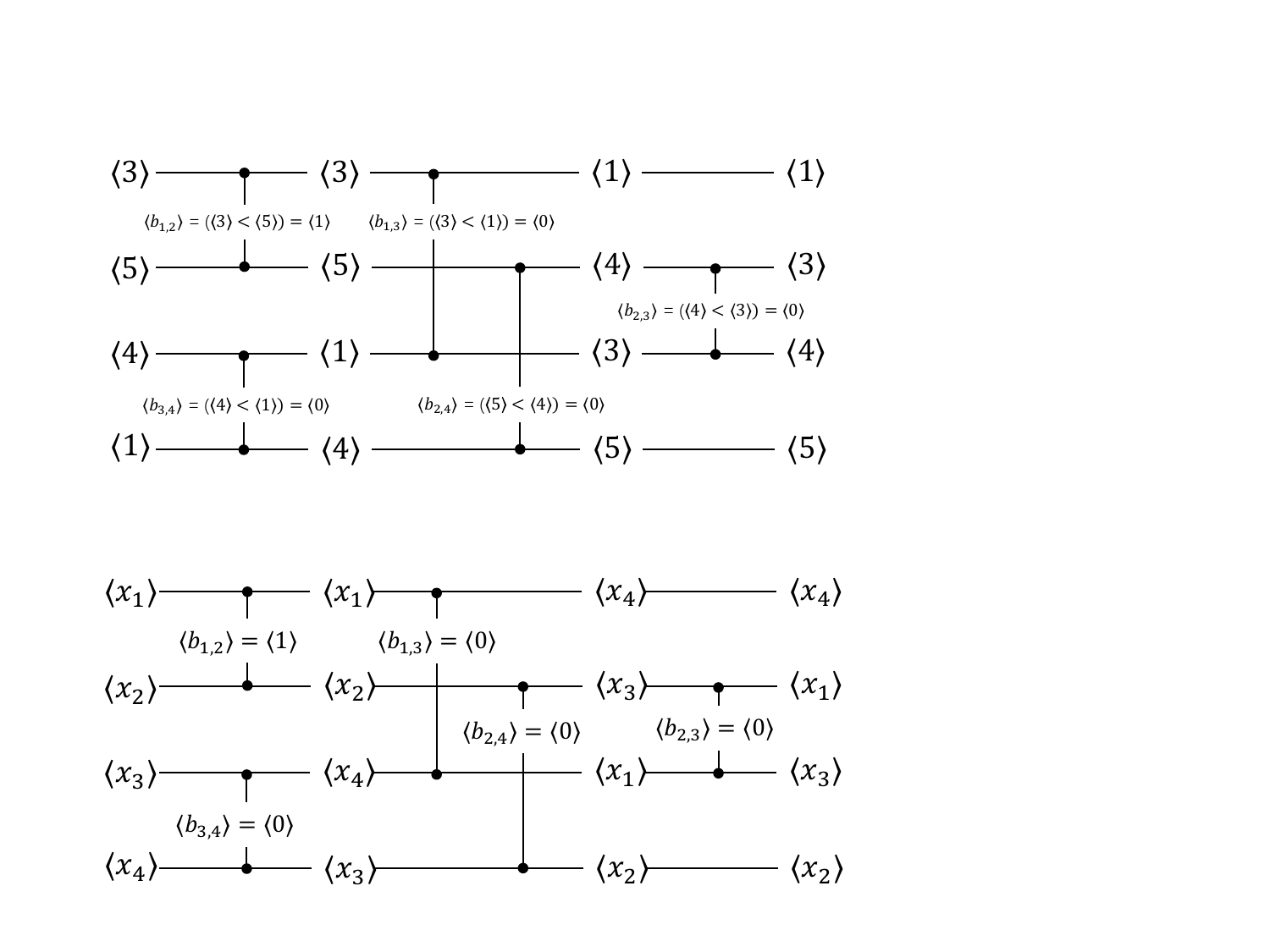}
		}\hspace{2ex}
		\caption{Running examples of sorting the sequence ($\langle 3 \rangle$, $\langle 5 \rangle$, $\langle 4 \rangle$, $\langle 1 \rangle$) using Quicksort and the sorting network.}
	\end{figure}
 
	\subsection{Alternative-Measures-based Trade-off}\label{trade_off}
	
	As discussed at the beginning
	of Sec.~\ref{measurement_acceleration}, {although IG-based method is superior in TSC accuracy, it is naturally difficult to efficiently compute this metric.} To further accelerate the quality measure step, we propose to tap \textit{alternative measures} that can be \zyw{achieved securely and more efficiently} in the FL setting, while guaranteeing comparable TSC accuracy.

	The shapelet quality can be evaluated \zyw{using} other measures, such as Kruskal-Wallis (KW)~\cite{lines2012alternative}, Mood's Median (MM)~\cite{lines2012alternative}, and ANOVA F (F-stat) test~\cite{hills2014classification}. However, these quality measures are less considered in recent works~\cite{bostrom2017binary,bagnall2020tale,middlehurst2021hive}, since they have no significant advantage over IG in terms of both accuracy and efficiency, especially when the binary strategy~\cite{bostrom2017binary} and the IG pruning technique~\cite{mueen2011logical} are integrated. {In the brand new FL scenario, the \textit{expensive communication cost} incurred by interactive operations and the \textit{failure of the pruning} for computing IG remind us to reexamine these alternatives.}

	{\setlength{\parindent}{0cm}
		\textbf{F-stat-based quality measurement.} As shown in~\cite{hills2014classification}, using F-stat for TSC slightly outperforms the methods with KW and MM in terms of accuracy. More essentially, F-stat performs with $O(M)$ secure multiplications and $C+1$ secure divisions in the FL setting, while both KW and MM require $O(M\log^2M)$ secure comparison and assignment operations because they rely on secure sorting, and they also need $C$ times of divisions. Thus, we choose F-stat as \zyw{an} alternative measure to achieve the trade-off.
	}

	Given $D_S = \{d_{T_j, S}\}_{j=1}^M$ and  $\{y_{j}\}_{j=1}^M$ where $y_j \in \{c\}_{c=1}^C$, the F-stat is defined as:
	\begin{equation}
	Q_F(S) = \frac{\sum_{c=1}^C (\overline{D}_{S,c} - \overline{D}_S)^2 / (C - 1)}{\sum_{c=1}^C \sum_{y_j = c} (d_{T_j, S} - \overline{D}_{S, c})^2 / (M - C)},
	\label{eq:F}
	\end{equation}
	where $\overline{D}_{S,c} = \frac{\sum_{d \in D_{S,c}} d}{|D_{S, c}|}$ is the mean distance w.r.t. class $c$ with $D_{S, c} = \{d_{T_j, S}|y_j=c\}_{j=1}^M$, and $\overline{D}_S$ is the mean of all distances.

	Similar to the \zyw{secure computation of IG} in Sec.~\ref{protocol_description},  we leverage the \textit{indicating vector} to indicate whether each sample belongs to each of the $C$ classes. Given 
	\begin{equation}
		\langle D_S \rangle = \bigcup_{i=0}^{n-1} \{\langle d_{T_j^i,S} \rangle\}_{j=1}^{M_i},
	\end{equation} 
	and the indicating vector $\langle \boldsymbol{\gamma}_{c} \rangle$ ($ c \in \{1,\ldots,C\}$) as: 
	\begin{equation}
		\langle \boldsymbol{\gamma}_{c} \rangle  =  (\langle \boldsymbol{\gamma}^0_{TD^0_{c} \subseteq TD^0}\rangle, \ldots,\langle \boldsymbol{\gamma}^{n-1}_{TD^{n-1}_{c} \subseteq TD^{n-1}}\rangle), 
	\end{equation}
the parties jointly compute the terms:
\begin{equation}
	\begin{split}
	\langle \overline{D}_{S, c} \rangle &= \frac{\langle \boldsymbol{D_S} \rangle \cdot \langle \boldsymbol{\gamma}_{c} \rangle}{\langle \boldsymbol{\gamma}_{c} \rangle \cdot \boldsymbol{1} },  c \in \{1,\ldots,C\}\\
	\langle \overline{D}_S \rangle &= \frac{\langle \boldsymbol{D_S} \rangle \cdot \boldsymbol{1}}{M}.
	\end{split}
\end{equation}

Next, they jointly compute:
\begin{equation}
	\sum_{y_j = c} (d_{T_j, S} - \overline{D}_{S, c})^2 =  \langle \boldsymbol{d_{c}} \rangle \cdot \langle \boldsymbol{d_{c}} \rangle, c \in \{1,\ldots,C\},
\end{equation}
 where 
 \begin{equation}
 	\langle \boldsymbol{d_{c}} \rangle[j] = \langle \boldsymbol{\gamma}_{c} \rangle[j] \cdot (\langle d_{T_j, S} \rangle - \overline{D}_{S, c}), j \in \{1,\ldots,M\}.
 \end{equation}

Then, the parties can jointly compute $\langle Q_F(S) \rangle$ by Eq.~\ref{eq:F}.
	
	The protocol for $\langle Q_F(S) \rangle$ has a complexity of $O(M)$, while the computation of $\langle Q_{IG}(S) \rangle$ using our optimization in Sec.~\ref{sorting_based_acceleration} still takes $O(M\log^2M)$ interactive operations. Moreover, the empirical evaluation in Sec.~\ref{exp:efficiency} shows that the F-stat-based FedST achieves \zyw{the accuracy comparable to that of} the prior IG-based solution.
	\begin{theorem}
		The F-stat-based shapelet quality measurement is secure under the security definition defined in Definition~\ref{definition:security}.
	\end{theorem}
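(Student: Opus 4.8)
The plan is to prove security through the simulation paradigm of Definition~\ref{definition:security}, relying on two facts: the F-stat protocol is executed entirely over secret shares using the MPC primitives of Section~\ref{mpc}, and the sequence of these primitives is \emph{data-oblivious}. The argument therefore parallels the proofs for $\Pi_{FedSS-B}$ and the sorting-based acceleration, and is actually simpler, since F-stat---unlike IG---requires no comparisons, logarithms, or sorting, and hence contains no potentially value-dependent operation.

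First I would trace every quantity in the computation and confirm it remains secret-shared. The class-conditional means $\langle \overline{D}_{S,c} \rangle$, the global mean $\langle \overline{D}_S \rangle$, the deviation vectors $\langle \boldsymbol{d}_{c} \rangle$, and the final $\langle Q_F(S) \rangle$ are all obtained by combining the shared distances $\langle \boldsymbol{D_S} \rangle$ and the shared indicating vectors $\langle \boldsymbol{\gamma}_{c} \rangle$ via secure addition, multiplication, and division only. No intermediate value is ever reconstructed; in particular the class counts $\langle \boldsymbol{\gamma}_{c} \rangle \cdot \boldsymbol{1}$ and the class sums $\langle \boldsymbol{D_S} \rangle \cdot \langle \boldsymbol{\gamma}_{c} \rangle$ are kept in shares, so grouping by class is realized additively through the indicating vectors rather than by any branch on the labels.

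Second I would establish data-obliviousness explicitly: the circuit of MPC operations for $Q_F(S)$ depends only on the public parameters $M$ and $C$, never on the private distances or labels, so it is a fixed arithmetic circuit over $\mathbb{Z}_q$. Security then follows by composition---each MPC primitive admits a simulator by the guarantees of~\cite{catrina2010secure,catrina2010improved,aly2019benchmarking}, and composing them along the fixed circuit yields a simulator $\mathcal{S}$ whose output is indistinguishable from the real view for every environment, since each exchanged message is merely a uniform share in $\mathbb{Z}_q$ that $\mathcal{S}$ can resample. The only point that genuinely needs checking---the main, though mild, obstacle---is to verify that swapping IG's comparison and logarithm steps for F-stat's division steps introduces no data-dependent disclosure of the kind that broke the pruning strategies in Theorems~\ref{theorem:distance_pruning_security} and~\ref{theorem:IG_pruning_security}; once this obliviousness check passes, the simulation and its composition are routine.
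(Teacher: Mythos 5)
Your proposal is correct and follows essentially the same route as the paper's own proof sketch: the paper likewise argues that the inputs and outputs of the F-stat computation are secret shares and that the MPC operations together with the indicating vectors render the computation data-oblivious, from which security follows. Your version simply fills in the details (the fixed arithmetic circuit over $\mathbb{Z}_q$ parameterized only by $M$ and $C$, and the composition of simulators for the individual MPC primitives) that the paper leaves implicit.
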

	\begin{proof}[Proof Sketch]
		Similar to the IG-based method in Sec.~\ref{protocol_description} and Sec.~\ref{sorting_based_acceleration}, the input and output of the F-stat are both secret shares. The MPC operations and indicating vectors are used to make the computation data-oblivious. The security follows.
	\end{proof}
	
	\section{Experiments}\label{exp}
	
	In this section, we empirically evaluate the effectiveness of the \texttt{FedST} method and the acceleration techniques. 
	
	\subsection{Experimental Setup}\label{exp:setup}
	
	Our experimental setups are as follows:
	
	{
		\setlength{\parindent}{0cm}
		\textbf{Implementation.} \zy{FedST is implemented in Python}\zy{\footnote{\zy{https://github.com/hit-mdc/FedTSC-FedST.}}}. We \zyw{use} the SPDZ library~\cite{keller2020mp} for semi-honest additive-secret-sharing-based MPC. The security parameter is $\kappa = 40$, which ensures that the probability of information leakage, i.e., the quantity in Definition~\ref{definition:security} is less than $2^{-\lambda}$ ($\lambda=\kappa$)~\cite{catrina2010secure,catrina2010improved}.
		
	}

	{
		\setlength{\parindent}{0cm}
		\textbf{Environment.} We build a cross-silo federated learning environment by running parties in isolated 16G RAM and 8 core Platinum 8260 CPUs docker containers installed with Ubuntu 20.04 LTS. The parties communicate with each other through the docker bridge network with 4Gbps bandwidth.
	}
	
	{
		\setlength{\parindent}{0cm}
		\textbf{Datasets.} We use both the \textit{real-world datasets} and the \textit{synthetic datasets} for evaluation at the following two different scales.
	}
	
	To evaluate the effectiveness of \texttt{FedST} framework, we use the popular 117 fixed-length TSC datasets of the UCR Archive \cite{DBLP:journals/corr/abs-1810-07758} that are collected from different types of applications, such as ECG or motion recognition. In the cross-silo and horizontal setting, each business has considerable (but still insufficient) training samples for every class. Thus, we randomly partition the training samples into 3 equal-size subsets to ensure each party has at least two samples for each class. Since there are 20 small datasets that cannot be partitioned as above, we omit them and test on the remaining 97 datasets.
	
	To investigate the effectiveness of the acceleration techniques, we first assess the efficiency improvement of these techniques using the synthetic datasets. Since the secure computation is data-independent, we randomly generate the synthetic datasets of varying parameters. Next, we compare the F-stat to the prior IG measure in terms of both accuracy and  efficiency on the 97 UCR datasets to validate the effectiveness of the trade-off.
	
	{
		\setlength{\parindent}{0cm}
		\textbf{Metrics.} We use the \textit{accuracy} to evaluate the classification \zyw{performance}, which is measured as the number of samples that are correctly predicted over the testing dataset. For efficiency, we measure the \textit{running time} of the protocols in each step.
	}


	%
	%
	%

	\subsection{Effectiveness of the FedST Framework}\label{exp:accuracy}
	
	\textbf{Baselines.} Since the advantage of the shapelet transformation against other TSC methods has been widely shown~\cite{bagnall16bakeoff,bagnall2020tale,middlehurst2021hive}, we focus on investigating the \textit{effectiveness of enabling FL for TSC} in terms of classification accuracy. To achieve this goal, we compare our \texttt{FedST} with the four baselines:
	
	- \texttt{LocalST}: the currently available solution that $P_0$ performs the centralized shapelet transformation with only its own data;
	
	- \texttt{GlobalST}: the ideal solution that $P_0$ uses the data of all parties for centralized shapelet transformation without privacy protection;
	
	- \texttt{LocalS+FedT}: a variant of \texttt{FedST} that $P_0$ executes the shapelet search step locally and collaborates with the participants for \zyw{the} federated data transformation and classifier training;
	
	- \texttt{FedS+LocalT}: a variant of \texttt{FedST} that $P_0$ locally performs data transformation and classifier training using the shapelets found through \zyw{the} federated shapelet search.
	
	Following the centralized setting~\cite{bagnall2020tale,hills2014classification}, we adopt random forest as the classifier over the transformed data for all methods. The candidates are sampled with \zyw{a} length ranging from $min(3, \frac{N}{4})$ to $N$. The candidate set size is $\frac{MN}{2}$. \zy{The number of shapelets $K$ in Algorithm~\ref{alg:fedss_bs} is set to $\min\{\frac{N}{2},200\}$, while we reduce its size to 5 before data transformation and classifier training to improve interpretability. Inspired by~\cite{hills2014classification}, we cluster the $K$ shapelets into 5 groups using an agglomerative hierarchical clustering algorithm with Eq.\ref{eq:shapelet_dis} as the distance metric and select the centroids as the final shapelets.} The prior IG is used for assessing the shapelet quality.

 	\begin{figure}[]
		\centering
		\subcaptionbox{\centering \texttt{FedST} vs \texttt{LocalST} \label{subfig:vs local}}
		{\includegraphics[width=.49\linewidth]{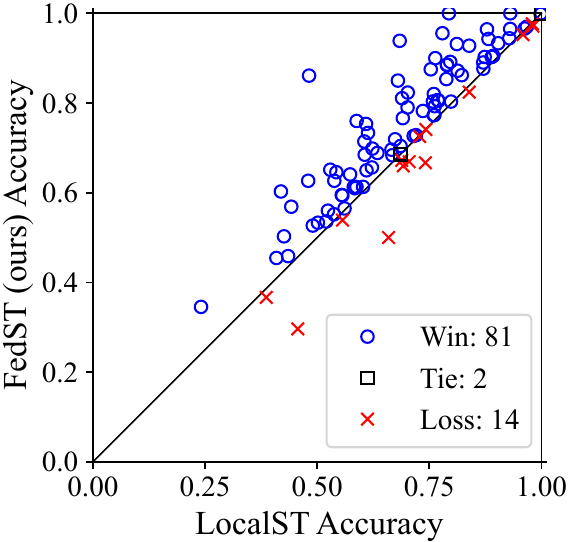}}
		\subcaptionbox{\centering \texttt{FedST} vs \texttt{GlobalST} \label{subfig:vs global}}
		{\includegraphics[width=.49\linewidth]{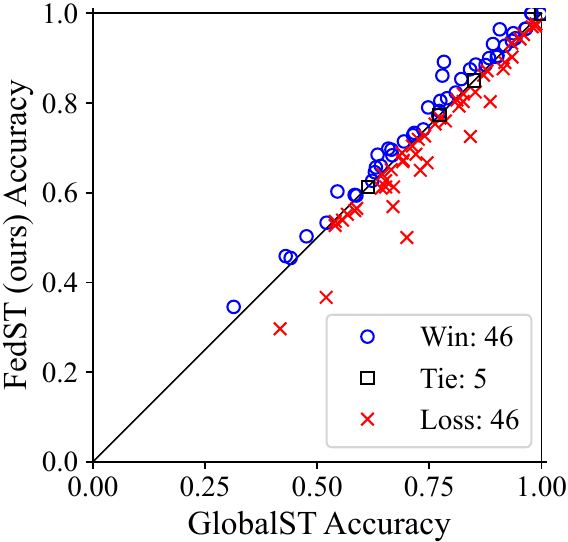}}
		\subcaptionbox{\centering \texttt{FedST} vs \texttt{LocalS+FedT} \label{subfig:vs locals}}
		{\includegraphics[width=.49\linewidth]{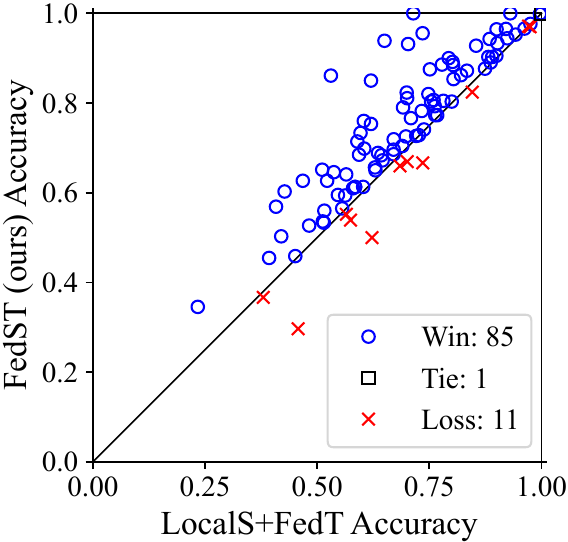}}
		\subcaptionbox{\centering \texttt{FedST} vs \texttt{FedS+LocalT} \label{subfig:vs localt}}
		{\includegraphics[width=.49\linewidth]{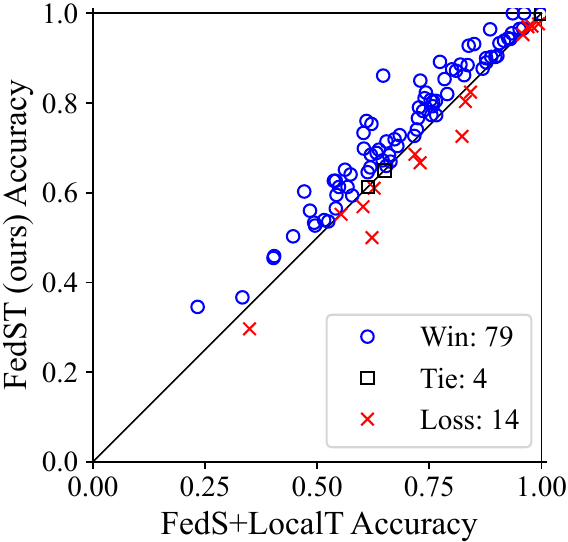}}
		\caption{Pairwise comparison between \texttt{FedST} and the baselines on 97 UCR datasets. The blue/black/red scatters represent the datasets where \texttt{FedST} wins/ties/loses the competitors.}
		\label{fig:1v1}
	\end{figure}
	
\begin{figure}[]
		\centering
		{\includegraphics[width=\linewidth]{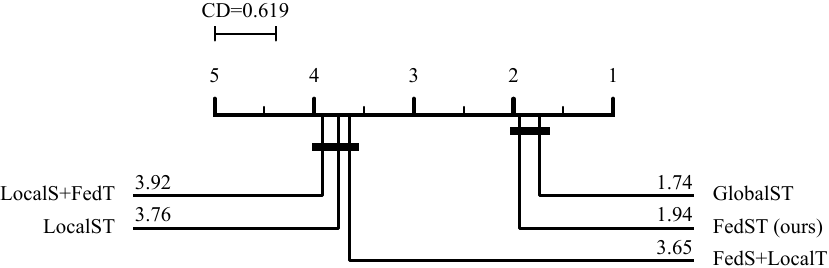}}
		\caption{Critical difference diagram for our \texttt{FedST} and the baselines under the statistical level of 0.05. }
		\label{fig:cd_ablation}
	\end{figure}

	{
		\setlength{\parindent}{0cm}
		\textbf{Pairwise comparison.} Fig.~\ref{fig:1v1} reports the pairwise accuracy comparison of \texttt{FedST} against the competitors.
	}
	
	Fig.~\ref{subfig:vs local} shows that \texttt{FedST} is more accurate than the \texttt{LocalST} on most of the datasets. It indicates the effectiveness of our basic idea of enabling FL to improve the TSC accuracy. Fig.~\ref{subfig:vs global} shows that \texttt{FedST} achieves accuracy close to the non-private \texttt{GlobalST}, which coincides with our analysis in Sec.~\ref{protocol_discussion}. The slight difference can be caused by two reasons. First, the global method samples the shapelets from all data, while in \texttt{FedST} the candidates are \zyw{generated only} by $P_0$ for the interpretability constraints. Second, in secret sharing, the float values are encoded in fixed-point representation for efficiency, which results in the truncation. Fortunately, we show later in Fig.~\ref{fig:cd_ablation} that there is \textit{no statistically significant difference} in accuracy between \texttt{FedST} and \texttt{GlobalST}. From Fig.~\ref{subfig:vs locals} and Fig.~\ref{subfig:vs localt}, we can see that the two variants are much worse than \texttt{FedST}. It means that both stages of \texttt{FedST} are indispensable.

	\begin{figure*}[t]
		\centering
		\subcaptionbox{\centering Time  w.r.t. $M$ \label{subfig:dis_M}}
		{\includegraphics[width=.29\linewidth]{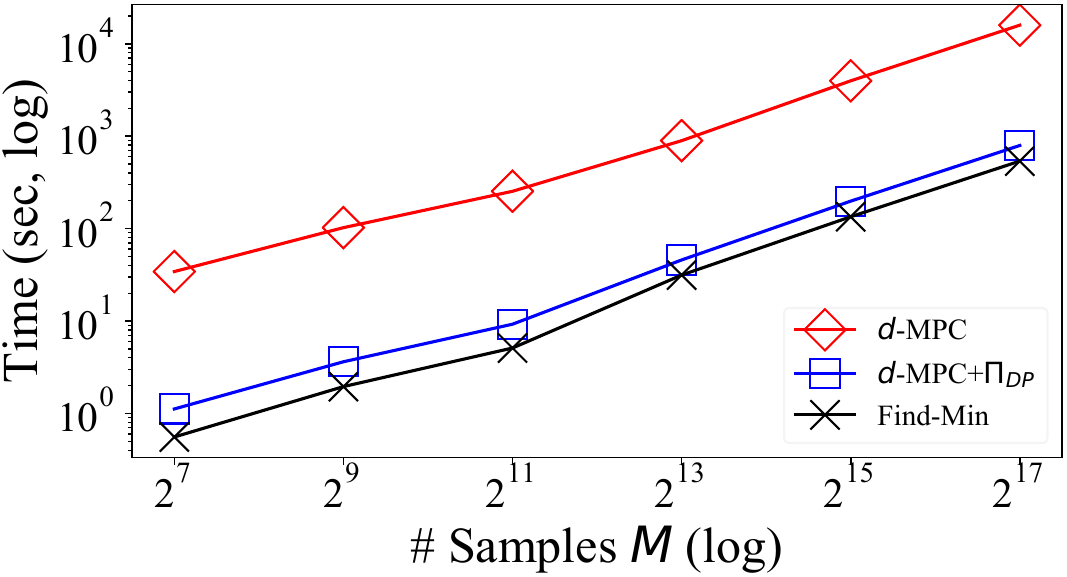}}
		\subcaptionbox{\centering Time  w.r.t. $N$ \label{subfig:dis_N}}
		{\includegraphics[width=.29\linewidth]{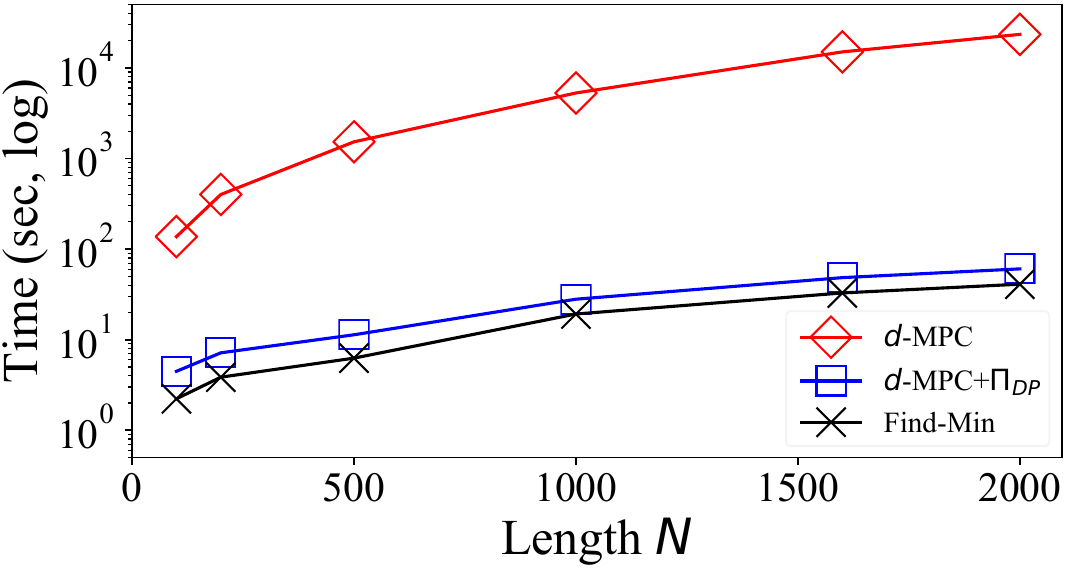}}
		\subcaptionbox{\centering Time  w.r.t. $n$ \label{subfig:dis_n}}
        {\includegraphics[width=.29\linewidth]{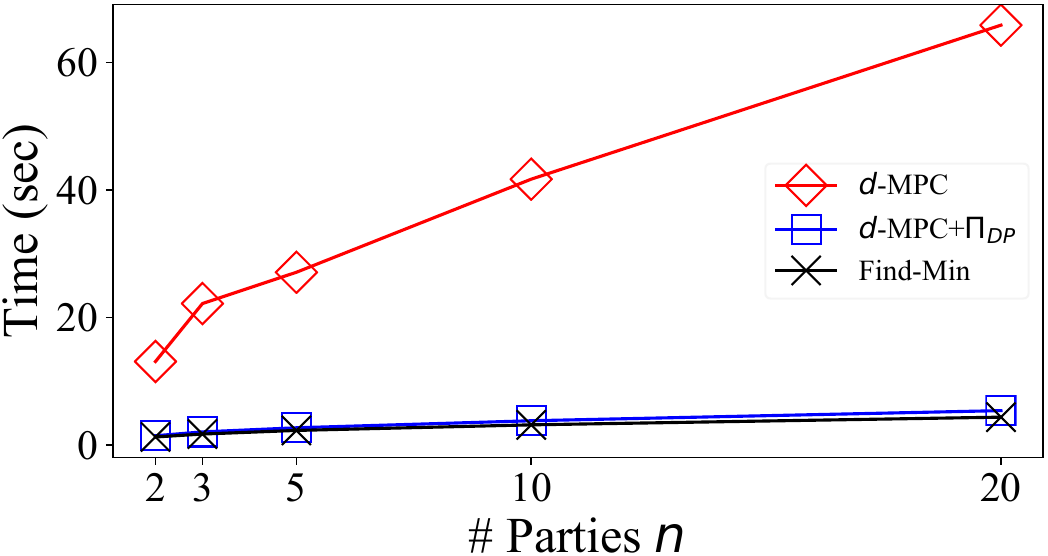}}
		\caption{\zy{Time of distance computation with respect to varying dataset size $M$ (default 512), series length $N$ (default 100), and the number of parties $n$ (default 3). }}
		\label{fig:efficiency_overall_distance}
	\end{figure*}

	\begin{figure}[t]
		\centering
		\subcaptionbox{\centering Time  w.r.t. $M$ \label{subfig:measure_M}}
		{\includegraphics[width=.49\linewidth]{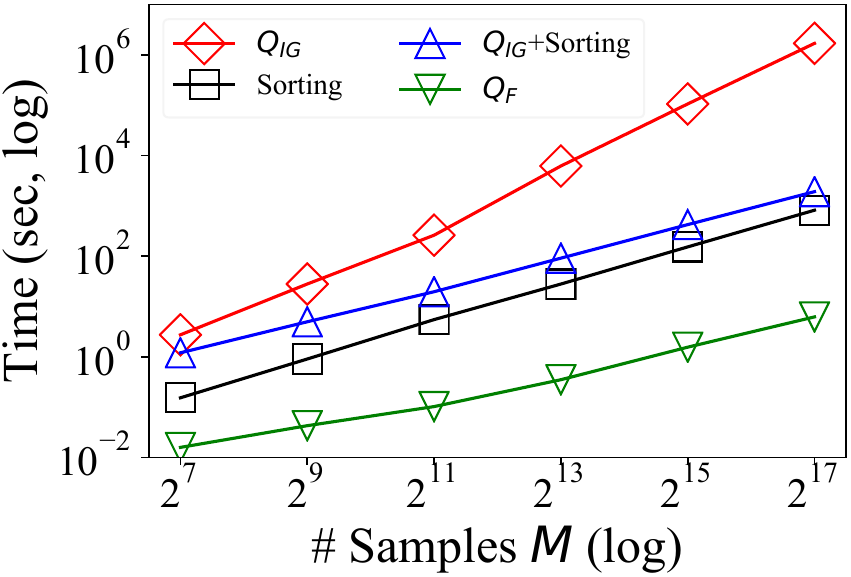}}
		\subcaptionbox{\centering Time  w.r.t. $n$ \label{subfig:measure_n}}
		{\includegraphics[width=.49\linewidth]{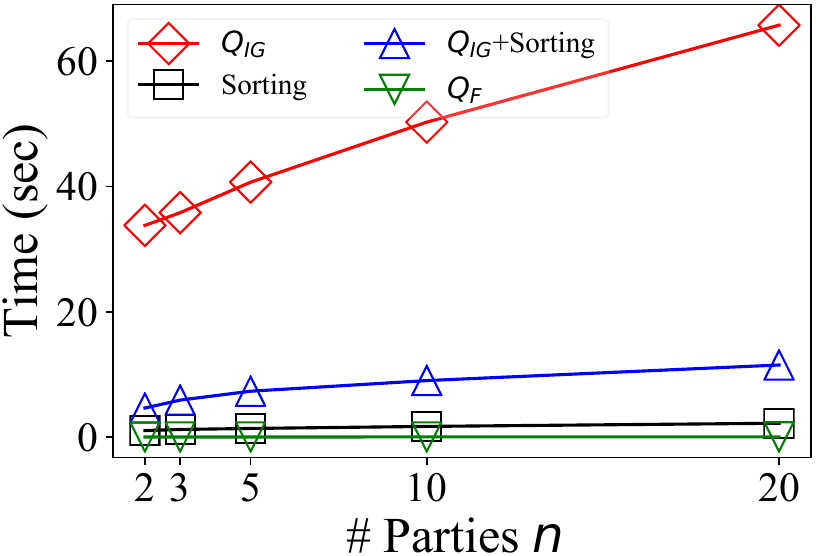}}
		\caption{\zy{Time of quality measurement with respect to varying dataset size $M$ (default 512) and the number of parties $n$ (default 3). }}
		\label{fig:efficiency_overall_quality}
	\end{figure}

	{
		\setlength{\parindent}{0cm}
		\textbf{Multiple comparisons.} We present the critical difference diagram~\cite{demvsar2006statistical} of the methods in Fig.~\ref{fig:cd_ablation}. It reports the \textit{mean ranking of accuracy} among the 97 UCR datasets. The competitors falling in one clique (the bold horizontal line) have no statistical difference, while the opposite for the methods from different cliques. Fig.~\ref{fig:cd_ablation} shows that \texttt{FedST} is \textit{no statistically different} from \texttt{GlobalST} and they both statistically significantly \zyw{outperform} \texttt{LocalST}. 
		It is notable that the variant conducting only local shapelet search (\texttt{LocalS+FedT}), even using all parties' data for transformation, is slightly inferior to \texttt{LocalST}. The reason could be that the locally selected shapelets \zyw{are of} poor quality due to the lack of training data, which may cause the transformed data to be more misleading to degrade the accuracy. In comparison, the variant \texttt{FedS+LocalT} performs better than \texttt{LocalST}, because the shapelet quality is improved by FL with more training data used for shapelet search. Both variants are much inferior to \texttt{FedST}, which indicates the positive effect of FL for both stages.
		%
	}

	\subsection{Effectiveness of the Acceleration Techniques}\label{exp:efficiency}
	
	\textbf{Efficiency improvement.} To assess the effectiveness of the proposed acceleration approaches, we first investigate their \textit{efficiency improvement} using the synthetic datasets of varying dataset size ($M$), time series length ($N$), \zyw{the} number of parties ($n$) and candidate set size $|SC|$.  The average length of the shapelet candidates and the number of shapelets ($K$) are fixed \zyw{at} moderate values of $0.6N$ and 200, respectively. Overall, the results in \zy{Fig.~\ref{fig:efficiency_overall_distance}-\ref{fig:efficiency_overall_search}} coincide with our complexity analysis.
	
	{
		\setlength{\parindent}{0.2cm}
		\textbf{1) Distance computation.} \zy{Fig.~\ref{subfig:dis_M}-\ref{subfig:dis_n}} report the time of computing the shapelet distance between a candidate $S$ and all training samples $T^i_j$ w.r.t. $M$, $N$, and $n$.  The time for both $\Pi_{FedSS-B}$ that directly uses MPC (\texttt{\texttt{$d$-MPC}}) and the optimization leveraging the proposed secure dot-product protocol (\texttt{\texttt{$d$-MPC}+$\Pi_{\mathtt{DP}}$}) scale linearly to $M$ and $n$. However, \texttt{\texttt{$d$-MPC}+$\Pi_{\mathtt{DP}}$} can achieve up to 30x of speedup over \texttt{$d$-MPC} for the default $N=100$. The time of \texttt{$d$-MPC} increases more quickly than \texttt{$d$-MPC}+$\Pi_{\mathtt{DP}}$ as $N$ increases, because the complexity of \texttt{$d$-MPC} is quadratic w.r.t. $N$ while our proposed \texttt{$d$-MPC}+$\Pi_{\mathtt{DP}}$ has a linear complexity of interactive operations.
	}
	
	We also show the time \zyw{to find} the minimum Euclidean norm (\texttt{Find-Min}), which is a subroutine of the shapelet distance computation. The results show that \texttt{Find-Min} is much faster than \texttt{$d$-MPC}, which is consistent with our analysis in Sec.~\ref{Distance_Acceleration} that the time of \texttt{$d$-MPC} is dominated by the \textit{Euclidean norm computation}. In comparison, the time of \texttt{$d$-MPC}+$\Pi_{\texttt{DP}}$ is very close to the time of \texttt{Find-Min} because \zyw{the time for the Euclidean norm computation} is substantially reduced (more than 58x speedup) with our $\Pi_{DP}$.
	
	{	\setlength{\parindent}{0.2cm}
		\textbf{2) Quality measurement.}} 
  We show the time of quality measurement for each candidate $S$ with varying $M$ and $n$ in \zy{Fig.~\ref{subfig:measure_M}-\ref{subfig:measure_n}}. Compared to the IG computation in the basic protocol ($\mathtt{Q_{IG}}$), our proposed secure-sorting-based method (\texttt{$\mathtt{Q_{IG}}$}+\texttt{Sorting}) achieves a similar performance when $M$ is small, but the time of $\mathtt{Q_{IG}}$ increases much faster than \texttt{$\mathtt{Q_{IG}}$+Sorting} as $M$ increases, because $\mathtt{Q_{IG}}$ has a quadratic complexity \zyw{with respect to} $M$. In comparison, the time of \texttt{$\mathtt{Q_{IG}}$+Sorting} is dominated by the secure sorting protocol (\texttt{Sorting}), which has a complexity of $O(M\log^2M)$. The optimized \texttt{$\mathtt{Q_{IG}}$+Sorting} is also more scalable to $n$ than $\mathtt{Q_{IG}}$.

	Using F-stat in the quality measurement step ($\mathtt{Q_{F}}$) can achieve more than 65x of speedup over the optimized \texttt{$\mathtt{Q_{IG}}$+Sorting}. It is also noteworthy that $\mathtt{Q_{F}}$ is much faster than \texttt{Sorting} which bottlenecks the time of securely computing the KW and MM, as mentioned in Sec.~\ref{trade_off}. That is why we consider the F-stat for the acceleration.

	{	\setlength{\parindent}{0.2cm}
		\textbf{3) Federated shapelet search.} Finally, we assess the \textit{total running time of the federated shapelet search protocol} with each proposed acceleration technique. The results are reported in \zy{Fig.~\ref{subfig:search_M}-\ref{subfig:search_SC}}.
	}
	
	Overall, an individual $\Pi_{DP}$-based acceleration ($+\Pi_{\mathtt{DP}}$) brings 1.01-73.59x of improvement over $\Pi_{FedSS-B}$.  The sorting-based (\texttt{+Sorting}) technique gives 1.01-96.17x of speedup alone and the F-stat-based method (+$\mathtt{Q_{F}}$) individually achieves 1.01-107.76x of speedup. The combination of these techniques is always more effective than each individual. $\Pi_{DP}$-based and Sorting-based methods together (+$\Pi_{\mathtt{DP}}$+\texttt{Sorting}) contribute 15.12-630.97x of improvement, while the combination of the $\Pi_{DP}$-based and F-stat-based techniques (+$\Pi_{\mathtt{DP}}$+$\mathtt{Q_{F}}$) boosts the protocol efficiency by 32.22-2141.64x.
	
	We notice \zyw{in} \zy{Fig.~\ref{subfig:search_M}} that the time of $\Pi_{FedSS-B}$ is dominated by the distance computation when $M$ is small. In this case, $+\Pi_{\texttt{DP}}$ is more effective. With the increase of $M$, the quality measurement step gradually dominates the efficiency. As a result, the \texttt{+Sorting} and +$\mathtt{Q_{F}}$ play a more important role in acceleration. Similarly, \zy{Fig.~\ref{subfig:search_N}} shows that the efficiency is dominated by the quality measurement when $N$ is small and gradually dominated by the distance computation with $N$ increases. The acceleration techniques for these two steps are always complementary with each other.
	
	It is also worth noting that the time of all competitors is nearly \textit{in direct proportion to} $|\mathcal{SC}|$, as shown in \zy{Fig.~\ref{subfig:search_SC}}. The result is consistent with our analysis in Sec.~\ref{protocol_discussion} that the time for securely \textit{finding the top-$K$ candidates} (Algorithm~\ref{alg:fedss_bs} Line 10), which has a complexity of $O(K\cdot |\mathcal{SC}|)$, is \textit{negligible} compared to the time of distance computation and quality measurement. That is why we mainly dedicate to accelerating these two steps.

	\begin{figure}[t]
		\centering
		\subcaptionbox{\centering Time  w.r.t. $M$ \label{subfig:search_M}}
		{\includegraphics[width=.49\linewidth]{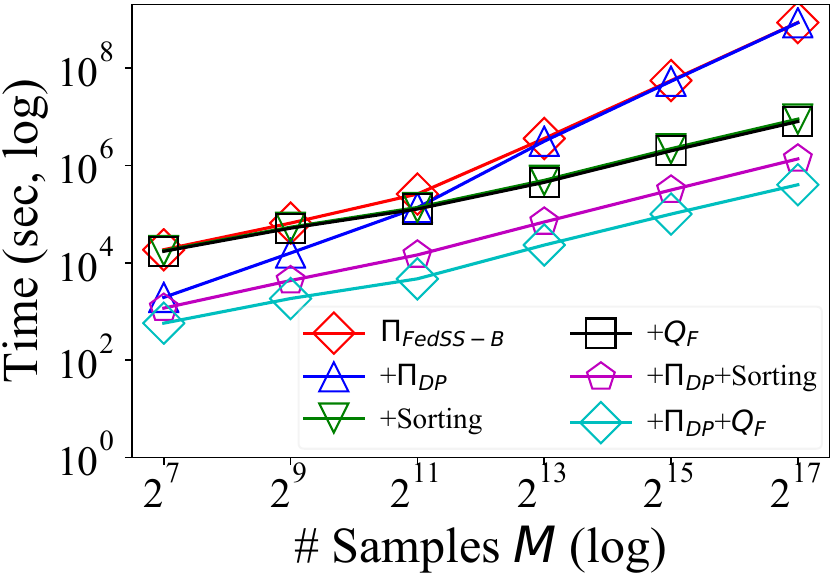}}
		\subcaptionbox{\centering Time  w.r.t. $N$ \label{subfig:search_N}}	
		{\includegraphics[width=.49\linewidth]{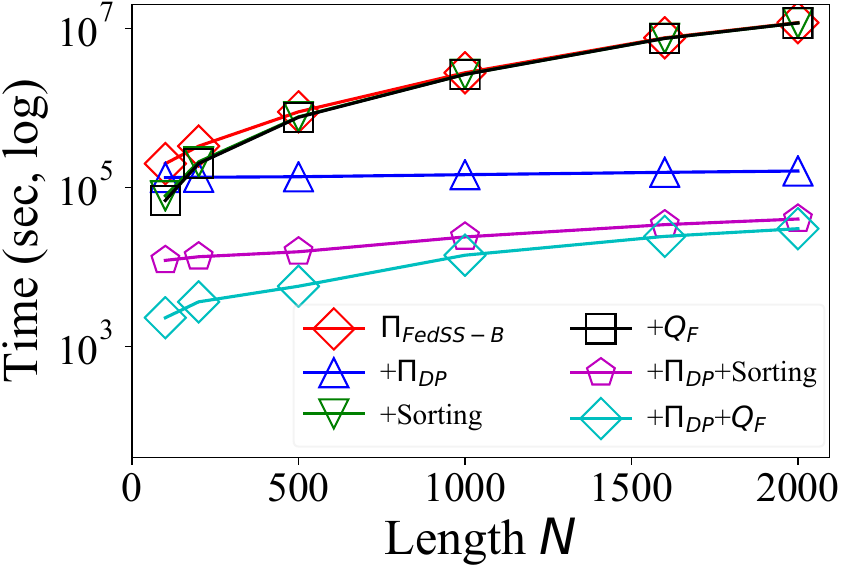}}
		\subcaptionbox{\centering Time  w.r.t. $n$ \label{subfig:search_n}}	
		{\includegraphics[width=.49\linewidth]{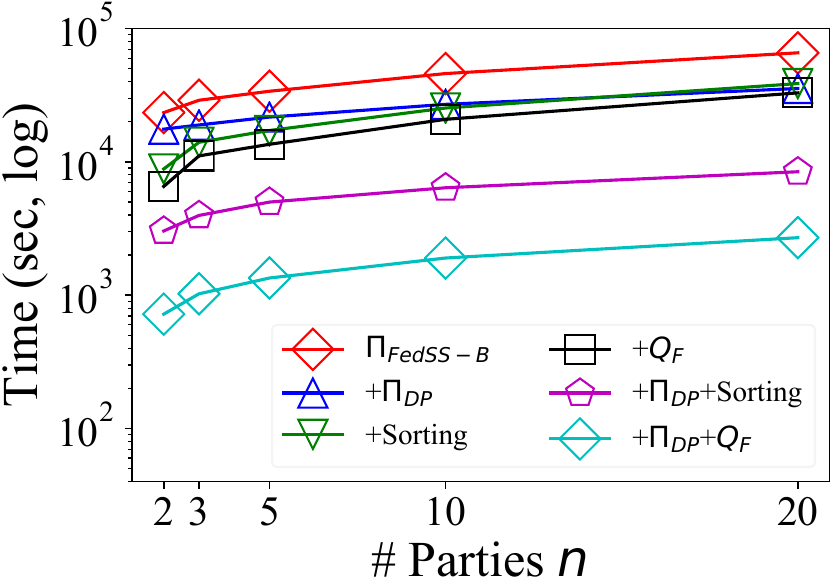}}
		\subcaptionbox{\centering Time  w.r.t. $|\mathcal{SC}|$ \label{subfig:search_SC}}	
		{\includegraphics[width=.49\linewidth]{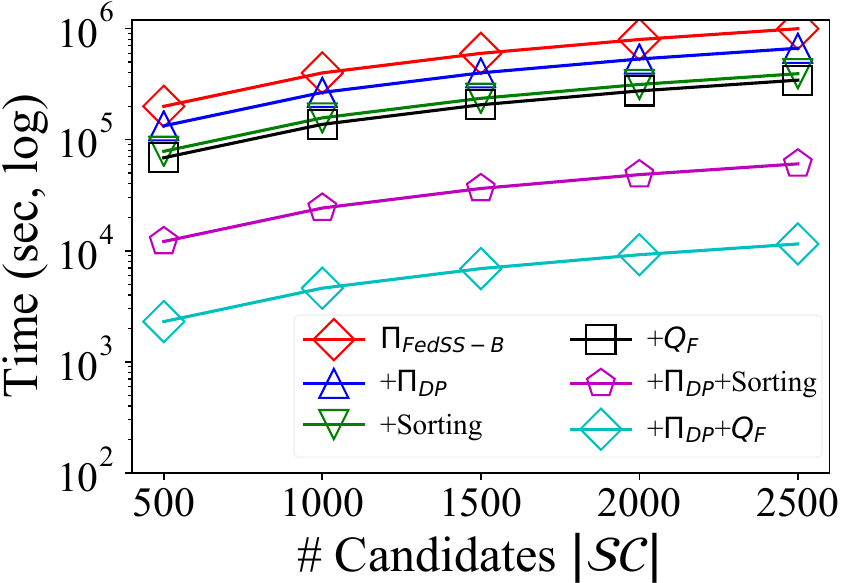}}
		\caption{\zy{Time of federated shapelet search with respect to varying dataset size $M$ (default 512), series length $N$ (default 100), the number of parties $n$ (default 3), and candidate set size $|\mathcal{SC}|$ (default 500). }}
		\label{fig:efficiency_overall_search}
	\end{figure}

\begin{table*}[]
    \centering
     \caption{\zy{Overall shapelet search time of \texttt{FedST} against the straightforward MPC-based solution \texttt{MPC}, and the non-private counterpart \texttt{NP}. The bold indicates the best in the corresponding category. }}
    \begin{tabular}{clrrrrr}
    \toprule
      \multicolumn{2}{c}{\textbf{Method}} & \textbf{Total (h)} & \textbf{Mean (min)} & \textbf{Median (min)} & \textbf{Max. (min)} & \textbf{Min. (min)} \\
    \midrule 
  \multirow{2}*{\makecell{Non-private}} &   NP-$Q_{IG}$ & 135.06 & 83.54 & 2.99 & \textbf{1186.72} & 0.00038\\
  ~ &      NP-$Q_{F}$ & \textbf{131.17} & \textbf{81.13} & \textbf{2.80} & 1191.19 & \textbf{0.00037}\\
     \hline\vspace{-2ex}& & & &\\
 \multirow{3}*{\makecell{\textbf{Private}}} &     MPC & $>9129.46$ & $>5647.09$ & 6428.97 & $>10080$ (1 week) & 0.74 \\
  ~ &   \textbf{FedST-$\boldsymbol{Q_{IG}}$} & 817.39 & 505.61& 148.75 & 5129.43 & 0.25\\
  ~ &     \textbf{FedST-$\boldsymbol{Q_{F}}$} & \textbf{625.91} & \textbf{387.16}& \textbf{60.45} & \textbf{4598.57} & \textbf{0.05}\\
    \bottomrule
    \end{tabular}
    \label{tab:search_time}
\end{table*}
 
	\begin{figure}
		\centering
		{\includegraphics[width=\linewidth]{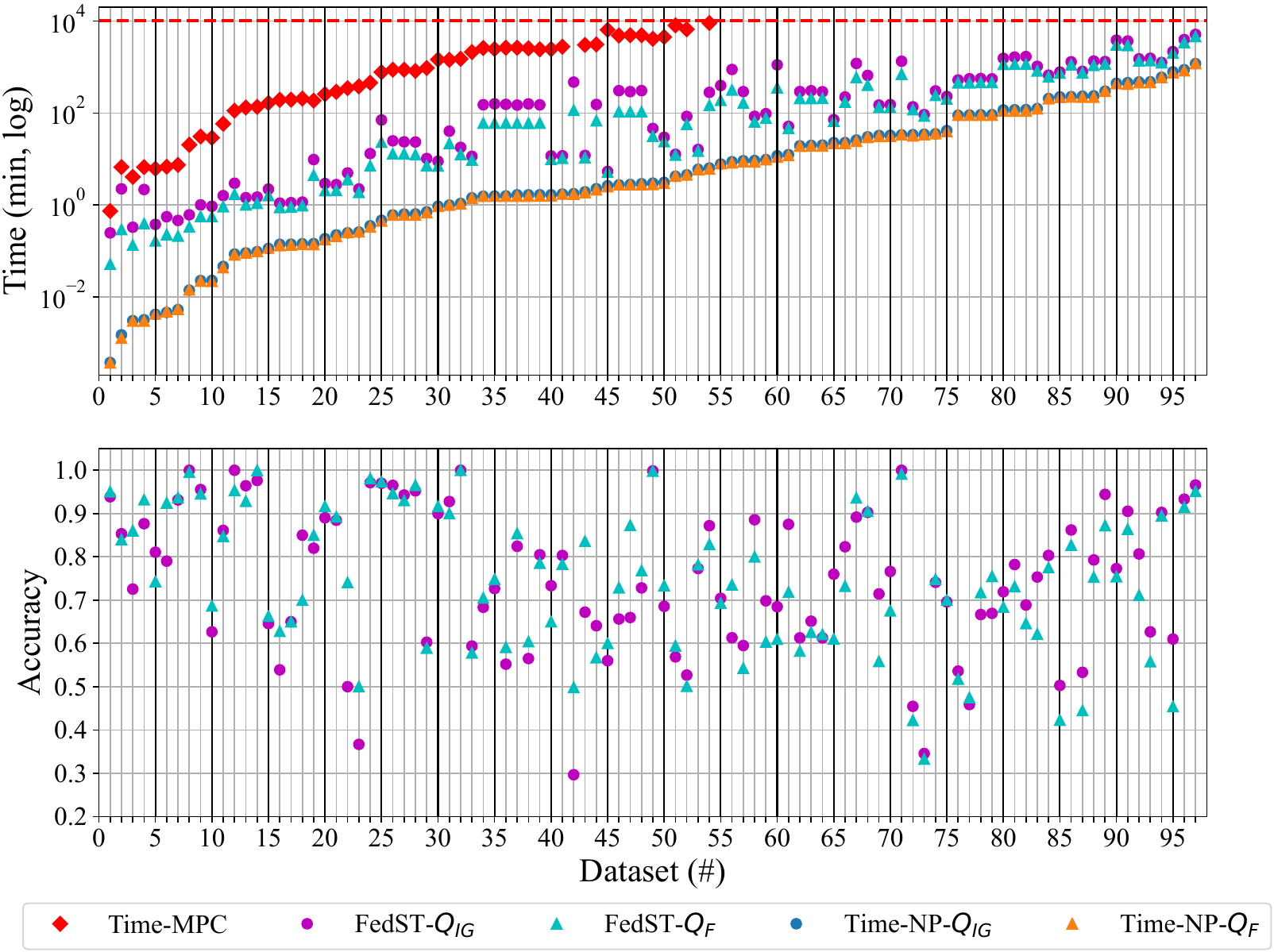}}
		\caption{\zy{Accuracy and federated shapelet search time of \texttt{FedST} using different quality measures. The horizontal red dashed line indicates the maximum running time we restrict, i.e. one week or 10080 minutes. The datasets are sorted according to \texttt{Time-NP-$\mathtt{Q_{IG}}$}. }}
		\label{fig:acc_vs_time}
	\end{figure}

\begin{figure}[]
		\centering
		{\includegraphics[width=\linewidth]{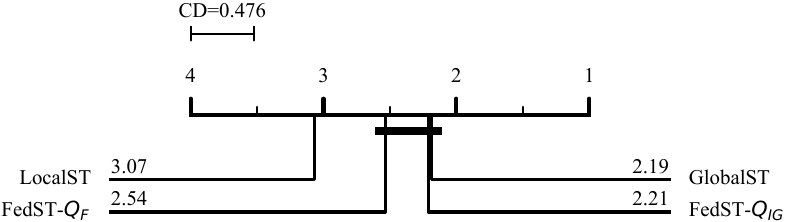}}
		\caption{Critical difference diagram for \texttt{FedST} that uses different quality measures and the two baselines. The statistical level is 0.05.}
		\label{fig:cd_main_accuracy}
	\end{figure}

	\noindent
		\textbf{Effectiveness of the trade-off strategy.} We investigate the effectiveness of the F-stat-based protocol in \textit{trading off TSC accuracy and the protocol efficiency.} Specifically, we evaluate both the accuracy and the federated shapelet search time for the two versions of \texttt{FedST} that adopt either the prior $Q_{IG}$ (\texttt{FedST-$\mathtt{Q_{IG}}$}) or the more efficient $Q_{F}$ (\texttt{FedST-$\mathtt{Q_{F}}$}). The experiments are conducted using 97 UCR datasets with the same setting as Sec.~\ref{exp:accuracy}. Both the $\Pi_{DP}$-based and the sorting-based speedup methods are adopted. \zy{We also provide the search time of the straightforward MPC-based protocol (\texttt{MPC}), and the non-private algorithms where $P_0$ directly uses the data of all parties (\texttt{NP-$\mathtt{Q_{IG}}$} and \texttt{NP-$\mathtt{Q_{F}}$}). They can be seen as the upper and lower bounds of the federated shapelet search time. Since \texttt{MPC} is quite slow, we restrict the maximum running time for a dataset to one week and ignore the cases running out of the time.}

	\zy{As shown in Fig.~\ref{fig:acc_vs_time} top, \texttt{MPC} runs out of time on 45 of the 97 datasets. Both our \texttt{FedST-$\mathtt{Q_{IG}}$} and \texttt{FedST-$\mathtt{Q_{F}}$} are much faster (at least an order of magnitude on average) than \texttt{MPC}. But they are still 6.05x and 4.77x slower than their non-private counterparts, as the cost of privacy protection. It may indicate that there is still room to improve the efficiency of the federated protocol.} \texttt{FedST-$\mathtt{Q_{F}}$} is faster than \texttt{FedST-$\mathtt{Q_{IG}}$} on all 97 datasets. The efficiency improvement is 1.04-8.31x while the average speedup on the 97 datasets is 1.79x.  
 
\texttt{FedST-$\mathtt{Q_{F}}$} is better than \texttt{FedST-$\mathtt{Q_{IG}}$} on 41 of the 97 datasets in terms of accuracy (Fig.~\ref{fig:acc_vs_time} bottom). The average accuracy of \texttt{FedST-$\mathtt{Q_{F}}$} is just $0.5\%$ lower than that of \texttt{FedST-$\mathtt{Q_{IG}}$}. Fig.~\ref{fig:cd_main_accuracy} shows the critical difference diagram for these two methods and the two FL baselines (\texttt{LocalST} and \texttt{GlobalST}). The result indicates that \texttt{FedST-$\mathtt{Q_{F}}$} achieves the same level of accuracy as \texttt{FedST-$\mathtt{Q_{IG}}$} and \texttt{GlobalST}, and is significantly better than \texttt{LocalST}. It indicates that our proposed F-stat-based strategy can effectively improve the efficiency of the federated shapelet search while guaranteeing comparable accuracy to the superior IG-based method.

\begin{figure}
		\centering
		\subcaptionbox{\centering Time  w.r.t. $M$ \label{subfig:hist_time}}
		{\includegraphics[width=.44\linewidth]{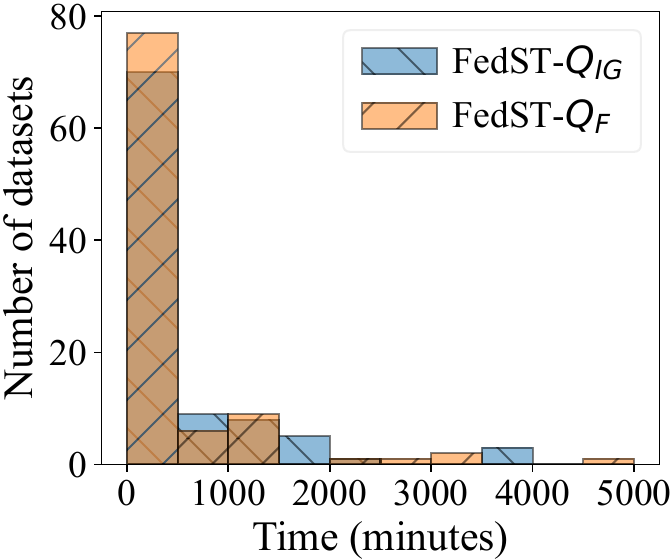}}
		\subcaptionbox{\centering Time  w.r.t. $n$ \label{subfig:hist_acc}}
		{\includegraphics[width=.43\linewidth]{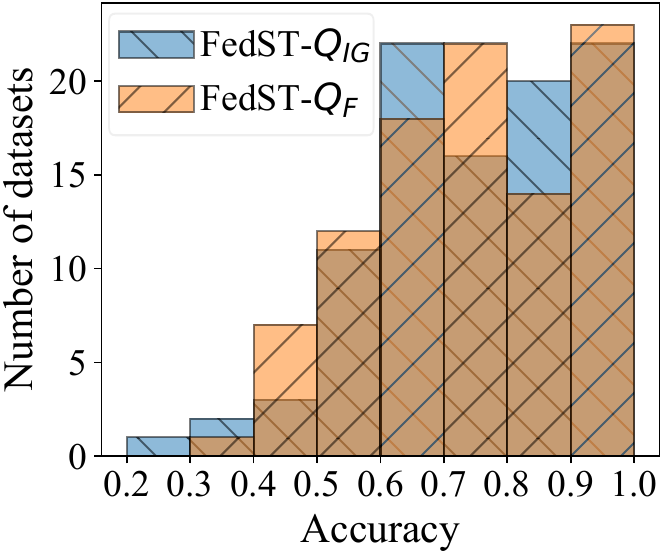}}
		\caption{\zy{Distributions of federated shapelet search time and accuracy of \texttt{FedST} using different quality measures.}}
		\label{fig:hist}
	\end{figure}	

 \zy{To further understand the results, we show in Fig.~\ref{fig:hist} the distributions of the federated shapelet search time and accuracy of our proposed methods. As shown in Fig.~\ref{subfig:hist_time}, both \texttt{FedST-$\mathtt{Q_{IG}}$} and \texttt{FedST-$\mathtt{Q_{F}}$} can finish in 500 minutes in most datasets. From Fig.~\ref{subfig:hist_acc}, we observe that the two variants achieve an accuracy greater than 0.6 in most cases. \texttt{FedST-$\mathtt{Q_{IG}}$} outperforms \texttt{FedST-$\mathtt{Q_{F}}$} in terms of the number of datasets in which the accuracy is greater than 0.8, but \texttt{FedST-$\mathtt{Q_{IG}}$} achieves the accuracy less than 0.4 in more datasets.}
	
\noindent
\textbf{\zy{Comparison of the overall search time.}}
 \zy{Finally, we investigate the overall time of shapelet search over the 97 datasets. As shown in  Table~\ref{tab:search_time},  our \texttt{FedST-$\mathtt{Q_{IG}}$} and \texttt{FedST-$\mathtt{Q_{F}}$} take 505.61 and 387.16 minutes on average per dataset, respectively, while the naive solution \texttt{MPC} requires more than 5647.09 minutes. The mean time of our federated solutions is comparable to the non-private counterparts, but is still several times longer, which may indicate the chance for further improvement. However, it should be noted that the privacy protection always comes at a cost, of either accuracy or efficiency.}

\begin{table*}[]
    \centering
     \caption{\zy{Overall training time of \texttt{FedST} against the standard TSC methods. The best is marked in bold, and the underlined value indicates the second best among the global and federated approaches. }}
    \begin{tabular}{clrrrrr}
    \toprule
       \multicolumn{2}{c}{\textbf{Method}} & \textbf{Total (h)} & \textbf{Mean (min)} & \textbf{Median (min)}& \textbf{Max. (min)} & \textbf{Min. (min)} \\
    \midrule 
    \multirow{3}*{\makecell{Local}} &    STC-Local & 520.60 & 322.02 & 320.28 & 381.40 & 320.02\\   
      ~&  DrCIF-Local & 48.31 & 29.88 & 11.54 &  290.43 & 0.56\\
      ~&  TDE-Local & 10.14 & 6.27 & 1.16 &  156.82 & 0.05\\
    \hline
    \vspace{-2ex}& & & &\\
     \multirow{3}*{\makecell{Global \&\\ Non-private}} &  STC-Global & 529.27 & 327.38  & 321.03 & \textbf{408.76} & 320.04\\
    ~&    DrCIF-Global & \underline{139.45} & \underline{86.26} & \underline{32.89} & \underline{891.62} & 1.33\\
    ~&    TDE-Global & \textbf{74.87} & \textbf{46.31} & \textbf{4.02} & 1420.88 & \textbf{0.06}\\
    \cline{1-1}
    \multirow{2}*{\makecell{\textbf{Federated} \\ \textbf{(Ours)}}} &   \textbf{FedST-$\boldsymbol{Q_{IG}}$} & 822.07 & 508.49& 149.79 & 5138.66 & 0.27\\
    ~&    \textbf{FedST-$\boldsymbol{Q_{F}}$} & 630.58 & 390.05& 61.54 & 4607.80 & \underline{0.07}\\
    \bottomrule
    \end{tabular}
    \label{tab:total_time}
\end{table*}

\subsection{\zy{Comparison with Standard TSC Approaches}}\label{exp:non-private}

\zy{To provide points of reference for our proposed method, we compare \texttt{FedST} with the state-of-the-art centralized TSC approaches, including the standard shapelet transformation method \texttt{STC}~\cite{bagnall2020tale}, the interval-based algorithm \texttt{DrCIF}~\cite{middlehurst2021hive}, and the dictionary-based approach \texttt{TDE}~\cite{tde}. These three algorithms are run by $P_0$ using either its local data or the global data of all parties, with the same hyper-parameter setting as used in HC2~\cite{middlehurst2021hive}.}

\zy{From Fig.~\ref{fig:cd_np}, we observe that the average accuracy ranking of both \texttt{FedST-$\mathtt{Q_{IG}}$} and \texttt{FedST-$\mathtt{Q_{F}}$} is higher than the \texttt{Local} competitors and lower than the \texttt{Global} ones.  \texttt{FedST-$\mathtt{Q_{IG}}$} shows no statistical differences against the standard methods \texttt{STC-Global} and \texttt{TDE-Global}, while it is statistically significant better than all \texttt{Local} variants of the standard TSC approaches. In comparison, the average accuracy ranking of  \texttt{FedST-$\mathtt{Q_{F}}$}, the version that trades efficiency with accuracy, is not significantly better (but still more accurate on average) than the \texttt{Local} competitors. The results further validate the effectiveness of our \texttt{FedST} framework. } 

\zy{Table~\ref{tab:total_time} shows the overall training time of the assessed methods. The \texttt{Global} version is always slower than the \texttt{Local} counterpart for all standard TSC approaches because more samples are used for training. Among the \texttt{Global} and \texttt{Federated} competitors that use the data of all parties, \texttt{TDE-Global} is the fastest on average, but its maximum time among the 97 datasets is longer than that of \texttt{STC-Global} and \texttt{DrCIF-Global}. \texttt{DrCIF-Global} and \texttt{STC-Global} are the second and third fastest on average, respectively. Our two variants of \texttt{FedST}, though little slower than the three \texttt{Global} competitors, ensure the protection of privacy required in the practical FL scenario.}

\begin{figure}[]
		\centering
		{\includegraphics[width=\linewidth]{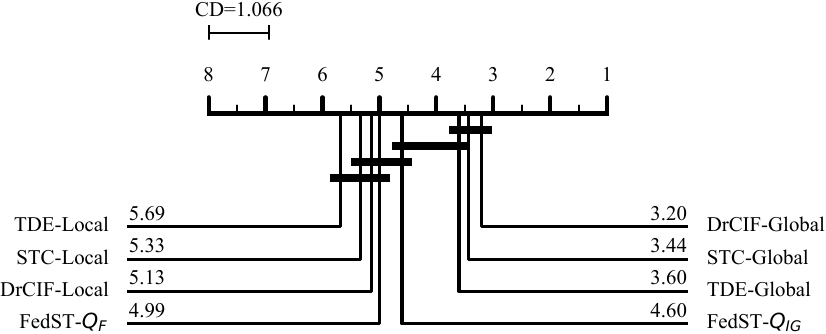}}
		\caption{\zy{Critical difference diagram for \texttt{FedST} that uses different quality measures and the standard non-private TSC methods. The statistical level is 0.05.}}
		\label{fig:cd_np}
	\end{figure}

\zy{To illustrate the training time distributions of the \texttt{Global} and \texttt{Federated} methods over the 97 datasets, we show the results in box plots in Fig.~\ref{fig:time-boxplot}. It is observed that the time of \texttt{STC} is very close for different datasets, because the computation time is limited to a fixed value following the standard setting used in HC2~\cite{middlehurst2021hive}. For most datasets, the time of our \texttt{FedST} variants is shorter than that of \texttt{STC} (because the setting of \texttt{FedST} leads to fewer shapelet candidates in these data sets) and comparable to the time consumed by \texttt{DrCIF}. \texttt{FedST} can be very efficient in some cases and rarely runs for more than one day using any quality measure.}

\zy{It is noteworthy that the interval-based \texttt{DrCIF} and dictionary-based \texttt{TDE} can be complementary with our shapelet-based framework to further improve the accuracy, as is widely validated in HC2~\cite{middlehurst2021hive}. They are also shown to be more efficient than the shapelet-based \texttt{STC} in the centralized scenario. Therefore, we will also consider them for developing the federated TSC solutions in our future work. }

\subsection{\zy{Comparison with FL Methods}}\label{exp:FL}

 \begin{figure}[]
		\centering
		{\includegraphics[width=.7\linewidth]{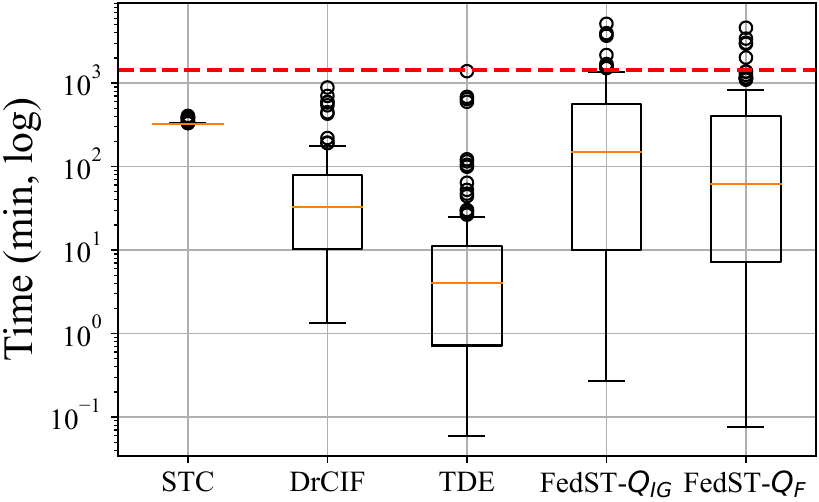}}
		\caption{\zy{Training time comparison between \texttt{FedST} and the standard non-private TSC methods. The red dashed line corresponds to one day, i.e. 1440 minutes.}}
		\label{fig:time-boxplot}
	\end{figure}

\begin{figure}[]
		\centering
		{\includegraphics[width=\linewidth]{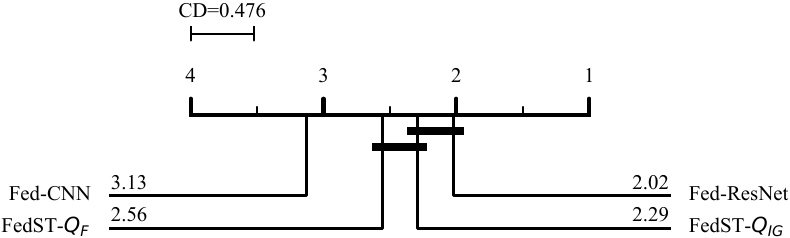}}
		\caption{\zy{Critical difference diagram for \texttt{FedST} that uses different quality measures and the FL methods. The statistical level is 0.05.}}
		\label{fig:cd_fl}
	\end{figure}

\zy{To further investigate the effectiveness of the proposed federated TSC solution, we compare it with two competitive FL baselines that use the popular \texttt{FedAvg} framework~\cite{mcmahan2017communication} to train two representative TSC models: a customized spatial-temporal Convolutional Neural Network~\cite{st-CNN} (denoted as \texttt{Fed-CNN}), and the state-of-the-art deep model ResNet~\cite{ResNet-TSC} (\texttt{Fed-ResNet}). We set the number of epochs for each round at 10 for \texttt{FedAvg} and the other hyper-parameters are the same as~\cite{ismail2019deep} for benchmarking. The accuracy result is shown in Fig.~\ref{fig:cd_fl}.}

\zy{Both \texttt{FedST} variants that use different quality measures significantly outperform \texttt{Fed-CNN}. \texttt{FedST-$\mathtt{Q_{IG}}$} is slightly inferior to \texttt{Fed-ResNet}, but there is no statistically significant difference. \texttt{FedST-$\mathtt{Q_{F}}$}, which trades efficiency with accuracy, achieves a moderate level of accuracy on average compared to the state-of-the-art \texttt{Fed-ResNet}. The result validates the competitive precision of our \texttt{FedST}.}

\zy{It is also noteworthy that our \texttt{FedST}  has two nice properties compared to the deep-learning-based FL approaches. First, \texttt{FedST} adopts several shapelet-based features that are intuitive-to-understand (see Sec.~\ref{exp:interpretability}) for training classifiers, which can be easier to interpret compared to deep neural networks that are generally seen as black boxes~\cite{interpretable-ML-book}. Second, the federated shapelet search algorithm can be run in an anytime fashion to flexibly balance accuracy and efficiency (see Sec.~\ref{exp:flexibility}), which is beneficial for practical utility. Moreover, the generic FL frameworks such as \texttt{FedAvg} and its variants rely on a secure broker that is costly and can disclose sensitive data~\cite{tong2022hu}, while our \texttt{FedST} solution does not need such a broker and is theoretically proven secure.}

	\subsection{Study of Interpretability}\label{exp:interpretability}
	Fig.~\ref{fig:interpretation} demonstrates the interpretability of \texttt{FedST} using a real-world motion classification problem named GunPoint~\cite{DBLP:journals/corr/abs-1810-07758}. The data track the centroid of the actors' right hand for two types of motions. For \zyw{the} ``Gun'' class, they draw a replicate gun from a hip-mounted holster, point it at a target, and then return the gun to the holster and their hands to their sides. For ``No gun (Point)'', the actors have their gun by their sides, point with their index fingers to a target, and then return their hands. The best shapelets of the two classes are shown in Fig.~\ref{subfig:GunPoint_shapelets}, which are derived from the data of the initiator and represent the class-specific features, i.e., the hand tracks of drawing the gun ($S_1$) and putting down the hand ($S_2$). We transform all time series into the distances to these shapelets and visualize the results in Fig.~\ref{subfig:GunPoint_ST}. As can be seen, instead of considering the original time series space which has 150 data points per sample, by using the two shapelets, the classification can be explained with the concise rule that the samples more similar to $S_1$ and distant from $S_2$ belong to class ``Gun'' (red), and the opposite for the ``No gun'' data (blue). \zy{The study indicates that the shapelet-based features are highly interpretable when the classes can be distinguished by some localized ``shapes'', which serves as a nice property of our \texttt{FedST}.}

	\subsection{Study of Flexibility}\label{exp:flexibility}
	We further investigate the flexibility of \texttt{FedST} as discussed in Sec.~\ref{fedst_kernel}. We evaluate the accuracy and the protocol running time on each of the 97 UCR datasets with the time contract varying from $10\%$ to $90\%$ of the maximum running time (the time evaluated in Sec.~\ref{exp:efficiency} using IG). Fig.~\ref{fig:flexibility} reports the results. Overall, the accuracy increases with more time allowed, while the real running time is always close to the contract. It validates the effectiveness of balancing the accuracy and the efficiency using the user-defined time contract, which is \zyw{beneficial} for practical utility. 
	
	Note that with only 10\% running time (approximate 10\% candidates assessed at random), FedST can achieve at least 77\% of the maximum accuracy among the 97 datasets, \zyw{implying that} the high-quality shapelets are highly redundant. The results also confirm the feasibility of generating candidates from $P_0$ in the cross-silo setting, where each party has considerable (but insufficient) data.

	\begin{figure}
	\centering
	\subcaptionbox{\centering The best shapelets of each class. \label{subfig:GunPoint_shapelets}}
	{\includegraphics[width=.52\linewidth]{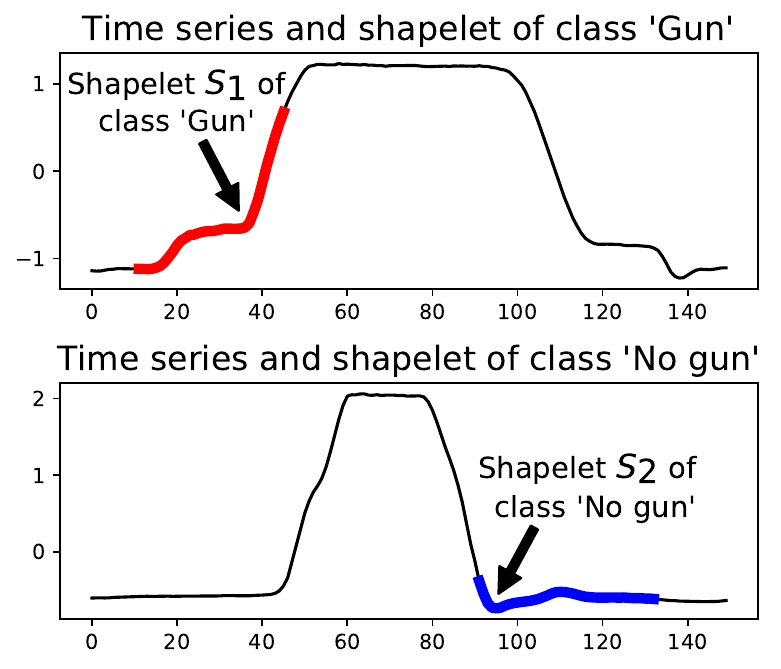}}
	\subcaptionbox{\centering The transformed data using the two shapelets. \label{subfig:GunPoint_ST}}
	{\includegraphics[width=.46\linewidth]{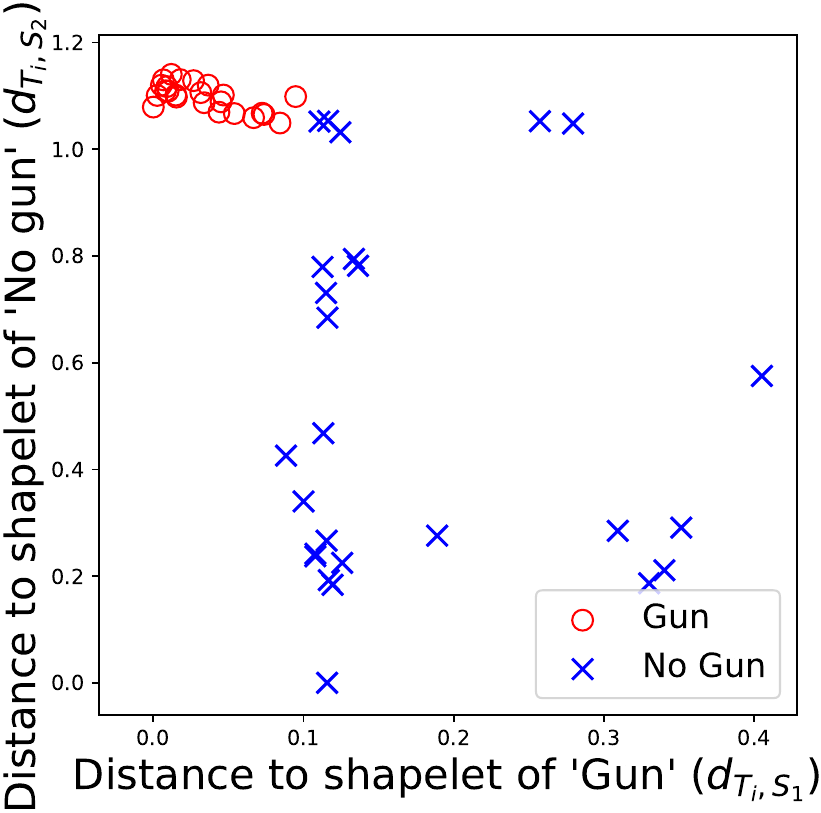}}
	\caption{Interpretability study using GunPoint~\cite{DBLP:journals/corr/abs-1810-07758}. }
	\label{fig:interpretation}
\end{figure}

\begin{figure}[]
		\centering
		{\includegraphics[width=.8\linewidth]{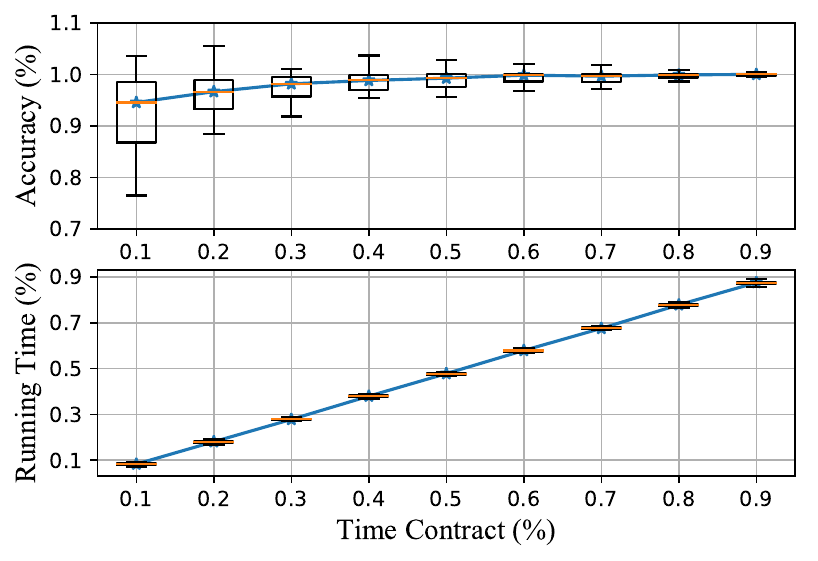}}
		\caption{The accuracy (top) and real running time (bottom) w.r.t. the user-defined time contract.}
		\label{fig:flexibility}
	\end{figure}

\subsection{\zy{Ablation Study of Shapelet Clustering.}}\label{exp:clustering}

\zy{Finally, we conduct an ablation study to assess the effectiveness of clustering the retrieved shapelets, which is the prior setting in \texttt{FedST} to simplify the interpretation. We compare it with two variants that use all retrieved shapelets (\texttt{FedST-Full}) or the top 5 shapelets of the highest quality (\texttt{FedST-Top}).} 

	 \begin{figure}[]
		\centering
		{\includegraphics[width=\linewidth]{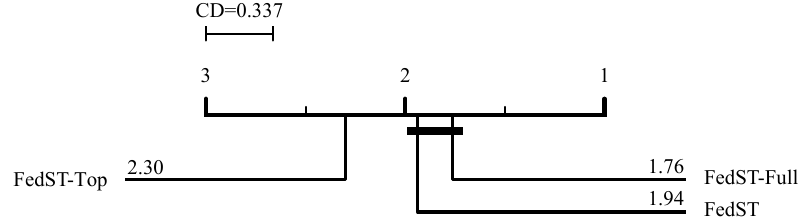}}
		\caption{\zy{Critical difference diagram for \texttt{FedST} against \texttt{FedST-Full} and \texttt{FedST-Top}. The statistical level is 0.05.}}
		\label{fig:cd_clustering}
	\end{figure}

\zy{As Fig.~\ref{fig:cd_clustering} shows, \texttt{FedST} with clustering achieves a mean accuracy ranking comparable to \texttt{FedST-Full}. \texttt{FedST-Top} is significantly worse than our \texttt{FedST}, because selecting too few shapelets based on quality scores can result in overfitting~\cite{hills2014classification}. That is why we choose to reduce the number of shapelets using clustering.}

\zy{Note that there is always a trade-off between accuracy and interpretability. Although it is effective to set the number of clusters to 5 in the assessed datasets, this hyper-parameter may be changed for other TSC problems to better balance the accuracy and the interpretability.}

\section{Further Enhancement by Incorporating Differential Privacy}\label{dp-protect}

As discussed in Sec.~\ref{fedst_framework}, FedST allows only the found shapelets and the learned models to be revealed to the initiator $P_0$. In this section, we illustrate that we can incorporate differential privacy (DP)~\cite{dwork2014algorithmic} for additional privacy protection, guaranteeing that the released shapelets and models disclose limited information about the private training data of the parties. The differential privacy is defined as follows.

\begin{definition}[Differential Privacy]
	Formally, a function $f$ satisfies $(\epsilon,\delta)$-DP, if for any two data sets $D$ and $D^\prime$ differing in a single record and any output $O$ of $f$, we have
	\begin{equation}
	\Pr[f(D)\in O] \le e^\epsilon \cdot \Pr[f(D^\prime)\in O] + \delta,
	\end{equation}
	where $\epsilon$ is the privacy budget controlling the tradeoff between the accuracy of $f$ and the degree of privacy protection that $f$ offers.
\end{definition}

 Intuitively, the function $f$ is differentially private since the probability of producing a given output (e.g., shapelets or models) is not highly dependent on whether a particular data record exists in $D$. As a result, the information about each private record cannot be inferred from the output with a high probability.

In FedST, we have two main stages: the federated shapelet search that produces the $K$ best shapelets, and the data transformation and classifier training step which builds the classification model and outputs the model parameters. Many existing work have studied DP algorithms to protect the parameters of the commonly used models~\cite{wu13privacy,abadi2016deep,chaudhuri2008privacy}, which can be seamlessly integrated into FedST. Therefore, we elaborate below on how to incorporate DP to the federated shapelet search.

As defined in Definition~\ref{definition:FedSS}, the federated shapelet search takes the parties' training time series and the shapelet candidates as input, and the output is the $K$ candidates with the highest quality ($Q_{IG}$ or $Q_F$ depending on the measure used). Note that the quality of each candidate is evaluated using all training time series. Therefore, we can \zyw{prevent} the private training samples from being disclosed by \textit{protecting the quality of each individual candidate}. To achieve this goal, we make the quality of the candidates \textit{noisy} before retrieving the $K$ best ones. 

Concisely, the parties jointly add secretly shared noises to the quality of all candidates using the secure random number generator~\cite{keller2020mp}. The noise for each candidate should be identically and independently distributed and follows a Laplace distribution whose parameter is public and related to $\epsilon$, which is referred to as the \textit{Laplace mechanism}~\cite{dwork2014algorithmic}. To this end, the parties retrieve the candidate with the \textit{maximum} quality by executing the secure comparison and assignment protocols (see Sec.~\ref{mpc}) and reveal the index to $P_0$. The two steps are repeated $K$ times to find the noisy $K$ best shapelets.

The above algorithm for finding the maximum is referred to as the \textit{Report Noisy Max} algorithm~\cite{dwork2014algorithmic}, which is $(\epsilon, 0)$-differentially private. Thus, according to Theorem 3 in~\cite{NEURIPS2019_b139e104}, the algorithm of retrieving the $K$ best shapelet candidates by calling the Report Noisy Max algorithm is $(\epsilon^\prime, \delta^\prime)$-DP for any $\delta^\prime \ge 0$ where
\begin{equation}
	\epsilon^\prime = \min \left\{ \epsilon K, \epsilon K(\frac{e^\epsilon-1}{e^\epsilon+1}) + \sqrt{2\epsilon^2 K\ln(\frac{1}{\delta^\prime})} \right\}.
\end{equation} 

In conclusion, the integration of DP provides an additional layer to protect the privacy of the revealed shapelets and models, which can further enhance the security of FedST.

	\section{\zyw{Conclusions and Future Work}}\label{sec:conclusion}
	
	\zyw{In this paper, we propose FedST, a novel FL framework customized for TSC based on the centralized shapelet transformation.} We design a security protocol $\Pi_{FedSS-B}$ for the FedST kernel, analyze its effectiveness, and identify its efficiency bottlenecks. To accelerate the protocol, we propose specific optimizations tailored for the FL setting. Both the theoretical analysis and the experimental results show the effectiveness of our proposed FedST framework and the acceleration techniques.
	
	In the future, we would like to consider other types of interpretable features to complement FedST. Further, we wish to develop \zyw{a} high-performance system to support industrial-scale applications.


\bibliographystyle{spmpsci}      
\bibliography{sample}   


\end{document}